\newcommand{\mymacro}[1]{{#1}}
\DeclarePairedDelimiter\ceil{\lceil}{\rceil}
\DeclareMathOperator*{\argmax}{{\mymacro{argmax}}}
\newcommand{\defn}[1]{\textbf{#1}}
\newcommand{\paroutline}[3][false]{%
    \ifnum\pdfstrcmp{#1}{true}=0
        #3
    \else
        [\textit{\textcolor{DiverseMagenta}{#2}}] \textcolor{AccentBlue}{#3}
    \fi
}
\newcommand{\ftype}{\mymacro{\mathtt{type}}\xspace}
\newcommand{\fval}{\mymacro{\mathtt{value}}\xspace}
\newcommand{\fdep}{\mymacro{\mathtt{dep}}\xspace}
\newcommand{\fprop}{\mymacro{\mathtt{prop}}\xspace}
\newcommand{\fargL}{\mymacro{\mathtt{arg}_\mathtt{L}}}
\newcommand{\fargR}{\mymacro{\mathtt{arg}_\mathtt{R}}}
\newcommand{\ftokenkind}{\mymacro{\mathtt{kind}}\xspace}
\newcommand{\fpos}{\mymacro{\mathtt{pos}}\xspace}
\newcommand{\fvalL}{\mymacro{\mathtt{value}_\mathtt{L}}}
\newcommand{\fvalR}{\mymacro{\mathtt{value}_\mathtt{R}}}
\newcommand{\fdepval}{\mymacro{\mathtt{dep.value}}\xspace}
\newcommand{\fdepdep}{\mymacro{\mathtt{dep.dep}}\xspace}
\newcommand{\fdepprop}{\mymacro{\mathtt{dep.prop}}\xspace}
\newcommand{\fpaddinglen}{\mymacro{\mathtt{len}}\xspace}
\newcommand{\fsymbol}{\mymacro{\mathtt{sym}}\xspace}
\newcommand{\fileft}{\mymacro{\mathtt{i}}\xspace}
\newcommand{\firight}{\mymacro{\mathtt{j}}\xspace}
\newcommand{\fNT}{\mymacro{\mathtt{NT}}\xspace}
\newcommand{\fshape}{\mymacro{\mathtt{shape}}\xspace}
\newcommand{\fkeyself}{\mymacro{\mathtt{key}}\xspace}
\newcommand{\fileftin}{\mymacro{\mathtt{p}}\xspace}
\newcommand{\firightin}{\mymacro{\mathtt{q}}\xspace}
\newcommand{\fNTin}{\mymacro{\mathtt{NT}_\mathtt{in}}}
\newcommand{\fkeychildone}{\mymacro{\mathtt{key}_\mathtt{1}}}
\newcommand{\fkeychildtwo}{\mymacro{\mathtt{key}_\mathtt{2}}}
\newcommand{\fv}{\mymacro{\mathtt{v}}\xspace}
\newcommand{\fvone}{\mymacro{\mathtt{v}_\mathtt{1}}}
\newcommand{\fvtwo}{\mymacro{\mathtt{v}_\mathtt{2}}}
\newcommand{\fvdec}{\mymacro{\mathtt{v}_\mathtt{dec}}}
\newcommand{\paddingcount}{\mymacro{P}}
\newcommand{\activateStep}{\mymacro{\textsf{activate}}\xspace}
\newcommand{\squareStep}{\mymacro{\textsf{square}}\xspace}
\newcommand{\pebbleStep}{\mymacro{\textsf{pebble}}\xspace}
\newcommand{\ahat}{\mymacro{{\textsc{AHAT}}}}
\newcommand{\mahat}{\mymacro{{\textsc{MAHAT}}}}
\newcommand{\uahat}{\mymacro{{\textsc{UAHAT}}}}
\newcommand{\ahats}{\mymacro{{{\textsc{AHAT}}}}s}
\newcommand{\reachable}{\mymacro{\mathtt{R}}}
\newcommand{\vbeta}{{\mymacro{\boldsymbol{\beta}}}}
\newcommand{\valpha}{{\mymacro{\boldsymbol{\alpha}}}}
\newcommand{\vgamma}{{\mymacro{\boldsymbol{\gamma}}}}
\newcommand{\ind}[1]{\mathbbm{1} \left\{ #1 \right\}}
\newcommand{\Q}{{\mymacro{ \mathbb{Q}}}}
\newcommand{\func}{{\mymacro{ f}}}
\newcommand{\funcc}{{\mymacro{ c}}}
\newcommand{\vfunc}{{\mymacro{ \boldsymbol{f}}}}
\newcommand{\norm}[1]{{\mymacro{ \left\lVert #1 \right\rVert}}}
\newcommand{\alphabet}{{\mymacro{ \Sigma}}}
\newcommand{\kleene}[1]{{\mymacro{#1^*}}}
\newcommand{\str}{{\mymacro{\boldsymbol{w}}}}
\newcommand{\strSym}{\mymacro{w}}
\newcommand{\strlen}{{\mymacro{T}}}
\newcommand{\syma}{{\mymacro{a}}}
\newcommand{\symb}{{\mymacro{b}}}
\newcommand{\defeq}{\mathrel{\stackrel{\textnormal{\tiny def}}{=}}}
\newcommand{\set}[1]{{\mymacro{\{ #1 \}}}}
\newcommand{\bigo}{{\mymacro{ \mathcal{O}}}}
\newcommand{\idxn}{{\mymacro{ N}}}
\newcommand{\idxd}{{\mymacro{ d}}}
\newcommand{\idxi}{{\mymacro{ i}}}
\newcommand{\idxj}{{\mymacro{ j}}}
\newcommand{\idxk}{{\mymacro{ k}}}
\newcommand{\idxl}{{\mymacro{\ell}}}
\newcommand{\idxm}{{\mymacro{ m}}}
\newcommand{\bos}{{\mymacro{\textsc{bos}}}}
\newcommand{\eos}{{\mymacro{\textsc{eos}}}}
\newcommand{\scoref}{{\mymacro{\texttt{score}}}}
\newcommand{\dyck}[1]{{\mymacro{ \mathrm{Dyck}\mleft(#1\mright)}}}
\newcommand{\onehot}[1]{{\mymacro{ \llbracket#1\rrbracket}}}
\newcommand{\graph}{\mymacro{ \mathcal{G}}}
\newcommand{\reverseFun}[1]{\mymacro{\overleftarrow{#1}}}
\newcommand{\hiddDim}{{\mymacro{ D}}}
\newcommand{\proj}{{\mymacro{\texttt{proj}}}}
\newcommand{\grammar}{{\mymacro{ \mathcal{G}}}}
\newcommand{\nonterm}{{\mymacro{ \mathcal{N}}}}
\newcommand{\rules}{{\mymacro{ \mathcal{P}}}}
\newcommand{\arule}{{\mymacro{ p}}}
\newcommand{\cfgstr}{{\mymacro{ \valpha}}} 
\newcommand{\cfgstrbeta}{{\mymacro{ \vbeta}}}
\newcommand{\cfgstrgamma}{{\mymacro{ \vgamma}}}
\newcommand{\itm}{\mymacro{{ \iota}}}
\newcommand{\start}{{\mymacro{ \mathrm{S}}}}
\newcommand{\items}{{\mymacro{\mathcal{I}}}}
\newcommand{\slasheditems}{{\mymacro{\mathcal{S}}}}
\newcommand{\itmsplit}{{\mymacro{\mathcal{D}_\textsc{split}}}}
\newcommand{\itmgap}{{\mymacro{\mathcal{D}_\textsc{gap}}}}
\newcommand{\itmdec}{{\mymacro{\mathcal{D}}}}
\newcommand{\decomp}{{\mymacro{\xi}}}
\newcommand{\cfgtuple}{{\mymacro{ \mleft(\nonterm, \alphabet, \rules, \start \mright)}}}
\newcommand{\production}[2]{{\mymacro{ #1 \rightarrow #2}}}
\newcommand{\cfgrel}{\mathrel{\mymacro{\Rightarrow}}}
\newcommand{\derives}{\mathrel{\mymacro{\overset{*}{\Rightarrow}}}}
\newcommand{\NT}[1]{{\mymacro{ \mathrm{#1}}}}
\newcommand{\NTA}{\NT{A}}
\newcommand{\NTB}{\NT{B}}
\newcommand{\NTC}{\NT{C}}
\newcommand{\NTX}{\NT{X}}
\newcommand{\NTY}{\NT{Y}}
\newcommand{\NTZ}{\NT{Z}}
\newcommand{\negterm}[1]{{\mymacro{ {\raise.17ex\hbox{$\scriptstyle\sim$}} #1}}}
\newcommand{\ifcondition}{\textbf{if }}
\newcommand{\otherwisecondition}{\textbf{otherwise }}
\newcommand{\ignore}[1]{}
\newcommand{\expandLater}[1]{}
\newcommand{\attnfunc}{\mymacro{\vfunc_\texttt{att}}}
\newcommand{\tf}{\mymacro{\relsize{-0.25}{\textsf{T}}}\xspace}
\newcommand{\scorefunc}{\mymacro{\scoref}}
\newcommand{\hardmax}{\mymacro{\mathrm{hardmax}}}
\newcommand{\hardmaxAvg}{\mymacro{\hardmax}}
\newcommand{\tflayer}{\mymacro{\mathrm{\mathcal{L}}}}
\newcommand{\layerIdx}{\mymacro{\ell}}
\newcommand{\nLayers}{\mymacro{L}}
\newcommand{\layerNorm}{\mymacro{\texttt{LN}}}
\newcommand{\embedFunc}{\mymacro{\texttt{emb}}}
\def\1{\mathbf{1}}
\def\eps{{\mymacro{ \varepsilon}}}
\def\rmH{{{\mymacro{ \mathbf{H}}}}}
\def\rmK{{{\mymacro{ \mathbf{K}}}}}
\def\rmQ{{{\mymacro{ \mathbf{Q}}}}}
\def\rmV{{{\mymacro{ \mathbf{V}}}}}
\def\vtheta{{{\mymacro{ \boldsymbol{\theta}}}}}
\def\vk{{{\mymacro{ \bm{k}}}}}
\def\vq{{{\mymacro{ \bm{q}}}}}
\def\vv{{{\mymacro{ \bm{v}}}}}
\def\vx{{{\mymacro{ \bm{x}}}}}
\def\vz{{{\mymacro{ \bm{z}}}}}
\def\evs{{{\mymacro{ s}}}}
\def\evx{{{\mymacro{ x}}}}
\def\evy{{{\mymacro{ y}}}}
\def\evz{{{\mymacro{ z}}}}
\def\mU{{{\mymacro{ \bm{U}}}}}
\def\mW{{{\mymacro{ \bm{W}}}}}
\def\sA{{{\mymacro{ \mathcal{A}}}}}
\def\sI{{{\mymacro{ \mathcal{I}}}}}
\def\sR{{{\mymacro{ \mathcal{R}}}}}
\newcommand{\ReLU}{{\mymacro{ \mathrm{ReLU}}}}
\newcommand{\sigmoid}{{\mymacro{ \sigma}}}
\newcommand{\bb}[1][]{\ifthenelse{\isempty{#1}}{\mymacro{\mathbf{b}}}{\mymacro{\mathbf{b}^{\text{#1}}}}}
\newcommand{\ff}[1][]{\ifthenelse{\isempty{#1}}{\mymacro{f}}{\mymacro{f_{\text{#1}}}}}
\newcommand{\W}[1][]{\ifthenelse{\isempty{#1}}{\mymacro{\mathbf{W}}}{\mymacro{\mathbf{W}^{\text{#1}}}}}
\newcommand{\recAlgo}{\mymacro{\textsc{r}}}
\newcommand{\stacktop}[1][]{\ifthenelse{\isempty{#1}}{\mymacro{\gamma^{\text{top}}}}{\mymacro{\gamma^{\text{top}}_{#1}}}}
\newcommand{\sym}{\mymacro{w}}
\newcommand{\lang}{\mymacro{L}}
\newcommand{\true}{\mymacro{\mathbin{\mathrm{T}}}}
\newcommand{\TRUE}{\mymacro{\mathbin{\mathrm{T}}}}
\newcommand{\false}{\mymacro{\mathbin{\mathrm{F}}}}
\newcommand{\FALSE}{\mymacro{\mathbin{\mathrm{F}}}}
\NewDocumentCommand{\transformer}{O{} o o o }{
  \IfBlankTF{#1}{
    \IfNoValueTF{#2}{
      \IfNoValueTF{#4}{\rmH}{\rmH(#4)}
    }{
      \IfNoValueTF{#4}{\mymacro{\rmH_{#2,#3}}}{\mymacro{\rmH(#4)_{#2,#3}}}
    }
  }{
    \IfNoValueTF{#2}{
      \IfNoValueTF{#4}{\mymacro{\rmH^{(#1)}}}{\mymacro{\rmH^{(#1)}(#4)}}
    }{
      \IfNoValueTF{#4}{\mymacro{\rmH^{(#1)}_{#2,#3}}}{\mymacro{\rmH^{(#1)}(#4)_{#2,#3}}}
    }
  }
}
\NewDocumentCommand{\attention}{o}{\IfNoValueTF{#1}{\mymacro{\mathbf{A}}}{\mymacro{\mathbf{A}^{(#1)}}}}
\NewDocumentCommand{\ffn}{o}{\IfNoValueTF{#1}{\mymacro{\mathbf{F}}}{\mymacro{\mathbf{F}^{(#1)}}}}
\NewDocumentCommand{\fo}{o}{\IfNoValueTF{#1}{\mymacro{\textbf{FO}[\mathord<]}}{\mymacro{\textbf{FO}^{#1}[\mathord<]}}}
\NewDocumentCommand{\pfo}{o}{\IfNoValueTF{#1}{\mymacro{\textbf{PFO}^2[\mathord<]}}{\mymacro{\textbf{PFO}^2[\mathord<,#1]}}}
\NewDocumentCommand{\ffo}{o}{\IfNoValueTF{#1}{\mymacro{\textbf{FFO}^2[\mathord<]}}{\mymacro{\textbf{FFO}^2[\mathord<,#1]}}}
\newcommand{\atom}{\mymacro{\pi}}
\NewDocumentCommand{\query}{o o }{\IfNoValueTF{#1}{\rmQ}{\rmQ_{#1,#2}}}
\NewDocumentCommand{\key}{o o }{\IfNoValueTF{#1}{\rmK}{\rmK_{#1,#2}}}
\NewDocumentCommand{\val}{o o }{\IfNoValueTF{#1}{\rmV}{\rmV_{#1,#2}}}
\newcommand{\mask}{\mymacro{m}}
\newcommand{\maskFun}[1]{\mymacro{\mask}\mleft(#1\mright)}
\newcommand{\ulcfl}{\mymacro{\textsc{ULCFL}}}
\newcommand{\ucfl}{\mymacro{\textsc{UCFL}}}
\newcommand{\cfl}{\mymacro{\textsc{CFL}}}
\newcommand{\bfvp}{\mymacro{\textsc{BFVP}}}
\newcommand{\tc}[1]{\mymacro{\textsc{TC}\textsuperscript{#1}}}
\newcommand{\nc}[1]{\mymacro{\textsc{NC}\textsuperscript{#1}}}
\newcommand{\vertices}{\mymacro{\mathcal{V}}}
\newcommand{\edges}{\mymacro{\mathcal{E}}}
\newcommand{\aset}{\mymacro{\sA}}
\newcommand{\asetsize}{\mymacro{\mleft|\aset\mright|}}
\newcommand{\asetelem}{\mymacro{s}}
\newcommand{\depgraph}{\mymacro{\textsc{DG}}}
\newcommand{\gtransform}{{\mymacro{ \mathcal{T}}}}
\newcommand{\craspprog}{\mymacro{\textsc{P}}}
\newcommand{\paddingtoken}{\scalebox{0.7}{$\square$}}
\newcommand{\inductionmeasure}{\mymacro{d}}
\icmltitlerunning{Context-free Recognition with Transformers}
\begin{document}

\etocdepthtag.toc{mainmatter}

\twocolumn[
\icmltitle{Context-free Recognition with Transformers}



\icmlsetsymbol{equal}{*}

\begin{icmlauthorlist}
    \icmlauthor{Selim Jerad}{ethz}
    \icmlauthor{Anej Svete}{ethz}
    \icmlauthor{Sophie Hao}{bu}
    \icmlauthor{Ryan Cotterell}{ethz}
    \icmlauthor{William Merrill}{nyu}
\end{icmlauthorlist}

\icmlaffiliation{ethz}{ETH Zürich}
\icmlaffiliation{bu}{Boston University}
\icmlaffiliation{nyu}{Allen Institute for AI}

\icmlcorrespondingauthor{Selim Jerad}{selimjerad@gmail.com}
\icmlcorrespondingauthor{William Merrill}{willm@allenai.org}

\vskip 0.3in
]



\printAffiliationsAndNotice{}  
\begin{abstract}
    Transformers excel empirically on tasks that process well-formed inputs according to some grammar, such as natural language and code. However, it remains unclear how they can process grammatical syntax. In fact, under standard complexity conjectures, standard transformers cannot recognize context-free languages ($\cfl$s), a canonical formalism to describe syntax, or even regular languages, a subclass of $\cfl$s. Past work has shown that $\bigo(\log(\idxn))$ \textit{looping layers} (w.r.t. input length $\idxn$) allow transformers to recognize regular languages, but the question of context-free recognition with looped transformers remained open. In this work, we show that looped transformers with $\bigo(\log(\idxn))$ looping layers and $\bigo(\idxn^6)$ padding symbols can recognize all $\cfl$s. However, training and inference with $\bigo(\idxn^6)$ padding symbols is potentially impractical. Fortunately, we show that, for natural subclasses such as unambiguous $\cfl$s, the recognition problem on transformers becomes more tractable, requiring $\bigo(\idxn^3)$ padding.
    Empirically, looped and padded transformers perform better than fixed-depth transformers in recognizing $\cfl$s.
    Overall, our results shed light on the intricacy of $\cfl$ recognition by transformers: while general recognition may require an intractable amount of padding, natural constraints such as unambiguity yield efficient recognition algorithms.
\end{abstract}
\section{Introduction}
Transformers are proficient at many natural language \citep{qin2024largelanguagemodelsmeet} and coding \citep{jiang2024surveylargelanguagemodels} tasks, both of which involve processing hierarchical structures.
In formal language theory, this hierarchical structure is captured by context-free languages ($\cfl$s), a canonical formal model for the kind of syntax that natural language and code rely on.
Empirical work supports this view: analysis of internal representations---syntactic probing---has shown that transformers learn to encode syntactic features relevant for \emph{parsing}, the task of extracting the syntactic structure of a sentence \citep{hewitt-manning-2019-structural, arps2022probingconstituencystructureneural, zhao-etal-2023-transformers}. 
However, the literature lacks a precise theory of how transformers can implement $\cfl$ recognition.
Our paper fills this gap. 

More precisely, the problem we study is the following: Given a CFG $\grammar$, can a string $\str$ be generated by $\grammar$? 
Several foundational \emph{serial} parsing algorithms \citep{earley-parsing, cocke, kasami1965AnER, younger} solve this problem efficiently---in $\bigo(|\str|^3)$ time.
However, \emph{parallel} parsing algorithms can run even faster given sufficient processors ~\citep{ruzzo-treesize, rytter-parallelunambiguous, lange-parallel}.
Indeed, one might suspect \textit{prima facie} that transformers' highly parallel, fixed-depth structure \citep{merrill-sabharwal-2023-parallelism} enables them to efficiently parse strings.
We therefore focus on implementing parallel parsing algorithms in transformers.
To that end, we first note that even regular languages, a strict subset of $\cfl$s, cannot be recognized by fixed-depth transformers under the standard complexity conjecture $\tc{0} \subsetneq \nc{1}$.\footnote{Regular language recognition is complete for $\nc{1}$ \citep{barrington_monoids} while fixed-depth transformers fall in $\tc{0}$ \citep{merrill-etal-2022-saturated, chiang2025transformersuniformtc0}.} 
\emph{Looping} layers help: $\log(\idxn)$ looping layers (where $\idxn$ is the input length) allow transformers to recognize regular languages \citep{merrill-2025littledepthgoeslong}.
Thus, our analysis will require looping to some extent.
However, the question of whether logarithmic looping is sufficient for $\cfl$ recognition is still open.
In this work, we answer it in the affirmative, showing that looping---along with additional \emph{padding} symbols \citep{merrill2025exactexpressivepowertransformers}---is sufficient to implement certain parallel CFG parsing algorithms.\looseness=-1

Our first result shows via a direct construction that general $\cfl$ recognition can be expressed by looping layers $\bigo(\log(\idxn))$ times and with $\bigo(\idxn^6)$ padding symbols.
We then ask whether simpler classes of $\cfl$s can be recognized by transformers with fewer resources.
Again, we find that the answer is affirmative.
In particular, we identify \emph{unambiguity} and \emph{linearity} as two natural properties that make $\cfl$ recognition easier. 
Unambiguous $\cfl$s, characterized by strings having at most one possible parse, allow for recognition with reduced padding but more looping. 
This aligns with transformers' struggles to parse ambiguous grammars in practice \citep{khalighinejad-etal-2023-approximating}. 
Furthermore, imposing linearity (where each grammar rule has at most one non-terminal on its right-hand side) reduces the amount of looping and padding required for recognizing unambiguous $\cfl$s. 
Our main results are summarized in \cref{tab:mainresults}.

We also empirically verify that looped and padded transformers can better process $\cfl$s than fixed-depth ones. 
We evaluate fixed-depth and dynamically scaling transformers as recognizers of formal languages \citep{butoi2025trainingneuralnetworksrecognizers}, and find that the latter often outperforms the former on $\cfl$s of various complexity (\cref{fig:resultsexp}); the largest gains (from 4 to 8 points of accuracy) occur on harder $\cfl$s such as $\dyck{2}$, Palindrome and $\bfvp$.

\begin{table*}[t]
    \footnotesize
    \centering
    \renewcommand{\arraystretch}{1.15} 
    \begin{tabular}{
        >{\centering\arraybackslash}p{4.0cm} 
        >{\centering\arraybackslash}p{3.5cm} 
        >{\centering\arraybackslash}p{3.5cm} 
        >{\centering\arraybackslash}p{2.5cm} 
        }
        \toprule 
        \textbf{Language class} & \textbf{Padding symbols required} & \textbf{Looping layers required} & \textbf{Reference} \\
        \midrule
        General $\cfl$s & $\bigo(\idxn^6)$ & $\bigo(\log(\idxn))$ & \cref{thm:cfl2ahat}\\
        Unambiguous $\cfl$s & $\bigo(\idxn^3)$ & $\bigo(\log^2(\idxn))$ & \cref{thm:ucfl2ahat} \\
        Unambiguous linear $\cfl$s & $\bigo(\idxn^2)$ & $\bigo(\log(\idxn))$ & \cref{thm:ulcfl2ahat} \\
        \bottomrule
    \end{tabular}
    \caption{The computational resources required by transformers to recognize different classes of context-free languages ($\cfl$s).}
    \label{tab:mainresults}
\end{table*}

\section{Preliminaries}
An \defn{alphabet} $\alphabet$ is a finite, non-empty set of \defn{symbols}. 
A \defn{string} $\str = \strSym_1 \cdots \strSym_N$ with $\strSym_n \in \alphabet$ is a finite sequence of symbols.
We write $|\str| = |\strSym_1 \cdots \strSym_N| = N$ for the length of $\str$.
For integers $\idxi, \idxj$ with $\idxi \leq \idxj$, we use the standard interval notation $[\idxi, \idxj] \defeq \set{\idxi, \idxi+1, \ldots, \idxj}$, $(\idxi, \idxj] \defeq \set{\idxi+1, \ldots, \idxj}$, $[\idxi, \idxj) \defeq \set{\idxi, \ldots, \idxj-1}$, and $(\idxi, \idxj) \defeq \set{\idxi+1, \ldots, \idxj-1}$.
For any interval $I \subseteq \set{1, \ldots, N}$, we write $\str_I$ for the substring of $\str$ at positions in $I$. Explicitly, the four shapes expand to
$\str_{[\idxi, \idxj]} \defeq \strSym_\idxi \strSym_{\idxi+1} \cdots \strSym_\idxj$,
$\str_{(\idxi, \idxj]} \defeq \strSym_{\idxi+1} \strSym_{\idxi+2} \cdots \strSym_\idxj$,
$\str_{[\idxi, \idxj)} \defeq \strSym_\idxi \strSym_{\idxi+1} \cdots \strSym_{\idxj-1}$, and
$\str_{(\idxi, \idxj)} \defeq \strSym_{\idxi+1} \strSym_{\idxi+2} \cdots \strSym_{\idxj-1}$.
Half-open intervals are convenient as we get $\str_{(\idxi, \idxj]}\,\str_{(\idxj, \idxk]} = \str_{(\idxi, \idxk]}$.
The empty string, denoted $\varepsilon$, is the unique string of length 0.
The set of all strings over an alphabet $\alphabet$---including the empty string---is given by its \defn{closure}, denoted $\kleene{\alphabet}$.
A \defn{formal language} $\lang$ over $\alphabet$ is a subset of $\kleene{\alphabet}$.
A \defn{language recognizer} is a function $\func$ with the signature $\kleene{\alphabet} \to \set{0,1}$. 
$\func$ \defn{accepts} a string $\str$ iff $\func(\str) = 1$.
$\func$ \defn{recognizes} a language $\lang \subseteq \kleene{\alphabet}$ when $\func(\str) = 1$ iff $\str \in \lang$.

\subsection{Context-free Grammars}

\begin{definition}\label{def:cfg}
    A \defn{context-free grammar} (CFG) $\grammar$ is a tuple $\cfgtuple$ where:
    \begin{itemize}[noitemsep,topsep=0pt,leftmargin=15pt,parsep=0pt,partopsep=0pt]
        \item $\nonterm$ is a finite, nonempty set of \defn{nonterminals};
        \item $\alphabet$ is a finite alphabet of \defn{terminals}, disjoint from $\nonterm$;
        \item $\rules \subseteq \nonterm \times \kleene{(\nonterm \cup \alphabet)}$ is a finite set of \defn{productions} of the form $\production{\NTA}{\cfgstr}$, where $\NTA \in \nonterm$ and $\cfgstr \in \kleene{(\nonterm \cup \alphabet)}$;
        \item $\start \in \nonterm$ is a distinguished \defn{start symbol}.
    \end{itemize}
    We denote terminal and nonterminal symbols by lowercase and uppercase symbols, respectively.
\end{definition}
A string of non-terminals and terminals $\cfgstr \in \kleene{(\nonterm \cup \alphabet)}$ is a \defn{sentential form}.
A CFG generates strings by repeatedly applying rules to sentential forms derived from the start symbol until it produces a sequence of terminal symbols, i.e., a \defn{string}. 
For sentential forms $\cfgstr, \cfgstrbeta \in \kleene{(\nonterm \cup \alphabet)}$, we write $\cfgstr \cfgrel \cfgstrbeta$ if there exist sentential forms $\cfgstr_1, \cfgstr_2$, a non-terminal $\NTA \in \nonterm$, and a rule $\production{\NTA}{\cfgstrgamma} \in \rules$ such that $\cfgstr = \cfgstr_1 \, \NTA \, \cfgstr_2$ and $\cfgstrbeta = \cfgstr_1 \, \cfgstrgamma \, \cfgstr_2$. The derivation relation $\derives$ is the reflexive, transitive closure of $\cfgrel$.
We say a non-terminal $\NT{A}$ \defn{derives} a string $\str \in \kleene{\alphabet}$ if $\NT{A} \derives \str$, i.e. $\str$ is the derivation's \defn{yield}.
The \defn{language of a CFG} $\grammar$ is the set $\lang(\grammar) \defeq \{\str \in \kleene{\alphabet} \mid \start \derives \str\}$.
A formal language is \defn{context-free} if it is the language of some $\grammar$.

It is common practice to consider CFGs in a normal form.
A CFG $\grammar$ is in \defn{Chomsky normal form (CNF)} if any $\arule \in \rules$ is either of the form $\production{\NTA}{\NTB \NTC}$, $\production{\NTA}{\syma}$ or $\production{\start}{\eps}$.
Every $\cfl$ can be described by a CFG in CNF.

\subsection{Transformers}\label{subsec:transformersprelim}
We consider \citeposs{merrill2025exactexpressivepowertransformers} idealization of the transformer architecture~\citep{merrill-2025littledepthgoeslong}.
\begin{restatable}[Averaging hard-attention transformer]{definition}{ahatdef}\label{def:ahat}
    An \defn{averaging hard-attention transformer} ($\ahat$) $\tf$ of width $\hiddDim$ and depth $\nLayers$ is a tuple $\tf = (\embedFunc, (\tflayer^{(\layerIdx)})_{\layerIdx=1}^{\nLayers}, \funcc)$ where:
    \begin{itemize}[noitemsep,topsep=0pt,leftmargin=15pt,parsep=0pt,partopsep=0pt]
        \item $\embedFunc \colon \kleene{\alphabet} \to \kleene{(\Q^\hiddDim)}$ is a position-wise \defn{input embedding} that maps each input symbol to a vector in $\Q^\hiddDim$;
        \item each \defn{layer} $\tflayer^{(\layerIdx)} \colon \kleene{(\Q^\hiddDim)} \to \kleene{(\Q^\hiddDim)}$ is a \emph{length-preserving} map that composes a layer-normalization, an attention sublayer whose attention weights are computed by the averaging hard-attention projection $\hardmaxAvg$, and a position-wise feedforward network;
        \item $\funcc \colon \Q^\hiddDim \to \set{0,1}$ is a \defn{classification function}.
    \end{itemize}
    The transformer first computes the length-preserving mapping $\tflayer^{(\nLayers)} \circ \cdots \circ \tflayer^{(1)} \circ \embedFunc(\str)$. 
    Let $\vx_{\eos}^{\nLayers} \in \Q^\hiddDim$ be the final contextual representation of $\eos$ after the $\nLayers$ transformer layers.
    We say the transformer accepts $\str$ iff $\tf(\str)\defeq\funcc(\vx_{\eos}^{\nLayers})=1$.
\end{restatable}
We refer to \cref{app:extendedbackground} for the formal definitions of each component of the transformer.

Average hard-attention returns a uniform average of the values of symbols that maximize the attention score.\footnote{In practice, the training dynamics of standard transformers may push some attention heads toward approximating average-hard attention~\citep{merrill-etal-2021-effects}, though this behavior is not universal.}
The transformers use \emph{multi}-pre-norm, where the layer normalization is applied before the residual connection on either the entire hidden state or on distinct subsets thereof \citep{merrill2024expressivepowertransformerschain}.
We assume logarithmic-precision arithmetic, where computations are performed with $\bigo(\log(\idxn))$ bits for an input of size $\idxn$. 
Coupling $\ahat$s and log-precision unlocks useful gadgets such as storing string indices, counting symbol occurrences across the string and performing equality checks across positions \citep{merrill2024expressivepowertransformerschain}.
We assume that the input string is augmented with both the \underline{b}eginning-\underline{o}f-\underline{s}equence ($\bos$) and \underline{e}nd-\underline{o}f-\underline{s}equence ($\eos$) symbols.

As defined in \cref{def:ahat}, transformers have a \emph{fixed} size, while general $\cfl$ recognition requires \emph{dynamically-scaling} resources on classical models of parallel computation \citep{venkateswaran-logcfl}.
We therefore also consider an extension of the transformer architecture where its size can \emph{dynamically} increase.
Concretely, we can first dynamically scale the number of layers\footnote{To guarantee the transformer width is constant while the number of layers grows with input length, we recall transformer layers can reset intermediate values in looping layers \citep{merrill-2025littledepthgoeslong}.}  \citep{merrill-2025littledepthgoeslong}.
\begin{definition}\label{def:looping}
    A $d(\idxn)$-\defn{looped} transformer $\tf$ is a tuple $\tf = (\embedFunc, (\tflayer^{(\layerIdx)})_{\layerIdx=1}^{\nLayers}, \funcc, \layerIdx_1, \layerIdx_2)$ where:
    \begin{itemize}[noitemsep,topsep=0pt,leftmargin=15pt,parsep=0pt,partopsep=0pt]
        \item $(\embedFunc,(\tflayer^{(\layerIdx)})_{\layerIdx=1}^{\nLayers}, \funcc)$ is an $\ahat$ as defined in \cref{def:ahat};
        \item $\layerIdx_1, \layerIdx_2 \in (1,\nLayers]$ define a \emph{partition} of the layers into $A \defeq (\tflayer^{(\layerIdx)})_{\layerIdx=1}^{\layerIdx_1-1}$, $B \defeq (\tflayer^{(\layerIdx)})_{\layerIdx=\layerIdx_1}^{\layerIdx_2-1}$ and $C \defeq (\tflayer^{(\layerIdx)})_{\layerIdx=\layerIdx_2}^{\nLayers}$;
        \item Upon a forward pass, $B$ is repeated $\bigo(d(\idxn))$ times for an input string of length $\idxn$.
    \end{itemize}
    The transformer now computes the length-preserving mapping $C \circ (B)^{\bigo(d(\idxn))} \circ A \circ \embedFunc(\str)$, and accepts $\str$ via $\funcc$. 
\end{definition}

The amount of computation performed by self attention is definitionally quadratic in the string length. 
One can dynamically increase this by adding \emph{padding space}  \citep{merrill2025exactexpressivepowertransformers}.
\begin{definition}\label{def:padding}
    A (looped) transformer is \defn{$w(\idxn)$-padded} if $\bigo(w(\idxn))$ padding symbols are appended to the end of the input string $\str$, i.e. the input string is $\str \ \underbrace{\paddingtoken \cdots \paddingtoken}_{\bigo(w(\idxn))}$, where $\paddingtoken$ is a distinguished padding symbol.
\end{definition}

Scaling number of layers and padding symbols in transformers is analogous to scaling time and space Boolean circuits \citep{merrill2025exactexpressivepowertransformers}, a classical parallel model of computation. 
Allowing for different looping and padding budgets results in different classes of transformers.
We adopt naming conventions of these models from \citet{merrill2025exactexpressivepowertransformers}.
We denote by $\ahat^d_k$ the class of languages recognized by $\log^d(\idxn)$-looped, $\idxn^k$-padded averaging hard-attention transformers with strict causal masking.
We similarly denote by $\uahat$ average hard-attention transformers with no masking and $\mahat$ transformers that use both masked and unmasked attention heads.
Conveniently, $\ahat$s can simulate $\mahat$s:
\begin{lemma}[\text{\citealp[][Prop. 1]{merrill2025exactexpressivepowertransformers}}]\label{unmask2mask}
    For $\idxd \geq 1$, $\uahat^\idxd_\idxk \subseteq \mahat^\idxd_\idxk \subseteq \ahat^\idxd_{1 + \max(\idxk, 1)}$.
\end{lemma}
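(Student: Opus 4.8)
The first containment is immediate: a $\uahat$ is the special case of a $\mahat$ that uses only unmasked attention heads, so $\uahat^\idxd_\idxk \subseteq \mahat^\idxd_\idxk$ with no change to the looping or padding budget. The substance is in $\mahat^\idxd_\idxk \subseteq \ahat^\idxd_{1+\max(\idxk,1)}$, i.e.\ that unmasked heads can be eliminated in favor of strict causal masking at the price of a polynomial increase in padding. The plan is to make the strict-causally-masked transformer lay out the \emph{entire execution trace} of the $\mahat$ in its padding region and fill it in causally, position by position and iteration by iteration, so that every unmasked read of the $\mahat$ becomes a read of cells that are strictly earlier in the (much longer) input.

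Concretely, say the $\mahat$ has a looped block of constant depth $\ell$ (counting each attention and FFN sublayer separately), repeated $T(\idxn) = \bigo(\log^\idxd(\idxn))$ times, acting on an input that together with the $\mahat$'s own $\bigo(\idxn^\idxk)$ padding spans $m = \bigo(\idxn^{\max(\idxk,1)})$ positions. I would reserve $\bigo(m\cdot\ell\cdot T(\idxn))$ padding tokens in the simulating transformer and regard them as a grid of \emph{cells} indexed by a triple (iteration $t$, sublayer $j$, logical position $i$), ordered so that ``later in the $\mahat$'s computation'' coincides with ``later in the sequence''; cell $(t,j,i)$ is meant to store the residual-stream value of $\mahat$ position $i$ after sublayer $j$ of iteration $t$. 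Using log precision and the positional gadgets recalled in \cref{subsec:transformersprelim}, each cell recovers its $(t,j,i)$ from its absolute index; the $i\le\idxn$ cells of iteration $0$ fetch their input symbol by a strictly masked lookup, and the remaining iteration-$0$ cells take the blank padding embedding. The key observation is that everything a cell $(t,j,i)$ needs lies strictly earlier: its carried residual is cell $(t,j-1,i)$ (or $(t-1,\ell,i)$ when $j=1$); a masked head of $\mahat$ sublayer $j$ reads cells $(t,j-1,i')$ with $i'<i$; and an \emph{unmasked} head of $\mahat$ sublayer $j$ reads cells $(t,j-1,i')$ for \emph{all} $i'$. Hence one strict-causally-masked layer per sublayer $j$ --- with lookup heads for the residual, heads whose attention scores are shaped by positional features so as to restrict attention to exactly the intended cell set, and an FFN applying $\mahat$ sublayer $j$'s update --- advances the grid by one sublayer, so running the simulating transformer's $\ell$-layer looped block $\bigo(T(\idxn))$ times fills the entire grid. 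Since $\eos$ sits after all padding, its final representation copies the grid cell holding the final value of the $\mahat$'s $\eos$ position, and the original linear classifier is reused unchanged.

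For the resources: the grid has $\bigo(m\,\ell\,T(\idxn)) = \bigo(\idxn^{\max(\idxk,1)}\log^\idxd(\idxn))$ cells, and since $\log^\idxd(\idxn) = \bigo(\idxn)$ for every fixed $\idxd$, this is $\bigo(\idxn^{1+\max(\idxk,1)})$ padding tokens while the looping budget stays at $\bigo(\log^\idxd(\idxn))$, placing the construction in $\ahat^\idxd_{1+\max(\idxk,1)}$ (the $\max(\idxk,1)$ in place of $\idxk$ is exactly what absorbs the $\idxn+\idxn^\idxk$ positions of the $\mahat$ when $\idxk=0$). The main obstacle I anticipate is the bookkeeping of this compilation --- decoding $(t,j,i)$ from absolute positions, shaping the per-head masks, threading residuals through the multi-pre-norm structure, and selecting the correct sublayer update inside the FFN --- so that one pass of the looped block reproduces one $\mahat$ sublayer \emph{exactly}, including that average-hard attention over a restricted cell set equals average-hard attention over the corresponding $\mahat$ positions. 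None of this needs more than log precision, constant width overhead, and the gadgets already available to $\ahat$s, so I would relegate the detailed verification to the appendix, following \citet{merrill2025exactexpressivepowertransformers}.
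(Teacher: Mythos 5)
This lemma is imported by citation --- the paper gives no proof of it, deferring entirely to Proposition~1 of \citet{merrill2025exactexpressivepowertransformers} --- so there is no in-paper argument to compare yours against. On its own terms, your proposal is sound and matches the standard technique for this kind of result. The first inclusion is indeed immediate. For the second, unrolling the $\mahat$'s execution trace into a causally ordered grid of cells indexed by (iteration, sublayer, position) is the right move: every dependency of cell $(t,j,i)$ --- its carried residual at $(t,j-1,i)$, the targets of masked heads, and crucially the targets of unmasked heads --- lies at a strictly earlier physical position, so strict causal masking never blocks a read. Your resource count is also correct: $\bigo(\idxn^{\max(\idxk,1)}\cdot\log^\idxd(\idxn)) \subseteq \bigo(\idxn^{1+\max(\idxk,1)})$, which is exactly where the extra factor of $\idxn$ in the exponent comes from. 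The points you defer as bookkeeping are where the real work sits, and you should at least name the mechanisms: (i) cells must be gated so that $(t,j,i)$ writes only during pass $t$ (otherwise residual accumulation across passes corrupts the stored value), which requires maintaining a loop counter in the residual stream and comparing it against the cell's decoded $t$; (ii) restricting averaging hard attention to the intended row $\set{(t,j-1,i')}$ requires an additive in-set bonus that provably dominates the original scores so that the global argmax set coincides with the row-restricted argmax set --- this works because layer-normalized keys and queries give bounded scores, but it must be said; and (iii) the $\mahat$'s constant-depth initial and final blocks also need rows in the grid, which your bound absorbs but your text omits. None of these is a gap in the approach, only in the write-up.
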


\section{Parallel Recognition of $\cfl$s}\label{sec:generalcfl}

We now present an algorithm for parallel $\cfl$ recognition, which synthesizes ideas from previous work on algorithms for parallel $\cfl$ recognition \citep{ruzzo-treesize, rossmanith-observationsparallelrec, lange-unambiguous}. 
We show in \cref{subsec:cfl-ahat-impl} how to implement this algorithm on $\ahat$s.\looseness=-1

\subsection{Items and Parse Trees}\label{subsec:itemsandtrees}
Many parsing algorithms---including those we consider here---manipulate \defn{items}, defined as follows.
Given a grammar $\grammar = \cfgtuple$ and a string $\str \in \kleene{\alphabet}$ of length $\idxn$, an \defn{item} is a tuple of one of four shapes:
\begin{equation*}
    (\idxi, \NTA, \idxj], \quad [\idxi, \NTA, \idxj], \quad [\idxi, \NTA, \idxj), \quad \text{or} \quad (\idxi, \NTA, \idxj),
\end{equation*}
with $\NTA \in \nonterm$ and indices $\idxi, \idxj \in \set{0, 1, \ldots, \idxn}$.
An item is \defn{realizable} if its non-terminal derives the substring determined by the bracket shape; e.g., $(\idxi, \NTA, \idxj] = \top \iff \NTA \derives \str_{(\idxi, \idxj]}$, and analogously for $[\idxi, \NTA, \idxj]$, $[\idxi, \NTA, \idxj)$, $(\idxi, \NTA, \idxj)$.
We write $\items$ for the set of all items associated with string $\str$ and a grammar $\grammar$, i.e., all tuples of the four shapes above with non-terminals from $\nonterm$ and indices in $\set{0, 1, \ldots, \idxn}$.
We suppress dependence on $\str$ and $\grammar$ for notational simplicity.

Realizable items can be combined according to the rules of $\grammar$ to produce new items. 
In particular, if $\production{\NTA}{\NTB \NTC} \in \rules$, then for any indices $\idxi < \idxk < \idxj$,
\begin{equation*}
    \dfrac{(\idxi, \NTB, \idxk] \quad (\idxk, \NTC, \idxj]}{(\idxi, \NTA, \idxj]}\,,
\end{equation*}
i.e., \emph{adjacent} realizable items combine via the rule into a realizable item over their union. 
Conversely, a realizable $(\idxi, \NTA, \idxj]$ guarantees that \emph{some} rule $\production{\NTA}{\NTB \NTC}$ and split index $\idxk$ witness it, but the split is not known a priori---establishing realizability requires considering \emph{all} candidate splits.

\begin{figure}
    \begin{algorithm}[H]
    \algrenewcommand\algorithmicfunction{\textbf{def}}\footnotesize
        \algrenewcommand\algorithmicindent{1.0em}
        \caption{Evaluating the realizability of an item}
        \label{alg1}
        \begin{algorithmic}[1]
            \Function{\recAlgo}{$(\idxi, \NT{A}, \idxj]$}
            \If{$\idxj = \idxi + 1$}
            \State \Return $\production{\NT{A}}{\sym_\idxj} \in \rules$
            \Else
            \State \textbf{guess} $a \in \{\textsc{split}, \textsc{gap}\}$
            \If{$a = \textsc{split}$}
            \State \textbf{guess} $\production{\NT{A}}{\NT{B}\NT{C}} \in \rules$
            \State \textbf{guess} $\idxk \in (\idxi, \idxj)$
            \State \Return $\recAlgo\bigl((\idxi, \NT{B}, \idxk]\bigr) \wedge \recAlgo\bigl((\idxk, \NT{C}, \idxj]\bigr)$
            \Else
            \State \textbf{guess} $\NT{Y} \in \nonterm$
            \State \textbf{guess} $(\idxk, \idxl] \subsetneq (\idxi, \idxj]$
            \State \Return $\recAlgo\bigl((\idxi, \NT{A}, \idxj] / (\idxk, \NT{Y}, \idxl]\bigr) \wedge \recAlgo\bigl((\idxk, \NT{Y}, \idxl]\bigr)$
            \EndIf
            \EndIf
            \EndFunction
        \end{algorithmic}
    \end{algorithm}
\end{figure}

We additionally use \defn{slashed items} to record that some non-terminal already derives an inner substring of the input:
the slashed item $(\idxi, \NTA, \idxj] / (p, \NTB, q]$ with $(p,q] \subsetneq (\idxi,\idxj]$ records that $\NTB \derives \str_{(p, q]}$, and asks whether $\NTA$ derives the sentential form obtained by replacing $\str_{(p, q]}$ inside $\str_{(\idxi, \idxj]}$ with the non-terminal $\NTB$. We write $(\idxi, \NTA, \idxj] / (p, \NTB, q] = \top$, and call the slashed item \defn{realizable} when this is the case; equivalently $\NTA \derives \str_{(\idxi, p]} \, \NTB \, \str_{(q, \idxj]}$.
The same semantics extend to the other three bracket shapes, i.e., a slashed item of any of the forms $[\idxi, \NTA, \idxj] / (p, \NTB, q]$, $[\idxi, \NTA, \idxj) / (p, \NTB, q]$, or $(\idxi, \NTA, \idxj) / (p, \NTB, q]$ means that $\NTA$ derives the outer yield with $\str_{(p, q]}$ excised and replaced by $\NTB$ where the outer yield is whichever (half-)open interval the outer brackets pick out. The inner item can likewise take any of the four shapes.
Analogously to $\items$, we denote by $\slasheditems$ the set of all slashed items associated with string $\str$ and a grammar $\grammar$.

A \defn{parse tree} for $\str$ under a CFG $\grammar$ in CNF is a finite tree of items, built inductively from two deduction rules:
\begin{align*}
    \textsc{(leaf)}:\quad
    & \dfrac{}{(\idxi - 1, \NTA, \idxi]},
    &&  \production{\NTA}{\sym_\idxi} \in \rules; \\[0.7em]
    \textsc{(combine)}:\quad
    & \dfrac{%
    \tikz[baseline=(b.base)]{%
    \node[inner sep=1pt] (b) {$(\idxi, \NTB, \idxk]$};%
    \draw ([yshift=6mm]b.north west) -- ([yshift=6mm]b.north east) -- ([yshift=1mm]b.north) -- cycle;%
    }%
    \quad
    \tikz[baseline=(c.base)]{%
    \node[inner sep=1pt] (c) {$(\idxk, \NTC, \idxj]$};%
    \draw ([yshift=6mm]c.north west) -- ([yshift=6mm]c.north east) -- ([yshift=1mm]c.north) -- cycle;%
    }%
    }{(\idxi, \NTA, \idxj]},
    &&  \production{\NTA}{\NTB\,\NTC} \in \rules.
\end{align*}
The \textsc{leaf} rule has no premises: whenever the grammar contains the unit production $\production{\NTA}{\sym_\idxi}$ for the input terminal $\sym_\idxi$ at position $\idxi$, it introduces the length-1 item $(\idxi{-}1, \NTA, \idxi]$. The \textsc{combine} rule splices two adjacent realizable items into a longer one whenever the grammar contains the binary production $\production{\NTA}{\NTB\,\NTC}$; the split index $\idxk$ is shared between the two premises so that the conclusion covers the union $(\idxi, \idxj]$ exactly, matching the calculus introduced above. 
\begin{figure}[H]
    \begin{algorithm}[H]        \algrenewcommand\algorithmicfunction{\textbf{def}}\footnotesize
    \algrenewcommand\algorithmicindent{1.0em}
        \caption{Evaluating the realizability of a slashed item}
        \label{alg2}
        \begin{algorithmic}[1]
            \Function{\recAlgo}{$(\idxi, \NT{X}, \idxj] / (p, \NT{Y}, q]$}
            \If{$p = \idxi \wedge q = \idxj - 1$}
            \State \textbf{guess} $\NT{Z} \in \nonterm$
            \State \Return $(\NT{X} \to \NT{Y}\,\NT{Z}) \in \rules \,\wedge\, (\NT{Z} \to \sym_\idxj) \in \rules$
            \ElsIf{$p = \idxi + 1 \wedge q = \idxj$}
            \State \textbf{guess} $\NT{Z} \in \nonterm$
            \State \Return $(\NT{X} \to \NT{Z}\,\NT{Y}) \in \rules \,\wedge\, (\NT{Z} \to \sym_{\idxi+1}) \in \rules$
            \Else
            \State \textbf{guess} $a \in \{\textsc{split}, \textsc{gap}\}$
            \If{$a = \textsc{split}$}
            \State \textbf{guess} $(\NT{X} \to \NT{A}\,\NT{B}) \in \rules$
            \State \textbf{guess} $\idxk \in (\idxi, \idxj) / (p,q]$
            \State \Return $(\recAlgo\bigl((\idxi, \NT{A}, \idxk] / (p, \NT{Y}, q]\bigr) \wedge \recAlgo\bigl((\idxk, \NT{B}, \idxj]\bigr))$
            \Statex \hspace*{2.85em} $\lor\; (\recAlgo\bigl((\idxi, \NT{A}, \idxk]) \wedge \recAlgo\bigl((\idxk, \NT{B}, \idxj]/ (p, \NT{Y}, q]\bigr))$
            \Else
            \State \textbf{guess} $\NT{Z} \in \nonterm$
            \State \textbf{guess} $\idxk,\idxl$ \text{ with }$(p,q]\subsetneq(\idxk,\idxl]\subsetneq(\idxi,\idxj]$ 
            \State \Return $\recAlgo\bigl((\idxi, \NT{X}, \idxj] / (\idxk, \NT{Z}, \idxl]\bigr) \wedge \recAlgo\bigl((\idxk, \NT{Z}, \idxl] / (p, \NT{Y}, q]\bigr)$
            \EndIf
            \EndIf
            \EndFunction
        \end{algorithmic}
    \end{algorithm}
\end{figure}

Naturally, $\str \in \lang(\grammar)$ iff $(0, \start, \idxn]$ is realizable.
A parse tree witnesses $\str \in \lang(\grammar)$ iff its root is the item $(0, \start, \idxn]$, and an item $(\idxi, \NTA, \idxj]$ is realizable iff it is the root of some sub-derivation built from these rules.

Because $\grammar$ is in CNF, any derivation $\NTX \derives \str_{(\idxi, \idxj]}$ with $\idxj > \idxi + 1$ must begin with a binary rule, so any realizable item $(\idxi, \NTX, \idxj]$ with $\idxj > \idxi + 1$ decomposes in one of two ways: 
\begin{enumerate*}[label=\textit{(\alph*)}]
    \item We can split at the root: there is a rule $\production{\NTX}{\NTY\NTZ}$ and a split index $\idxk \in (\idxi, \idxj)$ such that the items $(\idxi, \NTY, \idxk]$ and $(\idxk, \NTZ, \idxj]$ are both realizable.
    This yields $\bigo(|\rules|\,\idxn)$ choices per item. 
    \item We can split at any internal parse-tree node: there is an item $(p, \NTY, q]$ such that both the slashed item $(\idxi, \NTX, \idxj]/(p, \NTY, q]$ and the inner item $(p, \NTY, q]$ are realizable. Thus, there are $\bigo(|\nonterm|\,\idxn^2)$ choices per item. 
\end{enumerate*}
The base case $\idxj = \idxi + 1$ asks whether $\production{\NTX}{\sym_\idxj}$ is a rule. Slashed items decompose symmetrically, with the additional choice of which sub-derivation (left or right) brackets the assumed inner derivation, yielding $\bigo(|\rules|\,\idxn)$ root-split choices and $\bigo(|\nonterm|\,\idxn^2)$ internal-split choices per slashed item.

\subsection{Parallel Parsing}\label{subsec:generalcflalgo}
The recursive characterization above couples items and slashed items: an item $(\idxi, \NTA, \idxj]$ decomposes either as a root-split into two adjacent items, or as an internal-split into a slashed item and an inner item; a slashed item decomposes symmetrically into smaller slashed items and inner items. 
Recognizing realizability is thus naturally expressed as a pair of mutually recursive procedures---\cref{alg1} recurses on items, dispatching to \cref{alg2} on the slashed item produced by an internal-split; \cref{alg2} recurses on slashed items, dispatching back to \cref{alg1} on the inner item it spawns. 
Each procedure returns $\true$ iff at least one decomposition is realizable; as a Boolean formula this is a disjunction $\bigvee$, ranging over candidate splits, productions, and (in \cref{alg2})
the choice of which child of the split inherits the inner excision.

Read as a Boolean formula, the body of \cref{alg1,alg2} is a tree of $\land$s and $\lor$s over the predicate ``the item is realizable.'' The conjunctions are written explicitly: each return statement combines its recursive calls with $\wedge$ and succeeds iff every branch succeeds. The disjunctions are written compactly as \textbf{guess} statements (rather than as $\bigvee_{a \in S}$).
In other words, guess says pick one disjunct, and the procedure returns $\true$ iff \emph{some} choice does. 
The two operators together are exactly the alternation of \citet{chandra-alternation}---each \textbf{guess} is an existential ($\vee$) branch, each $\wedge$ a universal branch---so the parallel runtime is the alternating depth \citep{ruzzo-treesize, ruzzo-unif}, which equals the depth of the recursion tree: logarithmic in $\idxn$ (cf. \cref{lem:centroidvertex}).
By the idea just described, \cref{alg1,alg2} correctly recognize membership in an arbitrary $\cfl$\footnote{All of \cref{subsec:generalcflalgo}'s proofs figure in \cref{app:proofsparsing}.}:
\begin{restatable}[Correctness]{lemma}{cflalgocorrectness}\label{lem:cfl_algo_correctness}
    Given a CFG $\grammar$ in CNF and $\str \in \kleene{\alphabet}$ of length $\idxn$, $\recAlgo((0, \start, \idxn]) = \true$ iff $\str \in \lang(\grammar)$.
\end{restatable}

We now analyze the resources required to compute $\recAlgo((0, \start, \idxn])$ via \cref{alg1} and \cref{alg2}, i.e., to test the membership of the input string $\str$ in $\grammar$. 
$\recAlgo$'s recursion is based on the balanced decomposition of a tree into subtrees of roughly equal size, which intuitively leads to a $\log(\idxn)$-time procedure. 
Formally, we have the following fact about decomposing trees.
\begin{lemma}[\citealt{jordan1869}]\label{lem:centroidvertex}
    Let $(V, E)$ be a tree with $\idxn \defeq |V| > 1$ nodes.
    Then, there exists a \defn{centroid node} $v \in V$ whose removal partitions $(V,E)$ into components, each of size at most $\lceil \idxn/2 \rceil$.
\end{lemma}

\paragraph{Time complexity.} 
\cref{lem:centroidvertex} allows \cref{alg1} to run in a logarithmic number of recursive steps.
\begin{restatable}[Parallel Runtime]{lemma}{solvelognsteps}\label{lem:solvelognsteps}
    Let $\str \in \kleene{\alphabet}$ be a string of length $\idxn$ and let $\grammar$ be a context-free grammar.
    $\recAlgo((0, \start, \idxn])$ terminates in at most $2\lceil \log_{2}(2\idxn) \rceil + \bigo(1)$ recursive steps when $\str\in\lang(\grammar)$.
\end{restatable}

\textbf{Space complexity.}
The bottleneck resides in computing the realizability of slashed items of the form $(\idxi, \NT{X}, \idxj] / (\idxk, \NT{Y}, \idxl]$, of which there are $\bigo( \idxn^4)$.
Guessing an item $(p, \NT{Z}, q]$ that could decompose a slashed item requires $\bigo(\idxn^2)$ space, leading to a total space complexity of $\bigo(\idxn^6)$.

\subsection{Implementing the Algorithm on $\ahat$s}\label{subsec:cfl-ahat-impl}
Looped and padded transformers are expressively equivalent to classical models of parallel recognition \citep{merrill2025exactexpressivepowertransformers,anonymous2026revisiting}.
Combining these results with the time and space upper bounds established in \cref{subsec:generalcflalgo}, we can show (see \cref{app:proofcfl2ahat} for the proof):
\begin{restatable}{theorem}{cfltoahat}\label{thm:cfl2ahat} 
    $\cfl \subseteq \mahat^{1}_{6} \subseteq \ahat^{1}_{7}$.
\end{restatable}

\section{Parallel Recognition of Unambiguous $\cfl$s}\label{sec:subclasses}

\cref{thm:cfl2ahat} shows that $\log(\idxn)$-deep  $\mahat$s with $\bigo(\idxn^6)$ padding can recognize all $\cfl$s.
Intuitively, the padding in \cref{thm:cfl2ahat} handles ambiguity in a general $\cfl$ by storing all the ways in which we can decompose an item. 
Thus, guessing how to decompose an item requires a substantial amount of space, and one might expect that \emph{unambiguous} grammars would require less space for recognition.
Accordingly, we next study \defn{unambiguous} $\cfl$s ($\ucfl$s), which admit at most one derivation for any string.

Unambiguity is a natural and exploitable property.
For example, programming language parsers such as \textsc{LR} parsers rely on deterministic (therefore unambiguous) $\cfl$s to process inputs in linear time.
Moreover, transformers struggle to parse ambiguous grammars  \citep{khalighinejad-etal-2023-approximating} and struggle to process syntactically ambiguous natural language sentences \citep{liu2023wereafraidlanguagemodels}. 
In this section, we show that $\ucfl$s can be recognized by transformers with less padding than arbitrary $\cfl$s if slightly more depth overhead is allowed:
\begin{restatable}{theorem}{ucfltoahat}\label{thm:ucfl2ahat}
    $\ucfl \subseteq \mahat^{2}_{3} \subseteq \ahat^2_4$.
\end{restatable}
We do so by first formalizing \citeposs{chytil-unambiguous} unambiguous $\cfl$ recognition algorithm with a smaller space complexity in $\log^2(\idxn)$-time. 
We then translate this algorithm into $\ahat$s with a tractable amount of padding.

\subsection{Parsing Unambiguous $\cfl$s}\label{subsec:unambiguousalgo}
Our goal is again to decide whether $\str \in \lang(\grammar)$ by computing the realizability of $(0, \start, \idxn]$.
In contrast to \cref{alg1,alg2}, which start from the goal item $(0, \start, \idxn]$ and recurse downward, here we instead work bottom-up and \emph{in parallel across all items}: at every iteration $t$, the algorithm computes realizability for \emph{every} item $\itm$ simultaneously, regardless of whether that item happens to lie on a derivation of $(0, \start, \idxn]$.
Concretely, the algorithm maintains a growing set $\items_0 \subseteq \items_1 \subseteq \cdots$ of items already certified as realizable, and at each iteration extends it by one ``CKY-style'' combination step composed with a reachability closure.
Unambiguity specifically lets us evaluate this closure in $\bigo(\log \idxn)$ parallel depth (\cref{fact:upprop}).
\begin{figure*}[ht]
    \centering
    \begin{tikzpicture}[
        scale=0.82,
        every node/.style={transform shape},
        leaftok/.style={draw=ETHGreen!60, fill=ETHGreen!20, minimum width=9mm, minimum height=7mm, font=\footnotesize, rounded corners=1pt},
        optok/.style={draw=ETHBlue!70, fill=ETHBlue!15, minimum width=9mm, minimum height=7mm, font=\footnotesize, rounded corners=1pt},
        roottok/.style={draw=ETHBronze!80, fill=ETHBronze!20, minimum width=9mm, minimum height=7mm, font=\footnotesize, rounded corners=1pt, thick},
        rsslot/.style={draw=none, minimum width=4.4mm, minimum height=3.6mm, font=\tiny, inner sep=0pt, outer sep=0pt},
        rsval/.style={rsslot, fill=ETHGreen!18},
        rstyp/.style={rsslot, fill=ETHBlue!18},
        rsptr/.style={rsslot, fill=ETHPurple!15},
        rsempty/.style={rsslot, fill=ETHGray!10, text=ETHGray!70},
        vecframe/.style={draw=ETHGray!60, line width=0.5pt, rounded corners=0.8pt, inner sep=0pt},
        ptrarrow/.style={->, >=stealth, semithick, ETHPurple!70, shorten >=1pt, shorten <=1pt},
        poslabel/.style={font=\tiny\itshape, text=ETHGray},
        grouplabel/.style={font=\footnotesize\itshape},
        iterbox/.style={draw=ETHGray!50, fill=white, minimum width=4.4mm, minimum height=3.6mm, font=\tiny, inner sep=0pt},
        iterval/.style={iterbox, fill=ETHGreen!18},
        iterempty/.style={iterbox, fill=ETHGray!10, text=ETHGray!60},
        ]
        
        \node[leaftok] (t1)  at (0,    0) {$\TRUE$};
        \node[leaftok] (t2)  at (1.2,  0) {$\FALSE$};
        \node[optok]   (t3)  at (2.4,  0) {$\lor$};
        \node[leaftok] (t4)  at (3.6,  0) {$\TRUE$};
        \node[leaftok] (t5)  at (4.8,  0) {$\TRUE$};
        \node[optok]   (t6)  at (6.0,  0) {$\lor$};
        \node[optok]   (t7)  at (7.2,  0) {$\land$};
        \node[leaftok] (t8)  at (8.4,  0) {$\FALSE$};
        \node[leaftok] (t9)  at (9.6,  0) {$\TRUE$};
        \node[optok]   (t10) at (10.8, 0) {$\lor$};
        \node[roottok] (t11) at (12.0, 0) {$\land$};
        
        \foreach \n/\i in {t1/1, t2/2, t3/3, t4/4, t5/5, t6/6, t7/7, t8/8, t9/9, t10/10, t11/11}
        \node[poslabel] at (\n.south) [below=1pt] {$\i$};
        
        \def\rsy{2.6}
        \def\dy{0.36}
        
        \foreach \n/\v in {t1/$\TRUE$, t2/$\FALSE$, t4/$\TRUE$, t5/$\TRUE$, t8/$\FALSE$, t9/$\TRUE$}{
        \node[rsval]   (\n-val) at (\n |- 0,\rsy)         {\v};
        \node[rsempty] (\n-typ) at (\n |- 0,\rsy-\dy)     {$-$};
        \node[rsempty] (\n-pL)  at (\n |- 0,\rsy-2*\dy)   {$-$};
        \node[rsempty] (\n-pR)  at (\n |- 0,\rsy-3*\dy)   {$-$};
        \node[vecframe, fit=(\n-val)(\n-pR)] (\n-vec) {};
        \node[font=\scriptsize, text=ETHGray, anchor=east] at (\n-vec.west) {$[$};
        \node[font=\scriptsize, text=ETHGray, anchor=west] at (\n-vec.east) {$]$};
        }
        
        \foreach \n/\v/\t/\pl/\pr in {%
        t3/$\TRUE$/$\lor$/$1$/$2$,
        t6/$\TRUE$/$\lor$/$4$/$5$,
        t7/$\TRUE$/$\land$/$3$/$6$,
        t10/$\TRUE$/$\lor$/$8$/$9$,
        t11/$\TRUE$/$\land$/$7$/$10$%
        }{
        \node[rsval] (\n-val) at (\n |- 0,\rsy)         {\v};
        \node[rstyp] (\n-typ) at (\n |- 0,\rsy-\dy)     {\t};
        \node[rsptr] (\n-pL)  at (\n |- 0,\rsy-2*\dy)   {\pl};
        \node[rsptr] (\n-pR)  at (\n |- 0,\rsy-3*\dy)   {\pr};
        \node[vecframe, fit=(\n-val)(\n-pR)] (\n-vec) {};
        \node[font=\scriptsize, text=ETHGray, anchor=east] at (\n-vec.west) {$[$};
        \node[font=\scriptsize, text=ETHGray, anchor=west] at (\n-vec.east) {$]$};
        }
        
        \node[font=\tiny, text=ETHGray, anchor=east] at ($(t1-vec.west) + (-0.30, 0.36)$) {$\fval$};
        \node[font=\tiny, text=ETHGray, anchor=east] at ($(t1-vec.west) + (-0.30, 0)$)    {$\ftype$};
        \node[font=\tiny, text=ETHGray, anchor=east] at ($(t1-vec.west) + (-0.30,-0.36)$) {$\fargL$};
        \node[font=\tiny, text=ETHGray, anchor=east] at ($(t1-vec.west) + (-0.30,-0.72)$) {$\fargR$};
        
        \node[font=\footnotesize\itshape, text=ETHGray, anchor=south]
        at ($(t1-vec.north west) !0.5! (t11-vec.north east) + (0,0.18)$)
        {residual stream at each position (vector of slots)};
        
        \draw[ptrarrow] (t3-pL.south)  to[out=-135, in=90] (t1.north);
        \draw[ptrarrow] (t3-pR.south)  to[out=-90, in=90] (t2.north);
        
        \draw[ptrarrow] (t6-pL.south)  to[out=-135, in=90] (t4.north);
        \draw[ptrarrow] (t6-pR.south)  to[out=-90, in=90] (t5.north);
        
        \draw[ptrarrow] (t7-pL.south)  to[out=-135, in=90] (t3.north);
        \draw[ptrarrow] (t7-pR.south)  to[out=-90, in=90] (t6.north);
        
        \draw[ptrarrow] (t10-pL.south) to[out=-135, in=90] (t8.north);
        \draw[ptrarrow] (t10-pR.south) to[out=-90, in=90] (t9.north);
        
        \draw[ptrarrow] (t11-pL.south) to[out=-135, in=90] (t7.north);
        \draw[ptrarrow] (t11-pR.south) to[out=-90, in=90] (t10.north);
        
        \draw[decoration={brace,amplitude=4pt,mirror,raise=10pt}, decorate, ETHGray!70]
        (t1.south west) -- (t11.south east)
        node[midway, below=14pt, grouplabel, text=ETHGray!80!black]
        {postfix input string for $\psi \defeq ((\TRUE\!\lor\!\FALSE)\land(\TRUE\!\lor\!\TRUE))\land(\FALSE\!\lor\!\TRUE)$};
        
        \begin{scope}[shift={(15.5, -1.5)}]
            \node[font=\tiny\itshape, text=ETHGray] at (0,    3.5) {iter 0};
            \node[font=\tiny\itshape, text=ETHGray] at (0.55, 3.5) {1};
            \node[font=\tiny\itshape, text=ETHGray] at (1.10, 3.5) {2};
            \node[font=\tiny\itshape, text=ETHGray] at (1.65, 3.5) {$\cdots$};
            \node[font=\tiny\itshape, text=ETHGray] at (2.30, 3.5) {$\log \idxn$};
            
            \foreach \i/\rowy/\fillstart/\label in {%
            0/3.10/1/pos.\,3,
            1/2.74/1/pos.\,6,
            2/2.38/2/pos.\,7,
            3/2.02/1/pos.\,10,
            4/1.66/3/pos.\,11%
            }{
            \node[font=\tiny, text=ETHGray, anchor=east] at (-0.20, \rowy) {\label};
            \foreach \col/\cx in {0/0, 1/0.55, 2/1.10, 3/1.65, 4/2.30}{
            \pgfmathtruncatemacro{\filled}{ifthenelse(\col >= \fillstart, 1, 0)}
            \ifnum\filled=1
            \node[iterval] at (\cx, \rowy) {$\TRUE$};
            \else
            \node[iterempty] at (\cx, \rowy) {$?$};
            \fi
            }
            }
            
            \node[font=\footnotesize\itshape, text=ETHGray, anchor=south]
            at (0.5, 3.7) {$\fval$ of each operator position across iterations};
            
            \draw[->, >=stealth, thick, ETHGreen!70]
            (-1, 1.6) .. controls (-1.45, 2) and (-1.45, 2.5) .. (-1, 3.0);
            \node[font=\tiny\itshape, text=ETHGreen!70!black, anchor=east]
            at (-1.3, 2.3) {\shortstack[r]{loop\\$\bigo(\log\idxn)$\\times}};
            
            \draw[decoration={brace,amplitude=3pt,mirror,raise=3pt}, decorate, ETHGreen!70]
            (0, 1.35) -- (2.30, 1.35)
            node[midway, below=8pt, font=\tiny\itshape, text=ETHGreen!70!black]
            {root's value known after $\bigo(\log \idxn)$ iterations};
        \end{scope}
        
    \end{tikzpicture}%
    \caption{
    \textbf{Postfix encoding of a variable-free Boolean formula $\psi$ used by \cref{lem:reach}.}
    The example formula is $\psi \defeq ((\TRUE\!\lor\!\FALSE)\land(\TRUE\!\lor\!\TRUE))\land(\FALSE\!\lor\!\TRUE)$, whose postfix serialization is $\TRUE\,\FALSE\,\lor\,\TRUE\,\TRUE\,\lor\,\land\,\FALSE\,\TRUE\,\lor\,\land$. No padding symbols are appended: every position of the input is either a \textcolor{ETHGreen}{leaf value} ($\TRUE/\FALSE$) or a \textcolor{ETHBlue}{binary operator} ($\land$/$\lor$). At each position the residual stream carries the slots $[\fval,\,\ftype,\,\fargL,\,\fargR]$: at leaf positions only $\fval$ is set (to the input value); at operator positions $\ftype$ encodes the connective and $\fargL,\fargR$ are pointers to the two operand positions (\textcolor{ETHPurple}{purple} arrows)---determined directly from postfix structure by the \textsc{C-RASP} preprocessing of \cref{corr:bfvp}. The right panel shows that $\fval$ at each operator position is filled \emph{iteratively}: at every loop iteration, operator positions whose two operands' values are already known pebble up their own value, so after $\bigo(\log \idxn)$ iterations the \textcolor{ETHBronze}{root}'s value is known.
    }
    \label{fig:lemreach}
\end{figure*}

First, we define the initial set of items
\begin{equation}
    \items_0 \defeq \{(\idxi-1, \NTA, \idxi] \mid \production{\NTA}{\sym_\idxi} \in \rules\},
\end{equation}
i.e., length-$1$ items witnessed by the unit productions of $\grammar$. 
These are the base-case items certifiable as realizable directly from the input.
Then, we define the relation
\begin{equation}
    \bigl((\idxi, \NTB, \idxk],\, (\idxk, \NTC, \idxj],\, (\idxi, \NTA, \idxj]\bigr) \in \sR \iff \production{\NTA}{\NTB \NTC} \in \rules.
\end{equation}
$\sR$ encodes a single \textsc{combine} step of the parse-tree calculus (\cref{subsec:itemsandtrees}): if $\NTB$ derives $\str_{(\idxi, \idxk]}$ and $\NTC$ derives the adjacent right span $\str_{(\idxk, \idxj]}$ and the grammar contains $\production{\NTA}{\NTB\NTC}$, then $\NTA$ derives the union $\str_{(\idxi, \idxj]}$.
In other words, $\sR(\itm_1, \itm_2, \itm_3)$ holds iff items $\itm_1$ and $\itm_2$ are the two CKY antecedents that justify $\itm_3$.
Given $\sR$, the algorithm maintains a growing set of \defn{marked} items, where ``$\itm$ marked at step $t$'' is the algorithm's running certificate that $\itm$ is realizable, established by iteration $t$.
The base items $\items_0$ are marked at step $0$.
The marking sets will satisfy $\items_0 \subseteq \items_1 \subseteq \cdots \subseteq \items_* \subseteq \items$, and we will see below that the fixpoint $\items_*$ contains exactly the realizable items, so testing $(0, \start, \idxn] \in \items_*$ decides membership in $\lang(\grammar)$.
We extend $\sI_t$ inductively; assuming the marked set $\items_{t-1}$ at iteration $t-1$ is available, we build $\items_t$ in two steps:
\begin{enumerate*}[label=(\textit{\arabic*})]
    \item Package every CKY combination that uses a witness from $\items_{t-1}$ into a directed graph and
    \item Chain arbitrarily many such combinations together within the same iteration, so that $\items_t$ contains items realizable by a derivation that bottoms out in $\items_{t-1}$ via any chain of new combinations.
\end{enumerate*}

Concretely, we construct a graph $(\items, \edges_t)$ whose edges record one-step implications between items, conditional on $\items_{t-1}$:
\begin{align}\label{eq:depgraph}
    \edges_t \defeq \bigl\{\itm_3 \to \itm_1 \mid \; & \itm_3 \notin \items_{t-1} \text{ and } (\sR(\itm_1, \itm_2, \itm_3) \text{ or } \\ & \sR(\itm_2, \itm_1, \itm_3)) \text{ for some } \itm_2 \in \items_{t-1}\bigr\}. \notag
\end{align}
Intuitively, the edge $\itm_3 \to \itm_1$ encodes that \emph{if $\itm_1$ is realizable, then so is $\itm_3$}, where the witness $\itm_2$ is drawn from the items already marked at step $t-1$.
We write $\depgraph(\sI) \defeq (\items, \edges(\sI))$ for the dependency graph obtained by setting $\items_{t-1} = \sI$ in \cref{eq:depgraph}, so that $\depgraph(\items_{t-1}) = (\items, \edges_t)$ and, in particular, $\depgraph(\items_0)$ is the \emph{initial} dependency graph $(\items, \edges_1)$.
Writing $\reachable_t(\itm)$ for the set of items reachable from $\itm$ in $(\items, \edges_t)$ ($|\reachable_t(\itm)| \le |\items| = \bigo(\idxn^2)$), we mark $\itm$ at step $t$ exactly when it reaches some item already marked at step $t-1$:
\begin{equation}\label{eq:marked-set}
    \itm \in \items_t \iff \reachable_t(\itm) \cap \items_{t-1} \neq \emptyset.
\end{equation}
In particular, $\items_{t-1} \subseteq \items_t$.
The algorithm we implement on transformers in \cref{subsec:unambiguousahat} stages this as an outer loop over $t$: at each $t$ it 
\begin{enumerate*}[label=\textit{(\roman*)}]
    \item computes the new edge set $\edges_t$ from the previously marked set $\items_{t-1}$ by a single CKY-style probe of $\sR$, and
    \item evaluates the reachability test of \cref{eq:marked-set} for every item in parallel to obtain $\items_t$.
\end{enumerate*}
The outer loop terminates once $\items_t = \items_{t-1}$, at which point no new combinations are possible and we recover $\items_*$ as the set of all \emph{realizable} items.
\citet{chytil-unambiguous} show that at most $\bigo(\log \idxn)$ iterations are required until the procedure reaches a fixed point. 
To bound the total runtime of the procedure, it remains to bound the complexity of each reachability problem.
\citet{chytil-unambiguous} again show that in the case of an unambiguous CFG, each graph reachability problem takes at most $\bigo(\log \idxn)$ time.
The basis is the following fact.
\begin{fact}[\citet{chytil-unambiguous}]\label{fact:upprop}
    Let $\grammar$ be an unambiguous CFG and $t \ge 1$. 
    For any pair of items $\itm, \itm' \in \items$ in the graph $(\items, \edges_t)$, there is at most one directed path from $\itm$ to $\itm'$.
\end{fact}
Thus, $\reachable_t(\itm)$ is a \emph{tree} rooted at $\itm$ for every item $\itm \in \items$.
The marking condition \cref{eq:marked-set} can then be expressed recursively along this tree: $\itm \in \items_t$ holds iff some node of $\reachable_t(\itm)$ lies in $\items_{t-1}$, which we can unfold edge by edge as
\begin{equation}\label{eq:reach-recurrence}
    \ind{\itm \in \items_t} \defeq \ind{\itm \in \items_{t-1}} \;\lor\; \bigvee_{\itm \to \itm' \in \edges_t} \ind{\itm' \in \items_t},
\end{equation}
The recursion terminates at the leaves of $\reachable_t(\itm)$---items with no outgoing edge in $\edges_t$---since by \cref{eq:depgraph} any item with an outgoing edge lies outside $\items_{t-1}$, so the two disjuncts in \cref{eq:reach-recurrence} are mutually exclusive. Unrolled along the tree $\reachable_t(\itm)$, this collapses to the flat Boolean disjunction
\begin{equation}\label{eq:reach-unrolled}
    \ind{\itm \in \items_t} \iff \bigvee_{\itm' \in \reachable_t(\itm)} \ind{\itm' \in \items_{t-1}}.
\end{equation}
Each internal node of $\reachable_t(\itm)$ is an $\lor$ gate and each leaf is the indicator $\ind{\itm' \in \items_{t-1}}$, so the formula is evaluable in $\bigo(\log \idxn)$ parallel depth \citep{rytter-2pda}.
Putting the pieces together, one outer iteration of the recognition algorithm consists of
\begin{enumerate*}[label=\textit{(\arabic*)}]
    \item building the dependency graph $\edges_t$ from $\items_{t-1}$ by a single grammar-rule lookup per candidate triple, and 
    \item evaluating, in parallel for every item $\itm$, the Boolean formula \cref{eq:reach-unrolled} over the tree $\reachable_t(\itm)$.
\end{enumerate*}
This is the recipe we follow in \cref{subsec:unambiguousahat}: each step is implemented by a fixed transformer block, the inner reachability tree by \cref{lem:reach}, and the outer loop by $\bigo(\log \idxn)$ loop iterations---yielding the $\log^2$-depth recognizer of \cref{thm:ucfl2ahat}.

\subsection{Unambiguous $\cfl$s on Transformers}\label{subsec:unambiguousahat}
This section outlines the proof of \cref{thm:ucfl2ahat} in two steps:\footnote{See \cref{app:ucflproofs} for the full proofs.}
\begin{enumerate*}[label=\textit{(\arabic*)}]
    \item in \Cref{lem:reach}, we first show how transformers can compute the value of variable-free Boolean formulas in $\bigo(\log(\idxn))$ parallel depth---this gives us the parallel evaluator we need for the reachability formula \cref{eq:reach-unrolled}---and then
    \item we show how to implement the unambiguous $\cfl$ recognition algorithm on transformers by using the first step as a subroutine to evaluate \cref{eq:reach-unrolled} at every item in parallel.
\end{enumerate*}

\begin{restatable}{lemma}{reach}\label{lem:reach}
    For any variable-free postfix Boolean formula $\psi$ on $\idxn$ inputs, there exists a $\bigo(\log(\idxn))$-looped unpadded transformer that computes $\psi$'s value.
\end{restatable}
\begin{proof}[Proof intuition.]
    We feed $\psi$ in postfix form, so every position of the input is either a leaf value ($\TRUE/\FALSE$) or a binary operator ($\land/\lor$) and no padding is appended.
    A fixed-depth \textsc{C-RASP} preprocessing pass (\cref{lem:postfixencode}) populates the residual stream at each operator position with the connective and with pointers $\fargL,\fargR$ to its two operand positions, which can be read off directly from the postfix structure.
    This puts the nodes of $\psi$'s binary expression tree in bijection with the input positions, so each node owns its own residual stream and no padding is needed to host intermediate values.
    On top of this representation the transformer runs \citeposs{rytter-2pda} parallel pebble game: at each loop iteration, every operator position whose two operands' values are already known reads them via $\fargL,\fargR$ and pebbles up its own value $\fval$.
    Since the expression tree has depth $\bigo(\log(\idxn))$ in the balanced case and Rytter's pebble game halves the unresolved depth at every iteration in general, after $\bigo(\log(\idxn))$ iterations every operator position---in particular the root---has its value filled in, and the transformer reads off $\psi$'s value from the root position.
    \Cref{fig:lemreach} illustrates the postfix encoding and the iterative filling of $\fval$ on the example $\psi \defeq ((\TRUE\!\lor\!\FALSE)\land(\TRUE\!\lor\!\TRUE))\land(\FALSE\!\lor\!\TRUE)$.
\end{proof}

\Cref{lem:reach} already yields a result of independent interest.
We denote by \defn{$\bfvp$} the set of variable-free Boolean formulas that evaluate to $\true$---the \emph{Boolean formula value problem} of \citet{buss-bfvp}, the canonical $\nc{1}$-complete language.
$\bfvp$ is an unambiguous $\cfl$ whose recognition \emph{requires} $\bigo(\log(\idxn))$-time on classical models of parallel computation: any shallower parallel model (in particular $\tc{0}$, and hence fixed-depth transformers) cannot recognize $\bfvp$ unless $\tc{0} = \nc{1}$.
Applying \cref{lem:reach} to the indicator of acceptance gives the following.
\begin{restatable}{corollary}{bfvpcorr}\label{corr:bfvp}
$\bfvp \in \ahat^1_0$.
\end{restatable}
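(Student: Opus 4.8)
The plan is to obtain $\bfvp \in \ahat^1_0$ by combining a \emph{constant-depth parsing stage} with the $\bigo(\log(\idxn))$-looped \emph{evaluation stage} supplied by \cref{lem:reach}, using no padding anywhere. Recall that $\bfvp$ consists of well-formed, variable-free Boolean formulas over $\set{\land, \lor, \lnot}$ and constants $\set{0, 1}$ in fully parenthesized infix form, together with the requirement that the formula evaluates to $\TRUE$. The construction has the $\langle A, B, C\rangle$ shape of \cref{def:looping}: an initial block $A$ that reads the input and writes, into each token's residual stream, the expression-tree structure in exactly the interface required by \cref{lem:reach} (leaf values, node types, pointers to argument nodes), together with a global well-formedness flag; the looped block $B$ of \cref{lem:reach}, repeated $\bigo(\log(\idxn))$ times, which propagates Boolean values up the tree; and a final block $C$ that does the readout at the $\eos$ position. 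Since $B$ is looped $\bigo(\log(\idxn)) = \bigo(\log^1(\idxn))$ times and no padding is used, this places $\bfvp$ in $\ahat^1_0$.

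For the parsing stage I would associate each subformula with the token that \emph{closes} it---the matching \texttt{)} of a compound subformula, or the constant token itself for a leaf---so that all pointers point \emph{leftward} (a subformula closes only after its sub-subformulas do), and strict causal masking therefore suffices. First, a uniform-attention head computes for every token its enclosing bracket depth as a prefix count of \texttt{(} minus \texttt{)}, stored with log-precision arithmetic. Next, for every non-\texttt{(} token a rightmost-style hard-attention head computes $m(\cdot)$, the position of its enclosing open bracket, as the nearest preceding \texttt{(} at the matching depth; taking the score to be a depth-match indicator plus the position index forces a unique maximizer, so average-hard attention returns exactly that token. From this, for a \texttt{)} at position $j$ we recover: the subformula's main connective as the unique \emph{operator} token $q$ with $m(q) = m(j)$; the closing token of the left operand as the token immediately before $q$ (and, for $\lnot$, there is no left operand); and the closing token of the right operand as position $j-1$. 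Constant tokens are leaves carrying their own value. In parallel, a prefix-minimum over the bracket-balance sequence together with a constant-size consistency check inside each bracket pair (anchored at its \texttt{)}, so again only leftward attention) yields the well-formedness flag. All of this is constant depth with no padding.

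Feeding the resulting representation to the looped transformer of \cref{lem:reach}---whose pointers, in our encoding, all point to strictly earlier tokens, so the Rytter pebble game of that proof instantiates under causal masking---after $\bigo(\log(\idxn))$ iterations every token holds the Boolean value of its associated subformula. In particular, the token closing the outermost subformula, or the lone constant token if the formula is a single constant, holds the value of the whole formula. In the final block, the $\eos$ position attends to this token (identified as the unique closer at bracket depth $1$, or the unique constant at depth $0$) and to the well-formedness flag, and a linear classifier applied to $\vx_{\eos}^\nLayers$ accepts iff the input is well-formed and the value is $\TRUE$. The initial and final blocks have constant depth, the looped block is repeated $\bigo(\log(\idxn))$ times, and no padding tokens are used, so $\bfvp \in \ahat^1_0$.

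The main obstacle is carrying out the parse under the combined constraints of constant depth, strict causal masking, and no padding: in particular, pinning down each subformula's main connective and operand closers using only leftward attention, and arranging every argmax so that its maximizer is unique (otherwise average-hard attention would blend several tokens and corrupt the pointers). A secondary point of care is that \cref{lem:reach} is stated for $\land$ and $\lor$ nodes only, whereas $\bfvp$ formulas also contain the unary connective $\lnot$; handling it requires the minor extension of allowing a unary node whose conditional function is $x \mapsto \lnot x$, which the pebble game accommodates with no structural change.
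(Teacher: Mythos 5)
Your proof is correct and shares the paper's skeleton---a constant-depth preprocessing stage that writes leaf values, node types, and argument pointers into the residual streams, followed by an invocation of \cref{lem:reach} to evaluate the resulting expression tree in $\bigo(\log(\idxn))$ looped layers with no padding---but the preprocessing stage is genuinely different. The paper assumes the formula is given in \emph{postfix} notation, which eliminates parentheses entirely: the argument positions of each operator are then definable by a short \textsc{C-RASP} program (via prefix-counting of operands minus operators), and the inclusion $\textsc{C-RASP} \subseteq \ahat^0_0$ is invoked as a black box to realize both the well-formedness check and the \textsc{argument} predicate in constant depth. You instead parse fully parenthesized \emph{infix} formulas directly, via bracket-depth counting, rightmost-match attention to locate enclosing open brackets, and identification of each subformula's main connective; this covers the notation the paper's experiments actually emphasize as ``standard,'' at the cost of having to argue argmax uniqueness for each hard-attention head and to handle well-formedness of bracketed expressions, which you treat somewhat tersely (as does the paper for its own check). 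Note that ill-formed inputs may corrupt your pointers, but this is harmless since the independent well-formedness flag gates acceptance. One point where you are more careful than the paper: both constructions feed $\lnot$ nodes into \cref{lem:reach}, which is stated only for $\land$/$\lor$; you explicitly observe that the pebble game extends because the set of conditional functions $\set{\text{identity}, \lnot, \text{const-}\TRUE, \text{const-}\FALSE}$ is closed under composition, whereas the paper leaves this implicit.
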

Precisely, \cref{lem:postfixencode} shows how to correctly encode the arguments of each operator in an input Boolean formula in postfix notation.
\cref{lem:pebble} then shows how to correctly evaluate a formula whose binary expression tree is properly encoded in a transformer---a gadget we leverage to show how transformers can recognize $\ucfl$s with $\bigo(\log^2(\idxn))$ looping layers and $\bigo(\idxn^3)$ padding.
\ucfltoahat*
\begin{proof}[Proof intuition]
    The construction implements the recognition procedure of \cref{subsec:unambiguousalgo} as an $\mahat$, maintaining the marked set $\items_t$ across $\bigo(\log \idxn)$ outer iterations.
    Each item $(\idxi, \NT{A}, \idxj]$ is assigned a padding symbol that stores a three-valued realizability bit in $\set{\true,\false,\bot}$ (which can be represented in a transformer by three distinct integers) tracking whether $(\idxi, \NT{A}, \idxj] \in \items_t$; there are $\bigo(\idxn^2)$ such item-paddings.
    For every candidate edge in \cref{eq:depgraph}---of which there are $\bigo(\idxn^3)$, summed over the right-witness pairs $(\idxi, \NT{A}, \idxj], (\idxi, \NT{B}, \idxk]$ and their left-witness counterparts---we allocate one edge-padding symbol.
    
    Before the outer loop, a constant-depth block initializes the item-paddings for the length-$1$ items $(\idxi-1, \NT{A}, \idxi]$ to $\true$ or $\false$ according to whether $\production{\NT{A}}{\sym_\idxi} \in \rules$, realizing $\items_0$; all other item-paddings start at $\bot$.
    
    Each outer iteration $t$ then runs four constant-depth blocks that implement \cref{eq:depgraph,eq:marked-set}.
    \emph{(1) Build $\edges_t$.}
    Each edge-padding for a pair $(\idxi, \NT{A}, \idxj], (\idxi, \NT{B}, \idxk]$ enumerates the $\bigo(|\nonterm|)$ candidate witnesses $(\idxk, \NT{C}, \idxj]$ with $\production{\NT{A}}{\NT{B}\NT{C}} \in \rules$ via a feedforward network, attends to their item-paddings to read off realizability, and flips its edge bit on when some witness lies in $\items_{t-1}$; the left-witness family is constructed symmetrically.
    \emph{(2) Binarize.}
    By unambiguity (\cref{fact:upprop}), the reachable subgraph $\reachable_t(\itm)$ from each item is an arborescence, so reachability on $(\items, \edges_t)$ can be reduced to evaluating a Boolean formula on a binary tree \citep{chytil-unambiguous}.
    We apply a constant-depth graph transform $\gtransform$ that replaces every fan-out of size $M \ge 3$ at an item $\itm$ by a right-deep chain of $M - 2$ fresh intermediary padding symbols indexed by $(\itm, i)$, yielding a binary graph with the same reachability structure; the $(\itm, i)$ encoding and the resulting $\fargL, \fargR$ pointers are then computed and added to the residual stream.
    \emph{(3) Evaluate reachability.}
    Treating every internal node of the binarized graph as an $\lor$-gate and every leaf as $\TRUE$ iff the underlying item is in $\items_{t-1}$, a single parallel pebble-game pass (\cref{lem:pebble}) evaluates the formula $\bigvee_{\itm' \in \reachable_t(\itm)} \ind{\itm' \in \items_{t-1}}$ at every item $\itm$ simultaneously in $\bigo(\log \idxn)$ layers.
    \emph{(4) Write back.}
    Each item-padding reads its own root value and updates its realizability bit, implementing \cref{eq:marked-set}.
    
    After $\bigo(\log \idxn)$ outer iterations $\items_t$ reaches its fixpoint $\items_*$ \citep{chytil-unambiguous}, and $\ind{\str \in \lang(\grammar)}$ can be read off the item-padding for $(0, \start, \idxn]$.
    The total depth is $\bigo(\log \idxn) \cdot \bigo(\log \idxn) = \bigo(\log^2(\idxn))$, and the padding budget is $\bigo(\idxn^3)$, dominated by the per-edge and binarization-intermediary symbols.\looseness=-1
\end{proof}

\begin{table*}[t]
    \small
    \centering
    \begin{tabular}{lccc}
        \toprule
        Language & Fixed-depth transformer & $\bigo(\log(\idxn))$ looping & $\bigo(\log(\idxn))$ looping and $\bigo(\idxn)$ padding \\
        \midrule
        Balanced Counting        & 90 & \textbf{94} & 93 \\
        $\dyck{1}$               & 85 & \textbf{86} & \textbf{86} \\
        $\dyck{2}$               & 83 & 84 & \textbf{87} \\
        Palindrome               & 68 & 67  & \textbf{72} \\
        $\bfvp$ (infix)          & 80 & 78 & \textbf{81} \\
        $\bfvp$ (postfix)        & 67 & \textbf{75} & \textbf{75} \\
        \bottomrule
    \end{tabular}
    \caption{
    Maximum accuracy (out of 100) across 5 random seeds of different transformer variants
    (fixed-depth, $\log(\idxn)$ looped and unpadded, $\log(\idxn)$ looped and
    $\bigo(\idxn)$ padded) across a range of context-free languages.
    The highest score per row are highlighted.
    }
    \label{fig:resultsexp}
\end{table*}

\subsection{Parsing Linear Unambiguous $\cfl$s}\label{subsec:linearalgo}
Finally, we show how \emph{linearity} further reduces the resources needed to recognize unambiguous $\cfl$s.
A \defn{linear} $\cfl$ is one recognized by a CFG where each rule is of the form $\production{\NTA}{\syma \NTB}$, $\production{\NTA}{\NTB \syma}$, or $\production{\NTA}{\syma}$.
While restricted, linear $\cfl$s capture a wide range of features of context-freeness.
For example, \emph{balanced counting} can be modeled by the linear $\cfl$ $\{\syma^\idxn \symb^\idxn \mid \idxn \geq 0\}$, and \emph{symmetry} can be modeled by the linear $\cfl$ $\{\str \reverseFun{\str} \mid \str \in \kleene{\alphabet}\}$.

We consider unambiguous linear\footnote{A $\cfl$ can be induced by both a non-linear unambiguous grammar and by a different linear, ambiguous grammar. We consider grammars that are \emph{simultaneously} linear and unambiguous.} $\cfl$s ($\ulcfl$s).
Linearity simplifies the dependency graphs $(\items, \edges_t)$ of \cref{eq:depgraph}: solving an item $\itm = (\idxi, \NTA, \idxj] \in \items$ via a rule $\production{\NTA}{\syma \NTB}$ reduces to the single sub-item $(\idxi+1, \NTB, \idxj]$ (or symmetrically $(\idxi, \NTB, \idxj-1]$), so each item has only a \emph{constant} number of outgoing edges in $\edges_t$ proportional to $|\rules|$. 
Moreover, because each binary combination $\sR(\itm_1, \itm_2, \itm_3)$ in a linear grammar pairs one non-terminal sub-item with a length-$1$ terminal sibling, every witness $\itm_2$ required by \cref{eq:depgraph} already lies in $\items_0$. Hence every edge that ever appears in any $\edges_t$ is already present in $\edges_1$, and reachability to $\items_0$ in $(\items, \edges_1)$ certifies realizability directly:
\begin{proposition}\label{prop:ucflimprovement}
    If $\lang \in \ulcfl$, then the growing set of items $\items_1,\ldots,\items_*$ satisfy $\items_1 = \items_*$.
\end{proposition}
Therefore, a single reachability pass suffices and no outer iteration is needed.

\subsection{Linear Unambiguous $\cfl$s on Transformers}\label{subsec:linearahat}
Combining \cref{prop:ucflimprovement} with \cref{thm:ucfl2ahat} reduces both the time and space complexities:
\begin{restatable}{theorem}{ulcfltoahat}\label{thm:ulcfl2ahat}
    $\ulcfl \subseteq \mahat^{1}_{2} \subseteq \ahat^1_3$.
\end{restatable}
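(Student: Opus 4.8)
The plan is to recast recognition of an unambiguous linear $\cfl$ as a single reachability query on a \emph{static} dependency graph, and then to evaluate that query using \cref{lem:reach}. Fix a CFG $\grammar$ in the linear normal form where every rule has the form $\production{\NTA}{\syma \NTB}$, $\production{\NTA}{\NTB \syma}$, or $\production{\NTA}{\syma}$ (the empty string, if in $\lang(\grammar)$, is dispatched by a trivial pre-check on whether the input equals $\bos\eos$), and assume as usual that $\grammar$ is reduced. As in \cref{sec:cfl2ahat}, I work with items $[\NTA, \idxi, \idxj]$ for $1 \le \idxi \le \idxj \le \idxn$, where $[\NTA,\idxi,\idxj]$ is realizable iff $\NTA \derives \sym_\idxi \cdots \sym_\idxj$. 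Because every rule has at most one nonterminal on its right-hand side, a derivation from $\NTA$ is a \emph{chain} rather than a tree, so realizability obeys a recursion with neither split indices nor witnesses: $[\NTA,\idxi,\idxi]$ is realizable iff $\production{\NTA}{\sym_\idxi} \in \rules$; and for $\idxi < \idxj$, $[\NTA,\idxi,\idxj]$ is realizable iff some rule $\production{\NTA}{\sym_\idxi \NTB}$ makes $[\NTB,\idxi+1,\idxj]$ realizable, or some rule $\production{\NTA}{\NTB \sym_\idxj}$ makes $[\NTB,\idxi,\idxj-1]$ realizable. Hence $\str \in \lang(\grammar)$ iff, in the digraph $\graph$ on the $\bigo(|\nonterm|\idxn^2)=\bigo(\idxn^2)$ items whose out-edges are exactly these $\bigo(|\rules|)=\bigo(1)$ one-step rule applications, the node $[\start,1,\idxn]$ reaches some base node $[\NTA,p,p]$ with $\production{\NTA}{\sym_p} \in \rules$. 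Crucially, unlike the non-linear case of \cref{thm:ucfl2ahat}, $\graph$ does \emph{not} depend on an incrementally grown marked set (no analogue of \cref{alg:chytil}'s outer loop is needed) and its edges are not parameterized by a split index — this is exactly what drops the looping from $\log^2$ to $\log$ and the padding exponent from $3$ to $2$.

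Next I would exploit unambiguity to turn this reachability query into a Boolean formula that \cref{lem:reach} can evaluate. By an argument analogous to the one behind \cref{fact:upprop}: if there were two distinct directed paths in $\graph$ from $[\start,1,\idxn]$ to a realizable node $[\NTB,\idxi,\idxj]$, then each path is a partial derivation $\start \derives \sym_1 \cdots \NTB \cdots \sym_\idxn$, and grafting a fixed derivation $\NTB \derives \sym_\idxi \cdots \sym_\idxj$ onto each would produce two derivations of $\str$, contradicting unambiguity; non-realizable nodes contribute $\FALSE$ and are treated as dead-end leaves exactly as in the proof of \cref{thm:ucfl2ahat}. Therefore the subgraph of $\graph$ reachable from $[\start,1,\idxn]$ is a tree rooted at $[\start,1,\idxn]$ with $\bigo(\idxn^2)$ nodes, where interior nodes compute $\lor$ over their constantly many successors, realizable base nodes are $\TRUE$ leaves, and all other sinks are $\FALSE$ leaves. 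Binarizing each $\bigo(1)$-fan-in $\lor$ node into a balanced binary tree of $\lor$'s of depth $\bigo(\log|\rules|)=\bigo(1)$ keeps the total at $\bigo(\idxn^2)$ nodes and puts the instance into the format required by \cref{lem:reach}.

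The transformer construction then mirrors those of \cref{thm:cfl2ahat,thm:ucfl2ahat}. Append $\bigo(\idxn^2)$ padding tokens, one per item (plus the $\bigo(1)$ binarization dummies per item). Using log-precision index arithmetic and the equality-matched-attention gadgets available to $\ahat$s, token $[\NTA,\idxi,\idxj]$ stores $\idxi,\idxj$ and, in the initial block, reads $\sym_\idxi,\sym_\idxj$, decides from the fixed grammar which rules apply, sets its node type to $\lor$, and installs pointers to its successor tokens $[\NTB,\idxi+1,\idxj]$ and $[\NTB,\idxi,\idxj-1]$; base-case tokens $[\NTA,\idxi,\idxi]$ instead set their value to $\TRUE$ or $\FALSE$ depending on whether $\production{\NTA}{\sym_\idxi} \in \rules$. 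The looped block is a single invocation of \cref{lem:reach}, which evaluates the formula rooted at $[\start,1,\idxn]$ in $\bigo(\log(\idxn))$ loop iterations; the final block routes the value stored at the $[\start,1,\idxn]$ token into $\vx_{\eos}^{\nLayers}$, to which the linear classifier is applied. This shows $\ulcfl \subseteq \mahat^{1}_{2}$, and \cref{unmask2mask} with $\idxd = 1$, $\idxk = 2$ gives $\mahat^{1}_{2} \subseteq \ahat^{1}_{3}$.

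The step I expect to be the main obstacle is the same one that arises in \cref{thm:ucfl2ahat}: cleanly justifying that, once non-realizable (dead-end) nodes are accounted for, the reachable subgraph is genuinely a tree so that \cref{lem:reach} applies verbatim, and verifying that binarization plus the pointer bookkeeping keep the padding at $\bigo(\idxn^2)$ and stay within the log-precision regime. The conceptual crux — and the reason this bound improves on \cref{thm:ucfl2ahat} — is simply the observation that linearity makes the dependency graph \emph{static}, eliminating both the outer $\bigo(\log(\idxn))$ loop of \cref{alg:chytil} and the per-item $\bigo(\idxn)$ family of edge tokens.
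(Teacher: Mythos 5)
Your proposal is correct and follows essentially the same route as the paper: the same two observations (constant out-degree of item nodes under linearity, giving $\bigo(\idxn^2)$ padding, and the fact that every linear rule consumes a terminal, so the dependency graph is static and a single $\bigo(\log\idxn)$ reachability/formula-evaluation phase via \cref{lem:reach} suffices instead of \cref{alg:chytil}'s outer loop), followed by \cref{unmask2mask}. The only cosmetic difference is that you re-derive the tree property of the reachable subgraph directly from unambiguity rather than citing \cref{fact:upprop}, which is fine.
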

See \cref{app:ucflproofs} for full proofs.

\section{Experiments}\label{sec:experiments}
We conduct experiments to elicit the impact of looping and padding when recognizing $\cfl$s, and provide more details on our experimental setup in \cref{app:expsetup}.
We train transformer classifiers on CFLs of varying degrees of complexity.
\begin{itemize}[noitemsep,topsep=0pt,leftmargin=15pt,parsep=0pt,partopsep=0pt]
    \item \textbf{Balanced Counting}: the language of strings with some number of $\syma$s followed by the same number of $\symb$s, defined as $\lang = \{\syma^\idxn \symb^\idxn \mid \idxn \geq 0\}$ with $\alphabet = \set{\syma, \symb}$. 
    This $\cfl$ is linear and deterministic. Log-precision transformers succeed theoretically \citep{yang2024countingliketransformerscompiling} and empirically \citep{bhattamishra2020abilitylimitationstransformersrecognize} on this $\cfl$.
    \item \textbf{Dyck}: the language of nested strings of parentheses of $k$ types, which we denote by $\dyck{\idxk}$. 
    We consider $\dyck{1}$ and $\dyck{2}$. 
    This $\cfl$ is non-linear and deterministic. Fixed-depth transformers with log-precision can recognize $\dyck{\idxk}$ for any $\idxk$ \citep{hayakawa2024theoreticalanalysishierarchicallanguage}.
    Transformers empirically succeed on $\dyck{1}$ \citep{bhattamishra2020abilitylimitationstransformersrecognize} but struggle on $\mathrm{Dyck}$  languages with more than 1 bracket type \citep{ebrahimi-etal-2020-self}.
    \item \textbf{Palindrome}: the language $\lang = \set{\str \reverseFun{\str} \mid \str \in \kleene{\alphabet}}$ for some alphabet $\alphabet$. 
    We focus on a binary alphabet.
    This $\cfl$ is linear unambiguous and non-deterministic. Fixed-depth transformers with hard attention and unbounded precision can recognize this language \citep{hao-etal-2022-formal}. Empirically, transformers struggle to recognize Palindrome \citep{butoi2025trainingneuralnetworksrecognizers}.
    \item \textbf{Boolean formula value problem} ($\bfvp$): the set of variable-free Boolean formulas that evaluate to $\TRUE$. 
    We consider formulas in both the standard \emph{infix} notation (e.g., $1 \lor 0$ is in infix notation) and \emph{postfix} notation (e.g., $1$ $0$ $\lor$ is in postfix notation). 
    Parallel algorithms for $\bfvp$ typically rely on postfix notation \citep{buss-bfvp, buss-optimalparallel}.
    We have proven (\cref{corr:bfvp}) that $\log$-depth and no padding suffice to recognize this language in postfix notation with causally-masked transformers.
\end{itemize}
These languages vary in complexity, allowing us to test transformers' ability to learn CFL recognition constructions when they are extended with looping and padding.
In particular, Balanced Counting and $\dyck{1}$ are in \defn{C-RASP} --- a class that describes what languages transformers should generalize on \citep{yang2024countingliketransformerscompiling, huang2025a}.
On the other hand, $\dyck{2}$, Palindrome and $\bfvp$ are not in C-RASP.
Importantly, $\bfvp$ requires growing depth (i.e., $\log$-depth), assuming $\tc{0} \neq \nc{1}$.
\cref{corr:bfvp} predicts that $\bfvp$ in postfix notation should be expressible with logarithmic looping and no padding.
Our results are presented in \cref{fig:resultsexp}.

\textbf{Results.}
For Balanced Counting---a C-RASP language on which fixed-depth transformers already achieve 90\% accuracy,
we find that logarithmic looping increases performance by 4\%.
On the other hand, the performance gap between fixed- and looped-transformers on the C-RASP language $\dyck{1}$ is negligible.
Interestingly, there is a larger performance discrepancy between looped, padded transformers and fixed-depth ones on $\dyck{2}$---a language outside of C-RASP \citep{hu-circuitschomskyprepretrainingformal}.
For Palindrome, looping and padding together outperform both the vanilla and looping-only baseline by 4\%. 
For the infix variant of $\bfvp$, looped transformers slightly underperform the fixed-depth baseline, and looped, padded transformers slightly overperform.
However, we notice the largest accuracy gains in our experiments on \emph{postfix} $\bfvp$: looped transformers (with or without padding) outperform fixed-depth ones by 8\%. 
This result agrees with \cref{corr:bfvp}, which states that looped transformers without padding can express postfix $\bfvp$.
Overall, we find that looping and padding offer moderate performance gains over fixed-depth transformers on a range of $\cfl$s, with the largest gains on the theoretically motivated postfix $\bfvp$ language.

\section{Discussion}
\paragraph{Transformers and parallel parsing.}
Our work provides a theoretical framework for understanding how transformers can internally process syntax by formulating parsing as a \emph{parallel} procedure implementable by looped- and padded-transformers (\cref{sec:generalcfl}, \cref{sec:subclasses}).
Interestingly, transformers \emph{in practice} seem to implement some form of parallel parsing. 
\citet{schulz2025unravelingsyntaxlanguagemodels} show that transformers parse by learning sub-grammars---grammars that generate \emph{substrings} of the original grammar---in parallel.
\citet{allen-zhu2025physics} show via probing that transformers simulate a dynamic program that manipulates items of the form $[\idxi, \NT{X}, \idxj]$ and that they implement \emph{memory reads} across positions to combine the solutions of items.
While they state that such an algorithm can be naively implemented in polynomial time, our constructions leverage transformers' inherent parallelism to show it can be implemented exponentially faster (i.e., in logarithmic time).
Finally, \citet{zhao-etal-2023-transformers} show with probing that transformers encode the syntactic information relevant for the \emph{Inside-outside algorithm}, which computes string probabilities under \emph{probabilistic} CFGs.
Interestingly, the Inside-outside algorithm can also be formulated as the computation of the weights of items and slashed items. 
Altogether, while several interpretability results hint at which syntactic information may be used by transformers for parsing, \cref{thm:cfl2ahat} unifies these results by providing an exact construction of how looped transformers can exactly implement parallel parsing.

\paragraph{Learnability.}
Expressivity alone does not capture transformers' empirical capabilities.
A function that is expressible may not be easily learnable \citep{hahn-rofin-2024-sensitive}, and the gap between expressivity and learnability has been documented empirically for both recognition \citep{butoi2025trainingneuralnetworksrecognizers} and probabilistic modeling of formal languages \citep{borenstein-etal-2024-languages}.
Although our work provides a framework for understanding how transformers can express a $\cfl$, better understanding the ability of a transformer to \emph{learn} a $\cfl$ through training dynamics \citep{huang2025how}, loss landscape analysis \citep{hahn-rofin-2024-sensitive}, or scaling laws \citep{merrill2026olmohybridtheorypractice} could provide more complete understanding of how transformers process syntax. 

\section{Conclusion}
While it was known that logarithmic-depth transformers could recognize regular languages via parallel simulation of automata \citep{liu2023transformerslearnshortcutsautomata, li2025how}, our results generalize this to show that logarithmic depth suffices for recognizing $\cfl$s as well in the presence of padding symbols.
This means that transformers can recognize an arbitrary $\cfl$ with exponentially faster sequential runtime compared to serial parsing algorithms such as CKY.
However, such a parallel procedure requires a large space overhead of padding symbols.
Fortunately, we show that grammar properties such as unambiguity can reduce the amount of space required for recognition. 
We also show that unpadded transformers with logarithmic looping can evaluate any Boolean formula. 
Empirically, we found that looping and padding provide modest benefits to transformers on certain $\cfl$ recognition tasks, leaving open a more extensive empirical study across $\cfl$s and padding budgets to future work.

\section*{Impact Statement}
This paper presents work whose goal is to advance the field of machine learning. There are many potential societal consequences of our work, none of which we feel must be specifically highlighted here.

\section*{Acknowledgments}
We thank attendees of the Formal Languages and Neural Networks (FLaNN) seminar for insightful discussions about this work.
Anej Svete is supported by an ETH AI Center Doctoral Fellowship.
William Merrill was supported by a Two Sigma PhD fellowship, an NSF Graduate Research Fellowship, and the Allen Institute for AI.
We used generative AI to
improve our writing.
Every modification introduced by generative AI
was carefully reviewed by the authors, who take
full responsibility for it.

\bibliography{custom}
\bibliographystyle{icml2026}

\newpage
\appendix
\onecolumn


\newpage

\section{Transformer Models: Details}\label{app:extendedbackground}
We restate our definition of a transformer, and formalize each of its components.
\ahatdef*
\textbf{Marker positions.}
We recall that the input string is augmented with both the \underline{b}eginning-\underline{o}f-\underline{s}equence ($\bos$) and \underline{e}nd-\underline{o}f-\underline{s}equence ($\eos$) symbols.
We leverage these positions as \emph{marker} positions.
$\bos$ can be leveraged as an anchor position 
that every other position can always attend to, which for instance is useful to compute position encodings \citep{merrill2024expressivepowertransformerschain}.
On the other hand, because $\eos$ can attend to any string position throughout the forward pass regardless of the masking used, it is common to use $\eos$'s final representation to classify a string.

\textbf{Embedding layer.}
The \defn{input embedding} $\embedFunc \colon \kleene{\alphabet} \to \kleene{\mleft(\Q^\hiddDim\mright)}$ applies an injective position-wise embedding function to each symbol in the input string $\str$, and is therefore a length-preserving function.
For example $\embedFunc$ could be a non-parametrized function such as a one-hot embedding, or could apply a linear mapping $\alphabet \to \Q^\hiddDim$ via a weight matrix $\mW\in\Q^{|\alphabet|\times\hiddDim}$ to every symbol.
We use the latter in our implementation of the transformer in \cref{sec:experiments}.

\textbf{Transformer layers.}
$\tflayer^{(\layerIdx)}$ for $\layerIdx \in (0,\nLayers]$ denotes a \defn{transformer layer}---a mapping $ \tflayer^{(\layerIdx)} \colon \kleene{\mleft(\Q^\hiddDim\mright)} \to \kleene{\mleft(\Q^\hiddDim\mright)}$ that updates the symbol representations. 
The components of a transformer layer are the \defn{layer normalization} $\layerNorm$, the \defn{attention layer} $\attnfunc^{(\layerIdx)}$ and the \defn{feedforward network} $\ffn^{(\layerIdx)}$. Concretely:
\begin{equation}
    \tflayer^{(\layerIdx)} \defeq \ffn^{(\layerIdx)} \circ \attnfunc^{(\layerIdx)} \circ \layerNorm^{(\layerIdx)}
\end{equation}
We recall layer-normalization maps a vector $\vx \in \Q^\idxn$ of some dimension $\idxn$ to $\frac{\vx'}{\norm{\vx'}}$ where $\vx' \defeq \vx - \frac{\sum_{\evx_\idxi \in \vx} \evx_\idxi}{\idxn}$. 
We assume \defn{multi-pre-norm} \citep{merrill2024expressivepowertransformerschain}. 
In standard pre-norm, we apply a layer-normalization to the entire hidden state of each symbol.
With multi-pre-norm, we allow each sublayer to take $k$ different projections of its input, apply layer-norm to each and concatenate. 
Crucially, multi-pre-norm allows us to partition the hidden state and normalize disjoint subsets thereof, which we will rely on in our proofs.

$\ffn^{(\layerIdx)} \colon \kleene{\mleft(\Q^\hiddDim\mright)} \to \kleene{\mleft(\Q^\hiddDim\mright)}$ is a position-wise function that applies the same feedforward network to every symbol of the sequence. It is parametrized by weight matrices of the form $\mW \in \Q^{\idxm \times \hiddDim}$ and $\mU \in \Q^{\hiddDim \times \idxm}$. A feedforward network $\ffn^{(\layerIdx)}$ can nest functions of the form $\mU\ReLU(\mW\vz)$ where $\vz \in \Q^\hiddDim$ is an intermediate value.

The \defn{attention mechanism} is defined by the function $\attnfunc^{(\layerIdx)} \colon \kleene{\mleft(\Q^\hiddDim\mright)} \to \kleene{\mleft(\Q^\hiddDim\mright)}$. We denote by $\vk_{\idxi}^{(\layerIdx)}$, $\vq_{\idxi}^{(\layerIdx)}$, $\vv_{\idxi}^{(\layerIdx)}$ the key, query and value vectors, respectively, for symbol $\idxi$ at layer $\layerIdx$. 
$\attnfunc^{(\layerIdx)}$ is defined as follows: 
\begin{subequations}
    \begin{align}
        \attnfunc^{(\layerIdx)}((\evx_1, \cdots, \evx_{\strlen})) & \defeq (\evy_1, \cdots, \evy_{\strlen}) \\
        \evy_\idxi & \defeq \evx_\idxi + \sum_{\idxi' \in \maskFun{\idxi}} \evs_{\idxi'} \vv_{\idxi'}^{(\layerIdx)} \\ 
        \evs & = \proj(\set{\scorefunc(\vk_{\idxi'}^{(\layerIdx)}, \vq_{\idxi}^{(\layerIdx)})})
    \end{align}
\end{subequations}
$\maskFun{\idxi}$ is a set that defines the \defn{masking} used by the transformer. 
For instance, $\maskFun{\idxi} = \set{\idxi' \mid \idxi' < \idxi}$ refers to strict causal masking and $\maskFun{\idxi} = (0,\idxn]$ refers to no masking. 
$\scorefunc$ is a scoring function that maps two vectors of the same size to a scalar. 
Typically, the dot-product score is used with $\scorefunc(\evx_1, \evx_2) \defeq \langle \evx_1, \evx_2 \rangle$. 

Throughout layers, the hidden state $\evy_\idxi$ of a symbol at position $\idxi$ continuously evolves as it cumulatively adds up the outputs of the attention mechanism. We call this cumulative sum $\evy_\idxi$ over layers the \defn{residual stream} at $\idxi$.

$\proj$ is a projection function that normalizes the scores into weights for the symbol values.
Following previous work, we assume an \defn{averaging hard attention} transformer ($\ahat$), which concentrates the attention weights on the symbols that maximize the attention score \citep{merrill-etal-2022-saturated, strobl2023averagehardattentiontransformersconstantdepth}.
Formally, we have $\proj = \hardmaxAvg$:
\begin{definition}  \label{def:hard-attention}
    \defn{Averaging hard attention} is computed with the $\hardmaxAvg$ projection function:
    \begin{equation}
        \hardmaxAvg\left(\vx\right)_\idxd \defeq \begin{cases}
            \frac{1}{m} &\ifcondition \idxd\in \argmax\left(\vx\right) \\
            0 &\otherwisecondition
        \end{cases}
    \end{equation}
    for $\idxd \in (0,\hiddDim]$, where $\vx \in \Q^\hiddDim$ and $m \defeq |\argmax\left(\vx\right)|$ is the cardinality of the argmax set.
\end{definition}

\textbf{Language recognition.}
The transformer layers are vector-valued functions by definition.
To treat a transformer as a language recognizer of signature $\kleene{\alphabet} \to \set{0,1}$, we use the final representations computed by a transformer for binary classification of strings. 
We denote by $\vx_{\eos}^{\nLayers}$ the hidden state of $\eos$ after passing it through the $\nLayers$ transformer layers. 
Typically, string recognition is based on $\vx_{\eos}^{\nLayers}$ as $\eos$ is the only symbol that can attend to every other position, regardless of the masking used.
This allows us to define a transformer's language based on a linear classifier $\funcc$.
\begin{equation}
    \tf(\str)=\funcc(\vx_{\eos}^{\nLayers}) \defeq \ind{\sigmoid(\vtheta^\top \vx_{\eos}^{\nLayers}) > 0.5}
\end{equation}
Where $\sigmoid$ is the sigmoid activation function.

\textbf{Precision.}
Following previous work \citep{merrill2025exactexpressivepowertransformers, merrill2024expressivepowertransformerschain, merrill-logic}, we assume $\log$-precision transformers, i.e., we allow the transformer to manipulate values that can be represented with $\bigo(\log(\idxn))$ bits for an input of length $\idxn$. It is a minimally extended idealization that enables the transformer to store indices and perform sums over an unbounded number of symbols, two crucial capabilities for our constructions.

\paragraph{Layer-norm hash.}
We will often use the \defn{layer-norm hash} building block \citep{merrill2024expressivepowertransformerschain}. 
It is particularly useful for equality checks between values across different symbols, especially with a potentially unbounded number of queries and keys.
\begin{definition}[\citealp{merrill2024expressivepowertransformerschain}]\label{def:layernormhash}
    Given a scalar $\evz \in \mathbb{R}$, its \defn{layer-norm hash} is $\phi(\evz) \defeq \langle \evz, 1, -\evz, -1 \rangle / \sqrt{\evz^2+1}$.
\end{definition}
Layer-norm hash is scale invariant, and $\phi(q) \cdot \phi(k) = 1$ iff $q = k$. In other words, the inner product of scalars $q$ and $k$, even if computed at different positions $i$ and $j$, respectively, allows us to check for the equality of $q$ and $k$. Layer-norm hash therefore allows us to perform equality checks over elements of residual streams at different positions.

\section{Proofs}\label{app:constructionsproofs}
\subsection{Parallel Parsing}\label{app:proofsparsing}
\cflalgocorrectness*
\begin{proof}
    We prove the stronger claim
    \begin{equation*}
        (\star)\quad \recAlgo(\itm) = \TRUE \;\Longleftrightarrow\; \itm \text{ is realizable, for every } \itm \in \items \cup \slasheditems,
    \end{equation*}
    where $\items$ and $\slasheditems$ are the set of possible items and slashed items (respectively) given $\str$ and $\grammar$. The theorem is the special case $\itm = (0, \start, \idxn]$, since $(0, \start, \idxn]$ is realizable iff $\start \derives \str$, iff $\str \in \lang(\grammar)$.
    
    \textbf{Induction measure.} For $\itm \in \items \cup \slasheditems$, define
    \begin{equation*}
        \inductionmeasure(\itm) \defeq \begin{cases}
            \idxj - \idxi & \text{if } \itm = (\idxi, A, \idxj], \\
            (\idxj - \idxi) - (q - p) & \text{if } \itm = (\idxi, X, \idxj] / (p, Y, q].
        \end{cases}
    \end{equation*}
    Equivalently, $\inductionmeasure(\itm)$ counts the input positions $\itm$'s outer non-terminal must derive (for a slashed item, the inner placeholder is excluded). By the validity constraints on (slashed) items, $\inductionmeasure(\itm) \ge 1$. We prove $(\star)$ by strong induction on $\inductionmeasure$, treating items and slashed items in parallel at each value of $\inductionmeasure$.
    
    \textbf{Base case ($\inductionmeasure(\itm) = 1$).}
    
    \emph{Item.} For $\itm = (\idxi, A, \idxj]$ with $\idxj = \idxi + 1$, the substring $\str_{(\idxi, \idxj]} = \strSym_\idxj$ has length one, and $\recAlgo$ returns the indicator $\ind{\production{A}{\strSym_\idxj} \in \rules}$. On the realizability side, the only way to obtain $A \derives \strSym_\idxj$ in CNF is via a single application of the unit rule $\production{A}{\strSym_\idxj}$, so realizability is equivalent to $\production{A}{\strSym_\idxj} \in \rules$. Algorithm and realizability agree.
    
    \emph{Slashed item.} For $\itm = (\idxi, X, \idxj] / (p, Y, q]$ with $(\idxj - \idxi) - (q - p) = 1$, exactly one position of $(\idxi, \idxj]$ lies outside $(p, q]$. 
    Either $p = \idxi$ and $q = \idxj - 1$ (the outer position is $\idxj$) or $p = \idxi + 1$ and $q = \idxj$ (the outer position is $\idxi + 1$). 
    For $p = \idxi$ and $q = \idxj - 1$, $\recAlgo$ returns:
    \begin{equation}\label{eq:slasheditem1}
        \bigvee_{\NT{Z} \in \nonterm} \production{X}{YZ} \in \rules \land \production{Z}{\strSym_\idxj} \in \rules
    \end{equation}
    In the symmetric case where $p = \idxi + 1$ and $q = \idxj$, $\recAlgo$ returns:
    \begin{equation}\label{eq:slasheditem2}
        \bigvee_{\NT{Z} \in \nonterm}\production{X}{ZY} \in \rules \land \production{Z}{\strSym_{\idxi+1}} \in \rules
    \end{equation}
    Realizability requires $X \derives \str_{(\idxi, p]} Y \str_{(q, \idxj]}$. 
    In the right-end case ($p = \idxi$, $q = \idxj{-}1$) this reduces to $X \derives Y \strSym_\idxj$, achievable in CNF only via $\production{X}{YZ}$ followed by $\production{Z}{\strSym_\idxj}$ for some $Z$ --- exactly \cref{eq:slasheditem1}. The left-end case yields \cref{eq:slasheditem2}.
    
    \textbf{Inductive step ($\inductionmeasure(\itm) = m \ge 2$).} Assume $(\star)$ holds for every $\itm'$ with $\inductionmeasure(\itm') < m$.
    
    \emph{Case 1: item $\itm = (\idxi, A, \idxj]$ with $\inductionmeasure(\itm) = m$.}
    
    \textbf{Forward ($\Rightarrow$).} Suppose $\recAlgo(\itm) = \TRUE$. The algorithm succeeded along one of two branches, \textsc{split} or \textsc{gap}.
    \begin{itemize}[noitemsep,topsep=0pt,leftmargin=15pt,parsep=0pt,partopsep=0pt]
        \item \textsc{split.} Some rule $\production{A}{BC}$ and split $\idxk \in (\idxi, \idxj)$ satisfy $\recAlgo((\idxi, B, \idxk]) = \recAlgo((\idxk, C, \idxj]) = \TRUE$. 
        The sub-items $(\idxi, B, \idxk]$ and $(\idxk, C, \idxj]$ have as induction measure $\inductionmeasure((\idxi, B, \idxk])=\idxk - \idxi$ and $\inductionmeasure((\idxk, C, \idxj])=\idxj - \idxk$ respectively, both in $[1, m{-}1]$. 
        By the induction hypothesis, $B \derives \str_{(\idxi, \idxk]}$ and $C \derives \str_{(\idxk, \idxj]}$. By the calculus of §3, $A \derives BC \derives \str_{(\idxi, \idxk]} \str_{(\idxk, \idxj]} = \str_{(\idxi, \idxj]}$.
        \item \textsc{gap.} Some inner item $(\idxk, Y, \idxl]$ with $(\idxk,\idxl] \subsetneq (\idxi,\idxj]$ satisfies $\recAlgo((\idxi, A, \idxj] / (\idxk, Y, \idxl]) \land \recAlgo((\idxk, Y, \idxl]) = \TRUE$. 
        The sub-item has $\inductionmeasure((\idxk, Y, \idxl]) = \idxl - \idxk$, and the sub-slashed item has $\inductionmeasure((\idxi, A, \idxj] / (\idxk, Y, \idxl]) = m - (\idxl - \idxk)$; strict containment of $(\idxk, \idxl]$ in $(\idxi, \idxj]$ forces both into $[1, m{-}1]$. 
        By the induction hypothesis, $Y \derives \str_{(\idxk, \idxl]}$ and $A \derives \str_{(\idxi, \idxk]} Y \str_{(\idxl, \idxj]}$; substituting yields $A \derives \str_{(\idxi, \idxj]}$.
    \end{itemize}
    
    \textbf{Reverse ($\Leftarrow$).} Suppose $\itm$ is realizable, witnessed by a parse tree $T$. Since $m \ge 2$, the root necessarily applies some rule $\production{A}{BC}$, splitting the yield at a unique $\idxk \in (\idxi, \idxj)$ into $B \derives \str_{(\idxi, \idxk]}$ and $C \derives \str_{(\idxk, \idxj]}$. The two sub-items are realizable with $\inductionmeasure < m$; by the induction hypothesis, $\recAlgo$ returns $\TRUE$ on each, and the \textsc{split} rule of $\recAlgo(\itm)$ with $\production{A}{BC}$ and the split $\idxk$ therefore returns $\TRUE$.
    
    \emph{Case 2: slashed item $\itm = (\idxi, X, \idxj] / (p, Y, q]$ with $\inductionmeasure(\itm) = m$.}
    
    \textbf{Forward ($\Rightarrow$).} Suppose $\recAlgo(\itm) = \TRUE$. The algorithm succeeded along one of two branches, \textsc{split} or \textsc{gap}.
    \begin{itemize}[noitemsep,topsep=0pt,leftmargin=15pt,parsep=0pt,partopsep=0pt]
        \item \textsc{split.} Some rule $\production{X}{AB}$ and split $\idxk \in (\idxi, \idxj)/(p, q]$, with the inner $(p, q]$ on (WLOG) the $A$-side ($q < \idxk$), satisfy $\recAlgo((\idxi, A, \idxk] / (p, Y, q]) \land \recAlgo((\idxk, B, \idxj]) = \TRUE$. 
        The sub-slashed item has $\inductionmeasure((\idxi, A, \idxk] / (p, Y, q]) = (\idxk - \idxi) - (q - p) < m$ (since $\idxk < \idxj$). 
        The sub-item $(\idxk, B, \idxj]$ has $\inductionmeasure((\idxk, B, \idxj]) = \idxj - \idxk$; an equality $\idxj - \idxk = m$ would force $\idxk = q$ and $p = \idxi$, but then $A$ would derive $\str_{(\idxi, p]} = \varepsilon$, ruled out by CNF (which permits $\varepsilon$ only via $\production{\start}{\eps}$ and never on the right-hand side of a binary rule).
        Hence the sub-item's induction measure $\inductionmeasure$ is also $< m$. By the induction hypothesis, $A \derives \str_{(\idxi, p]} Y \str_{(q, \idxk]}$ and $B \derives \str_{(\idxk, \idxj]}$, so via $\production{X}{AB}$, $X \derives \str_{(\idxi, p]} Y \str_{(q, \idxj]}$. The symmetric case (inner on $B$'s side) is analogous.
        \item \textsc{gap.} Some item $(\idxk, Z, \idxl]$ with $(p, q] \subsetneq (\idxk, \idxl] \subsetneq (\idxi, \idxj]$ satisfies $\recAlgo((\idxi, X, \idxj] / (\idxk, Z, \idxl]) \land \recAlgo((\idxk, Z, \idxl] / (p, Y, q]) = \TRUE$. 
        The first sub-slashed has $\inductionmeasure((\idxi, X, \idxj] / (\idxk, Z, \idxl]) = m - ((\idxl - \idxk) - (q - p)) < m$ (since $\idxl - \idxk > q - p$); the second has $\inductionmeasure((\idxk, Z,\idxl] / (p, Y, q]) = (\idxl - \idxk) - (q - p) < m$ (since $(\idxk, \idxl] \subsetneq (\idxi, \idxj]$ and the inner $(p, q]$ is strictly inside). 
        By the induction hypothesis, $X \derives \str_{(\idxi, \idxk]} Z \str_{(\idxl, \idxj]}$ and $Z \derives \str_{(\idxk, p]} Y \str_{(q, \idxl]}$. Substituting the second into the first gives $X \derives \str_{(\idxi, p]} Y \str_{(q, \idxj]}$.
    \end{itemize}
    
    \textbf{Reverse ($\Leftarrow$).} Suppose $\itm$ is realizable, with parse tree $T_X$ rooted at $X$ and a distinguished leaf labeled $Y$ at yield $(p, q]$. The root of $T_X$ applies some rule $\production{X}{AB}$. The $Y$-leaf lies in $A$'s subtree or $B$'s subtree; WLOG in $A$'s, and let $\idxk$ be the boundary between $A$'s and $B$'s yields. Then $(\idxi, A, \idxk] / (p, Y, q]$ is realizable (with $\inductionmeasure = (\idxk - \idxi) - (q - p) < m$) and $(\idxk, B, \idxj]$ is realizable (with $\inductionmeasure = \idxj - \idxk < m$ by the CNF argument above). By the induction hypothesis, $\recAlgo$ returns $\TRUE$ on both, so the \textsc{split} rule of $\recAlgo(\itm)$ with $\production{X}{AB}$ and split $\idxk$ returns $\TRUE$.
    
    \textbf{Conclusion.} By strong induction, $(\star)$ holds for every $\itm \in \items \cup \slasheditems$. Specializing to $\itm = (0, \start, \idxn]$ gives $\recAlgo(0, \start, \idxn] = \TRUE \iff \str \in \lang(\grammar)$.
\end{proof}

\solvelognsteps*
\begin{proof}
    With $\str \in \lang(\grammar)$, let $T$ with root $(0, \start, \idxn]$ be a parse tree witnessing $\str$'s membership; $T$ has $M \defeq 2\idxn - 1$ items.
    $\recAlgo((0,\start,\idxn])$ runs in parallel time equal to the depth of its recursion tree; thus, it suffices to exhibit, for each subproblem, a balanced decomposition the \emph{guess} branches of $\recAlgo$ can choose; the depth of $\recAlgo$'s recursion is then bounded by the depth of this balanced decomposition.
    
    By \cref{lem:centroidvertex}, $T$ has a centroid node $\itm$ whose removal partitions $T$ into components, each of size at most $\ceil{M/2}$. Since $T$ is binary, $\itm$ has at most three neighbors (a parent and two children), so removing $\itm$ yields at most three components: the \emph{parent-side} component $P$ (empty when $\itm$ is the root) and the two child subtrees $L, R$ (the subtrees rooted at $\itm$'s left and right children).
    In the context of a parse tree, the centroid node is either the root or an internal node.
    We consider both cases.
    
    \textbf{(i) Centroid node is root.}
    The root node is associated with the item $(0, \start, \idxn]$.
    Because removing the root of the binary tree $T$ yields exactly the two children's subtrees, we can use the \textsc{split} branch to halve the tree. 
    Concretely, $\recAlgo$ breaks down $(0, \start, \idxn]$ via the \textsc{split} rule by producing two recursive calls on items $(0, \NT{X}, \idxk]$ and $(\idxk, \NT{Y}, \idxn]$ for a rule $\production{\start}{\NT{X} \NT{Y}} \in \rules$ and an index $\idxk \in (0,\idxn)$.
    Then, the parse trees associated with $(0, \NT{X}, \idxk]$ and $(\idxk, \NT{Y}, \idxn]$ have at most $M/2$ nodes by \cref{lem:centroidvertex}.
    This case advances by exactly one recursion level.
    
    \textbf{(ii) Centroid node is an internal node of $T$.}
    Every internal node of $T$ corresponds to some item $(\idxi, \NT{X}, \idxj]$.
    \cref{lem:centroidvertex} is a statement about an unrooted tree, so removing the centroid yields one component per neighbor of $\itm$; in our binary tree $T$ the internal node $\itm$ has at most three neighbors (its parent and its two children), so removal yields up to three components, each of size $\le \ceil{M/2}$.
    Once we view $T$ as rooted at $(0, \start, \idxn]$, these acquire labels---the parent-side component $P$ and the two child subtrees $L, R$---but every recursive call of $\recAlgo$ is on a rooted item, so the algorithm can only manipulate two-piece decompositions, not three. 
    We must therefore fuse Jordan's three pieces into two cuts the algorithm can perform. The \textsc{gap} rule realizes the fusion $P \;\mathrm{vs.}\; \{\itm\} \cup L \cup R$ by cutting the edge above $\itm$, but the inner side can be up to $M - 1$ nodes (if $|P| = 1$). 
    The \textsc{split} rule realizes the fusion $L \;\mathrm{vs.}\; R$ by removing $\itm$, but only upon recursing on $\itm$ itself. 
    We therefore charge \emph{two} recursion levels: a \textsc{gap} cut to isolate $\{\itm\} \cup L \cup R$ (creating $P$ and the item at $\itm$ with $|P| + 1 \le \ceil{M/2} + 1$), then a \textsc{split} cut at $\itm$ to break $\itm$'s subtree into $L$ and $R$ (both with $|L|, |R| \le \ceil{M/2}$). 
    After these two levels, the remaining subproblems are exactly Jordan's three pieces $P, L, R$, each of size at most $\ceil{M/2} + 1$.
    
    \begin{itemize}
        \item \emph{Level 1 (\textsc{gap} on $\itm$).} $\recAlgo$ dispatches via the \textsc{gap} rule with inner item $(\idxi, \NT{X}, \idxj]$ corresponding to the centroid node $\itm$, producing recursive calls on
        \begin{enumerate*}[label=\textit{(\alph*)}]
            \item the outer slashed item $(0, \start, \idxn]/(\idxi, \NT{X}, \idxj]$, whose associated tree is $P$ with $\itm$'s subtree replaced by a placeholder leaf, of size $|P| + 1 \le \ceil{M/2} + 1$; and
            \item the inner item $(\idxi, \NT{X}, \idxj]$, whose associated tree is $\{\itm\} \cup L \cup R$ of size $|L| + |R| + 1$.
        \end{enumerate*}
        The outer call is already balanced by \cref{lem:centroidvertex}, but the inner call may still be nearly all of $T$ (when $|P|$ is small), so we charge a second level to it.
        \item \emph{Level 2 (\textsc{split} on $\itm$).} Let $\production{\NT{X}}{\NT{Y}\NT{Z}}$ be the production at $\itm$ in the parse tree, and let $\idxk$ be the split index between $\itm$'s left and right children. $\recAlgo$ dispatches via the \textsc{split} rule on $(\idxi, \NT{X}, \idxj]$, producing recursive calls on $(\idxi, \NT{Y}, \idxk]$ and $(\idxk, \NT{Z}, \idxj]$, whose associated trees are exactly $L$ and $R$, both of size at most $\ceil{M/2}$ by \cref{lem:centroidvertex}.
    \end{itemize}
    After these two levels, every remaining subproblem has associated tree of size at most $\ceil{M/2} + 1$.
    
    We now iterate the above on each subproblem. Each recursive call is on either an item or a slashed item, so we must argue the centroid bound for both kinds:
    \begin{itemize}
        \item \emph{Item subproblem.} The recursive call is on an item $(\idxi', \NT{X}', \idxj']$, whose parse tree is a subtree of $T$. The case analysis above (cases (i)--(ii)) applies verbatim.
        \item \emph{Slashed-item subproblem.} The recursive call is on a slashed item $(\idxi', \NT{X}', \idxj']/(p, \NT{Y}, q]$. Associate to it the parse tree of $\NT{X}'$ over $\str_{(\idxi', \idxj']}$ with a subtree yielding $\str_{(p, q]}$ replaced by a placeholder leaf; call this $T'$. The internal nodes of $T'$ are again items, and $T'$ is binary, so \cref{lem:centroidvertex} applies to $T'$. The case split mirrors (i)--(ii) with the slashed-item rules of \cref{alg2}: if the centroid is the root of $T'$, $\recAlgo$ takes the slashed-item \textsc{split} branch and advances one level; if the centroid is an internal node of $T'$, $\recAlgo$ takes the slashed-item \textsc{gap} branch (peeling off the centroid's subtree as an inner slashed item) followed by a slashed-item \textsc{split} on the resulting subproblem, again advancing two levels and leaving every surviving subproblem of associated tree size at most $\lceil |T'|/2 \rceil + 1$.
    \end{itemize}
    A \emph{recursion level} is one step of $\recAlgo$'s call tree: every dispatch through the \textsc{split} or \textsc{gap} branch adds exactly one level (its recursive calls live one level deeper), and the parallel depth of $\recAlgo$ equals the depth of this call tree. With this counting, we call one application of either case~(i) or case~(ii) a \emph{centroid round}: case~(i) is a single \textsc{split} dispatch and so consumes one recursion level, while case~(ii) is a \textsc{gap} dispatch \emph{followed by} a \textsc{split} dispatch on the inner item it produced, and therefore consumes two.
    Define the \emph{shape size} $s(\itm)$ of a subproblem as the number of items in its associated tree (treating a placeholder leaf as one node). 
    Inductively, every centroid round shrinks the shape size by a factor of two up to an additive constant: a subproblem of shape size $m$ produces subproblems of shape size at most $\ceil{m/2} + 1$. \cref{lem:centroidvertex} bounds the surviving pieces $L, R$ by $\ceil{m/2}$ directly, and the parent-side piece $P$ by $\ceil{m/2}$ as well; the only loss is the placeholder leaf adjoined to $P$, contributing one extra node to that piece.
    Starting from $s((0,\start,\idxn]) = M = 2\idxn - 1$, the recurrence $s_{r+1} \le \ceil{s_r/2} + 1$ unrolls (geometric series) to $s_r \le M/2^r + \bigo(1)$ for every $r \ge 0$. The base case $s(\itm) = 1$ (a length-1 item or a slashed item whose outer tree is just a placeholder leaf) is handled in $\recAlgo$'s leaf branches without further recursion, so it suffices to pick the smallest $r$ for which every surviving subproblem has constant shape size, which happens at $r = \ceil{\log_{2} M} + \bigo(1)$.
    Since each round costs at most two recursion levels, $\recAlgo$'s recursion has depth at most $2\ceil{\log_2 M} + \bigo(1) \le 2\ceil{\log_{2}(2\idxn)} + \bigo(1)$, matching the bound in the theorem statement.
\end{proof}

\subsection{Encoding (Slashed) Items in Padding Symbols}
Our constructions associate padding symbols with distinct objects. 
For example, when computing the realizability of items in \cref{alg1} and \cref{alg2} on $\ahats$, we associate each padding symbol with an item. 
To enable this, we introduce a novel theoretical gadget implementable by $\ahats$ that enables a padding symbol at some position $\idxi$ to compute the encoding of its associated items from the unique position $\idxi$:
\begin{lemma}[Converting a padding-symbol position into an encoding of structured data]\label{lem:pos2enc}
    Fix a string length $\idxn$, and let
\[
    \aset \;\defeq\; \aset^{(1)} \times \aset^{(2)} \times \cdots \times \aset^{(m)}
\]
    be a Cartesian product of $m$ finite sets, where each factor $\aset^{(j)}$ is either (i) a fixed finite alphabet (e.g.\ $\nonterm$) of size $\bigo(1)$ in $\idxn$, or (ii) the index set $(0,\idxn]$.
    The set $\aset$ has size $\asetsize=|\aset^{(1)}|\times\ldots|\aset^{(m)}|$, and we fix any enumeration of its elements
\[
    \aset \;=\; \set{\asetelem_1, \asetelem_2, \ldots, \asetelem_{\asetsize}},
    \qquad
    \asetelem_\idxi \;=\; \mleft(\asetelem^{(1)}_\idxi,\, \asetelem^{(2)}_\idxi,\, \ldots,\, \asetelem^{(m)}_\idxi\mright) \in \aset
\]
    that arises from interpreting the $\log \asetsize = \bigo(\log \idxn)$ bits of $\idxi\in\set{0,1,\ldots,|\aset|-1}$ as the concatenation of $m$ field-encodings, one per factor $\aset^{(j)}$.
    Then there exists an $\ahat$ transformer $\tf$ of constant depth, operating on an input padded with $\asetsize$ padding symbols, such that at the residual stream of the $\idxi$-th padding symbol, $\tf$ stores the layer-norm hashes
\[
    \phi\mleft(\asetelem^{(1)}_\idxi\mright),\;
    \phi\mleft(\asetelem^{(2)}_\idxi\mright),\;
    \ldots,\;
    \phi\mleft(\asetelem^{(m)}_\idxi\mright).
\]
\end{lemma}
\begin{proof}
    The argument has three steps: (1) make $\phi(\idxi)$ and $\phi(\idxn)$ available at each padding symbol; (2) extract the bits of $\idxi$ corresponding to each factor $\aset^{(j)}$; and (3) store the layer-norm hash of each extracted value.
    
    \paragraph{Step 1: making $\phi(\idxi)$ and $\phi(\idxn)$ available.}
    At the $\idxi$-th padding symbol, the value $\phi(\idxi)$ can be added to the residual stream by uniform attention over the strict left context, which counts positions; analogously, $\phi(\idxn)$ is obtained by uniform attention restricted to the string symbols, which counts $\idxn = |\str|$.
    Both are obtained with one causally masked attention layer~\citep{merrill2024expressivepowertransformerschain}.
    
    \paragraph{Step 2: extracting per-factor bit ranges.}
    Since $\asetsize$ is a product of $m$ factors, each of size $\bigo(1)$ or $\idxn$, we can write
\[
    \asetsize \;=\; \prod_{j=1}^{m} M_j,
    \qquad
    M_j \;\in\; \set{\bigo(1),\, \idxn}.
\]
    Reading $\idxi \in \set{0, 1, \ldots, \asetsize-1}$ in mixed radix $(M_1, M_2, \ldots, M_m)$ yields, for each $j \in \set{1, \ldots, m}$, a coordinate
\[
    \asetelem^{(j)}_\idxi \;\defeq\; \mleft\lfloor \idxi \,\big/\, \textstyle\prod_{k<j} M_k \mright\rfloor \,\bmod\, M_j \;\in\; \aset^{(j)}.
\]
    By \citealp[][Lem. 1]{merrill-2025littledepthgoeslong}, with a constant number of attention layers and given $\phi(\idxi)$ and $\phi(\idxn)$ in the residual stream, integer division and modulo by either $\idxn$ or a constant divisor can be realized at the level of layer-norm hashes.
    Iterating this $m$ times (once per factor) computes $\phi\mleft(\asetelem^{(j)}_\idxi\mright)$ in the stream for each $j$.
    
    \paragraph{Step 3: depth and width.}
    The number of layers required at Step 2 is constant in $\idxn$ because $m$ is constant (it depends only on $\aset$, not on $\idxn$), and each extraction uses a constant number of applications of~\citealp[][Lem. 1]{merrill-2025littledepthgoeslong}.
    The residual stream stores one hash per factor, so width is $\bigo(m) = \bigo(1)$.
    
    \paragraph{Bijection with padding positions.}
    The mixed-radix decomposition above is a bijection between $\set{0, 1, \ldots, \asetsize-1}$ and $\aset$.
    Hence each of the $\asetsize$ padding symbols stores the hashes of a distinct element $\asetelem_\idxi \in \aset$, and every element of $\aset$ is realized exactly once.
\end{proof}

\paragraph{Examples.}
Two illustrations of how the schema $\aset$ is instantiated in the constructions:
\begin{itemize}
    \item For items of the form $(p, \NT{X}, q]$, take $m = 3$ with $\aset^{(1)} = (0,\idxn]$, $\aset^{(2)} = \nonterm$, $\aset^{(3)} = (0,\idxn]$, so $\asetsize = |\nonterm| \cdot \idxn^2$. The padding-symbol-to-item map then reads off, from $\idxi$, the two positions $p, q$ and the non-terminal $\NT{X}$.
    \item For slashed-item decompositions $(\itm, \decomp) \in \itmdec$, the schema is the Cartesian product of the item factors and the decomposition factors. The total size matches the $\bigo(\idxn^6)$ bound on padding-symbol allocation in \cref{app:proofcfl2ahat}.
\end{itemize}

\subsection{General $\cfl$ Recognition on Transformers}\label{app:proofcfl2ahat}
\cfltoahat*
\begin{proof}
    We give a $\mahat$ construction $\tf$ that simulates $\recAlgo$ (\cref{alg1,alg2}) on every (slashed) item of $\str$ in parallel and reads off the bit $\ind{\str \in \lang(\grammar)}$ at $\eos$.
    The construction fixes a padding-symbol allocation, a residual-stream layout, a partition of layers into a constant-depth preamble $A$, a constant-depth looped block $B$ (looped $\bigo(\log \idxn)$ times), and a constant-depth tail $C$ in the sense of \cref{def:looping}, and---for each sublayer in $A$, $B$, $C$---an explicit choice of attention head (query, key, value, mask) and feedforward map.
    
    \textbf{Padding allocation.}
    Recall $\items$ and $\slasheditems$ are the items and slashed items associated with $\str$ and $\grammar$.
    The decomposition modes \textsc{split} and \textsc{gap} of \cref{alg1,alg2} are indexed by
    \begin{align*}
        \itmsplit & \defeq \set{(\arule, \idxk) \mid \arule \in \rules,\ \idxk \in (0,\idxn)}, \\
        \itmgap   & \defeq \set{(p, \NT{Z}, q) \mid \NT{Z} \in \nonterm,\ p, q \in (0,\idxn)},
    \end{align*}
    and
    \begin{equation*}
        \itmdec \defeq \set{(\itm, \decomp) \mid \itm \in \items \cup \slasheditems,\ \decomp \in \itmsplit \cup \itmgap}.
    \end{equation*}
    We append $\paddingcount \defeq 3\max(|\items|,|\slasheditems|,|\itmdec|) = \bigo(\idxn^6)$ padding symbols, giving an input of length $\idxn + \bigo(\idxn^6)$.
    
    \textbf{Residual-stream layout.}
    Every position---string or padding---carries the same partitioned residual stream $\vx \in \Q^{\hiddDim}$ with $\hiddDim = \bigo(1)$.
    The named slots are:
    \begin{itemize}[noitemsep,topsep=0pt,leftmargin=15pt,parsep=0pt,partopsep=0pt]
        \item $\vx.\ftokenkind \in \set{\textsc{str}, \textsc{item}, \textsc{slash}, \textsc{dec}}$: a constant-width one-hot identifying which family the position belongs to (string symbol, item, slashed item, or decomposition).
        \item $\vx.\fpos \in \Q^4$: layer-norm hash $\phi(\idxi)$ of the position index, written in preamble $A$ via \cref{lem:pos2enc}.
        \item $\vx.\fpaddinglen \in \Q^4$: layer-norm hash $\phi(\idxn)$ of the string length, written in $A$ via \cref{lem:pos2enc}..
        \item For string positions: $\vx.\fsymbol \in \Q^{|\alphabet|}$, the one-hot $\onehot{\sym_\idxi}$ of the input symbol.
        \item For item positions associated with $\itm = (\idxi, \NT{X}, \idxj]$ (or one of the four bracket shapes): coordinate slots $\vx.\fileft \defeq \phi(\idxi)$, $\vx.\firight \defeq \phi(\idxj)$, $\vx.\fNT \in \Q^{|\nonterm|}$, $\vx.\fshape \in \Q^4$ (which of the four bracket shapes), and a key slot $\vx.\fkeyself \defeq \phi(\itm)$ that hashes the full item tuple.
        Slashed-item positions add two more coordinate slots $\vx.\fileftin, \vx.\firightin$ and a nonterminal slot $\vx.\fNTin$ for the inner item, plus shape slots; all written in $A$.
        \item For decomposition positions $(\itm, \decomp) \in \itmdec$: a key slot $\vx.\fkeyself \defeq \phi(\itm, \decomp)$, the inherited slots of $\itm$, slots encoding $\decomp$ (the rule and split index for \textsc{split}; the inner item for \textsc{gap}), and two ``child key'' slots $\vx.\fkeychildone \defeq \phi(\itm_1), \vx.\fkeychildtwo \defeq \phi(\itm_2)$, where $\itm_1, \itm_2 \in \items \cup \slasheditems$ are the two sub-items produced by applying $\decomp$ to $\itm$. The child keys are also computed in $A$.
        \item $\vx.\fv \in \set{\true, \false, \bot} \subseteq \Q^3$: the three-valued realizability bit for the associated (slashed) item (at item / slashed-item positions) or the per-decomposition conjunction (at decomposition positions). Initialized to $\bot$ at all padding positions in $A$.
        \item Scratch slots $\vx.\fvone, \vx.\fvtwo, \vx.\fvdec \in \Q^3$ used inside the loop body to stage reads; zeroed at the end of each iteration.
    \end{itemize}
    All hashes live in $\Q^4$, all one-hots are constant-dimensional, and the slots above sum to $\bigo(1)$ width, satisfying the $\mahat$ definition.
    When we say a feedforward network ``reads $\fkeyself$ and writes $\fkeychildone$,'' we mean the underlying linear projections target the corresponding sub-block of $\vx$ using multi-pre-norm.
    
    \textbf{Preamble $A$ (constant depth, runs once).}
    Block $A$ assembles every slot above and computes the base cases of $\recAlgo$.
    Its sublayers, in order, are:
    \begin{enumerate}[noitemsep,topsep=0pt,leftmargin=18pt,parsep=0pt]
        \item \emph{Position hashes.} A first sublayer makes the position hashes $\fpos \defeq \phi(\idxi)$ and $\fpaddinglen \defeq \phi(\idxn)$ available at every position: one causally-masked attention head uniformly attends to all positions to count $\idxi$ (and hence write $\phi(\idxi)$), and a second head uniformly attends to string positions only to count $\idxn = |\str|$ (and write $\phi(\idxn)$), by the layer-norm-hash construction of \citet[Lem.~1]{merrill-2025littledepthgoeslong} (cf.\ \cref{def:layernormhash}).

        \item \emph{Position-dependent tagging.} 
        We interleave the three padding families by position residue rather than laying them out in contiguous blocks. 
        We assign the family of each padding position by residue mod 3 among padding positions:
        \begin{equation*}
            \ftokenkind \gets \begin{cases} \textsc{str} & \text{if input symbol $\neq \paddingtoken$,} \\ \textsc{item} & \text{if $(\idxi-\idxn)\mod3=0$,} \\ \textsc{slash} & \text{if $(\idxi-\idxn)\mod3=1$,} \\ \textsc{dec} & \text{if $(\idxi-\idxn)\mod3=2$.} \end{cases}
        \end{equation*}
        Both $\phi(\idxi)$ and $\phi(\idxn)$ are already in the residual stream from step 1, so $\phi((\idxi-\idxn)\mod3)$ follows by one $\log$-precision subtraction and a further application of \citet[Lem.~1]{merrill-2025littledepthgoeslong}  (modulo by the constant 3). A position-wise MLP then reads $\phi((\idxi-\idxn)\mod3)$ together with the sign of $\idxi-\idxn$ (a single layer-norm-hash comparison against 0) and writes $\ftokenkind$ via the four-entry lookup above. The whole step uses constant depth and constant width.
        By our choice of $\paddingcount$, each residue class modulo 3 contains at least $\max(|\items|,|\slasheditems|,|\itmdec|)=|\itmdec|$ positions, so every family has room for its full schema.
        \item \emph{Schema decoding.} \cref{lem:pos2enc} is applied to each residue class separately, corresponding to the schemata $\items$, $\slasheditems$, $\itmdec$.
        Constant-depth attention blocks populate the per-family coordinate, nonterminal, and decomposition slots at each padding position, together with $\fkeyself \defeq \phi(\itm)$ or $\phi(\itm, \decomp)$.
        \item \emph{Child keys (for decompositions).} A position-wise MLP at every $(\itm, \decomp) \in \itmdec$ reads its already-populated $\itm$ and $\decomp$ slots and writes the two child keys $\fkeychildone \defeq \phi(\itm_1), \fkeychildtwo \defeq \phi(\itm_2)$, where $\itm_1, \itm_2$ are read off $\decomp$ as in \cref{alg1,alg2}: for \textsc{split} with $\decomp = (\production{\NT{A}}{\NT{B}\NT{C}}, \idxk)$ and $\itm = (\idxi, \NT{X}, \idxj]$, we set $\itm_1 = (\idxi, \NT{B}, \idxk]$, $\itm_2 = (\idxk, \NT{C}, \idxj]$ if $\NT{X} = \NT{A}$ and $(\itm_1, \itm_2) = (\itm, \itm)$ otherwise (a guard that will make $\fv = \bot$ permanently for this decomposition); for \textsc{gap} with $\decomp = (p, \NT{Y}, q)$, we set $\itm_1 = \itm / (p, \NT{Y}, q]$ and $\itm_2 = (p, \NT{Y}, q]$. Both are finite mappings over the constant-size sets $\set{\textsc{split}, \textsc{gap}} \times \rules \times \nonterm \times \set{\text{shapes}}$ combined with the linear arithmetic on position hashes provided by \cref{lem:pos2enc}, hence implementable by a constant-depth feedforward network \citep{yang2025transformercookbook}.
        \item \emph{Initial-$\fv$ writing (base cases).} 
        A position-wise MLP can first evaluate which (slashed) item positions correspond to base cases by comparing the item indices. The MLP then writes $\fv$ at every padding position as follows:
        \begin{itemize}[noitemsep,topsep=0pt,leftmargin=15pt,parsep=0pt]
            \item At an item position $\itm = (\idxi - 1, \NT{X}, \idxi]$ (a length-$1$ item), the position first attends with a causally-masked head, gated by the layer-norm-hash equality $\phi(\idxi) \cdot \fpos = 1$, to the unique string position holding $\sym_\idxi$ and reads $\onehot{\sym_\idxi}$ into a scratch slot \citep{merrill2024expressivepowertransformerschain}. An MLP then writes $\fv \gets \ind{\production{\NT{X}}{\sym_\idxi} \in \rules}$ via the finite map $\nonterm \times \alphabet \to \set{\true, \false}$ implementable by a feedforward network \citep{yang2025transformercookbook}.
            \item At a slashed-item base case $\itm = (\idxi, \NT{X}, \idxj] / (\idxi, \NT{Y}, \idxj - 1]$ (or its symmetric variant), the same idea applies: read $\sym_\idxj$ and write $\fv \gets \ind{\exists \NT{Z}.\ \production{\NT{X}}{\NT{Y}\NT{Z}} \in \rules \land \production{\NT{Z}}{\sym_\idxj} \in \rules}$, a finite map implementable by a feedforward network \citep{yang2025transformercookbook}.
            \item All other item, slashed-item, and decomposition positions get $\fv \gets \bot$.
        \end{itemize}
    \end{enumerate}
    Each step uses $\bigo(1)$ attention and feedforward sublayers, so $A$ has constant total depth.
        \textbf{Loop body $B$ (constant depth, looped $\bigo(\log \idxn)$ times).}
        One iteration of $B$ updates $\fv$ at every padding position by one round of three-valued $\recAlgo$, fusing the ``decomposition resolution'' and ``item aggregation'' phases of the algorithm.
        It consists of three sub-blocks; every attention head below is unmasked and uses the dot-product score with the $\hardmaxAvg$ projection of \cref{def:hard-attention}, gated by layer-norm-hash equality (\cref{def:layernormhash}).
        
        \emph{Sub-block $B_{\textsc{read}}$ (read children's $\fv$).}
        At each decomposition position $(\itm, \decomp)$:
        \begin{itemize}[noitemsep,topsep=0pt,leftmargin=15pt,parsep=0pt]
            \item Head $H^1_{\textsc{read}}$: query $\vq \defeq \fkeychildone$, key $\vk_u \defeq \vx_u.\fkeyself$, value $\vv_u \defeq \vx_u.\fv$. The score peaks at $u = \itm_1$, so $\hardmaxAvg$ writes $\fv$ of the first child into the scratch slot $\fvone$.
            \item Head $H^2_{\textsc{read}}$: identical with query $\fkeychildtwo$, writing the second child's $\fv$ into $\fvtwo$.
        \end{itemize}
        
        \emph{Sub-block $B_{\textsc{conj}}$ (three-valued conjunction at decomposition positions).}
        We first devise in \cref{fig:truthtable} a natural\footnote{Intuitively, a $\lor_3$ commits to $\true$ as soon as any disjunct is T (regardless of unknowns $\bot$ elsewhere).
        Similarly, $\land_3$ commits to $\false$ as soon as any conjunct is $\false$.
        The result otherwise stays $\bot$.} truth table that extends standard Boolean logic with the unknown truth value $\bot$.
        \begin{table}[H]
            \centering
            \[
            \begin{array}{c|c|c|c}
                P & Q & P \land_3 Q & P \lor_3 Q \\
                \hline
                \true & \bot & \bot & \true \\
                \bot & \true & \bot & \true \\
                \false & \bot & \false & \bot \\
                \bot & \false & \false & \bot \\
                \bot & \bot & \bot & \bot \\
            \end{array}
            \]
            \caption{Truth table for a three-valued logic \\ that handles propositions with unknown truth value.}
            \label{fig:truthtable}
        \end{table}
        
        A position-wise MLP $F_{\textsc{conj}}$ at every $(\itm, \decomp)$ writes
        \begin{equation*}
            \fvdec \gets \fvone \wedge_3 \fvtwo
        \end{equation*}
        using the truth table in \cref{fig:truthtable}. The map $\set{\true, \false, \bot}^2 \to \set{\true, \false, \bot}$ has $9$ entries, so a constant-width two-layer $\ReLU$ network suffices \citep{yang2025transformercookbook}.
        Monotonicity is built into the table: $\fvdec$ never moves away from $\set{\true, \false}$ once written.

        \emph{Sub-block $B_{\textsc{disj}}$ (three-valued disjunction over decompositions, written back to item positions).}
        At each item or slashed-item position $\itm$, we want
        \begin{equation*}
            \fv \gets \bigvee\nolimits_3 \set{\fvdec_{(\itm, \decomp)} \mid \decomp \in \itmsplit \cup \itmgap},
        \end{equation*}
        the three-valued disjunction, defined by the same table: $\true$ if any disjunct is $\true$, $\false$ if all are $\false$, $\bot$ otherwise. Two unmasked attention heads keyed on $\itm$'s identity realize this:
        \begin{itemize}[noitemsep,topsep=0pt,leftmargin=15pt,parsep=0pt]
            \item Head $H^\true_{\textsc{disj}}$: query $\vq \defeq \fkeyself$ at $\itm$, key $\vk_u \defeq \vx_u.\fkeyself|_{\itm}$ (i.e., the embedded $\itm$-component of the decomposition's $\fkeyself$, extracted by a fixed linear projection) at every decomposition $u = (\itm', \decomp)$, value $\vv_u \defeq \ind{\vx_u.\fvdec = \true}$. The score is maximized exactly at decompositions of $\itm$, and $\hardmaxAvg$'s output is nonzero iff at least one such decomposition has $\fvdec = \true$. A subsequent MLP writes this bit into a scratch slot $a^\true$.
            \item Head $H^\bot_{\textsc{disj}}$: identical, but with value $\vv_u \defeq \ind{\vx_u.\fvdec = \bot}$; result written into $a^\bot$.
        \end{itemize}
        A final MLP $F_{\textsc{disj}}$ then writes
        \begin{equation*}
            \fv \gets \begin{cases} \true & \text{if } \fv = \true \text{ or } a^\true = 1, \\ \false & \text{if } \fv \neq \true \text{ and } a^\true = 0 \text{ and } a^\bot = 0, \\ \fv & \text{otherwise (kept at $\bot$ or its already-decided value).} \end{cases}
        \end{equation*}
        The first clause enforces monotonicity: once $\fv \in \set{\true, \false}$ it never changes.
        The scratch slots $\fvone, \fvtwo, \fvdec, a^\true, a^\bot$ are zeroed by a final position-wise linear map, so $B$ is length-preserving.
        
        \textbf{Tail $C$ (constant depth, runs once).}
        A single unmasked attention head at $\eos$:
        query $\vq \defeq \phi((0, \start, \idxn])$ ($0 and \start$ are constant, $\idxn$ is computable via attention), key $\vk_u \defeq \vx_u.\fkeyself$, value $\vv_u \defeq \vx_u.\fv$.
        The head concentrates on the unique padding position whose key matches, copying $v_{(0, \start, \idxn]}$ into the $\eos$ residual stream.
        The linear classifier $\funcc$ accepts iff this slot equals the $\true$ encoding.
        
        \textbf{Looping schedule.}
        Setting $\layerIdx_1, \layerIdx_2$ in \cref{def:looping} to bracket $A$, $B$, $C$ and looping $B$ for $T \defeq 2 \lceil \log_2(2\idxn) \rceil + \bigo(1)$ iterations (\cref{lem:solvelognsteps}), the transformer computes
        \begin{equation*}
            C \circ B^{\bigo(\log \idxn)} \circ A \circ \embedFunc\mleft(\str\,\underbrace{\paddingtoken \cdots \paddingtoken}_{\bigo(\idxn^6)}\mright).
        \end{equation*}
        
        \textbf{Correctness invariant.}
        For $\itm \in \items \cup \slasheditems$ define the depth-$\tau$ truncation $\recAlgo^{(\tau)}(\itm)$ inductively: $\recAlgo^{(0)}(\itm)$ is $\recAlgo(\itm)$ if $\itm$ is a base case and $\bot$ otherwise; $\recAlgo^{(\tau)}(\itm)$ evaluates the body of \cref{alg1,alg2} on $\itm$ with each recursive call replaced by $\recAlgo^{(\tau - 1)}$ of the child item, with all $\land$ and $\lor$ interpreted via \cref{fig:truthtable}.
        Monotonic stabilization holds: $\recAlgo^{(\tau)}(\itm) \in \set{\true, \false} \Rightarrow \recAlgo^{(\tau+1)}(\itm) = \recAlgo^{(\tau)}(\itm)$.
        
        We prove by induction on $\tau$ that
        \begin{equation*}
            P(\tau): \quad \text{after $\tau$ iterations of $B$, every item/slashed-item position carries } \fv = \recAlgo^{(\tau)}(\itm).
        \end{equation*}
        
        \emph{Base case $\tau = 0$.} Step (v) of $A$ writes $\fv = \recAlgo^{(0)}(\itm)$ at base-case items and slashed items and $\fv = \bot$ everywhere else, exactly matching $\recAlgo^{(0)}$.
        
        \emph{Inductive step.} Assume $P(\tau)$.
        At iteration $\tau + 1$, sub-block $B_{\textsc{read}}$ at every $(\itm, \decomp)$ reads its two children's $\fv$ values, which by $P(\tau)$ are $\recAlgo^{(\tau)}(\itm_1)$ and $\recAlgo^{(\tau)}(\itm_2)$.
        Sub-block $B_{\textsc{conj}}$ writes $\fvdec_{(\itm, \decomp)} = \recAlgo^{(\tau)}(\itm_1) \wedge_3 \recAlgo^{(\tau)}(\itm_2)$ at every decomposition.
        Sub-block $B_{\textsc{disj}}$ then aggregates these per-decomposition conjunctions at every item position via the three-valued $\bigvee_3$, with the monotonicity guard preserving any already-decided $\fv$.
        By the definition of $\recAlgo^{(\tau+1)}$ as the depth-$\tau$ unrolling of \cref{alg1,alg2} under three-valued logic,
        \begin{equation*}
            \fv \gets \bigvee\nolimits_3\set{\recAlgo^{(\tau)}(\itm_1) \wedge_3 \recAlgo^{(\tau)}(\itm_2) \mid \decomp} = \recAlgo^{(\tau+1)}(\itm),
        \end{equation*}
        so $P(\tau + 1)$ holds.
        By \cref{lem:cfl_algo_correctness,lem:solvelognsteps}, $\str\in\lang(\grammar)$ implies that $\recAlgo^{(T)}((0, \start, \idxn])=\recAlgo((0, \start, \idxn])=\true$ for depth $T = \bigo(\log \idxn)$.
        Hence after the $\bigo(\log \idxn)$ looped iterations the padding position for $(0, \start, \idxn]$ stores $\true$ iff $\str \in \lang(\grammar)$ (and otherwise stores $\false$ or $\bot$), which $C$ copies to $\eos$ and $\funcc$ reads off.
    
    \textbf{Resource accounting.}
    \begin{itemize}[noitemsep,topsep=0pt,leftmargin=15pt]
        \item \emph{Padding:} $|\items| = \bigo(\idxn^2)$ item symbols, $|\slasheditems| = \bigo(\idxn^4)$ slashed-item symbols, $|\itmdec| = \bigo(\idxn^6)$ decomposition symbols, leading to $\paddingcount=\bigo(\idxn^6)$.
        \item \emph{Depth:} $A$ is $\bigo(1)$ sublayers, $B$ is $\bigo(1)$ sublayers, $C$ is one sublayer, and $B$ is looped $\bigo(\log \idxn)$ times, giving $\bigo(\log \idxn)$ total depth.
        \item \emph{Heads:} causally-masked for padding-to-string reads (position hashes, base-case symbol look-up); unmasked for inter-padding attention (child reads in $B_{\textsc{read}}$, disjunction in $B_{\textsc{disj}}$, accept head in $C$). This places $\tf$ in $\mahat^1_6$.
    \end{itemize}
    Applying \cref{unmask2mask} yields $\cfl \subseteq \mahat^1_6 \subseteq \ahat^1_7$.
\end{proof}

\subsection{Unambiguous $\cfl$ Recognition on Transformers}\label{app:ucflproofs}
We first define two pieces of notation.
A variable-free Boolean formula $\psi$ in \defn{postfix notation} is a string over the alphabet $\set{\true, \false, \neg, \land, \lor}$, defined recursively: $\TRUE$ and $\FALSE$ are postfix formulas, and if $\alpha, \beta$ are postfix formulas then so are $\alpha\neg$, $\alpha\beta\land$, and $\alpha\beta\lor$.
The \defn{binary expression tree} of $\psi$, denoted $\graph_\psi = (\vertices_\psi, \edges_\psi)$, is defined in lockstep with postfix: an atomic formula $\sym \in \set{\true, \false}$ has the single-node tree whose only vertex is a leaf labeled $\sym$; a unary compound $\alpha\neg$ has the tree whose root is a fresh internal node labeled $\neg$ with $\graph_\alpha$ as its only subtree; and a binary compound $\alpha\beta\circ$ with $\circ \in \set{\land, \lor}$ has the tree whose root is a fresh internal node labeled $\circ$ with left subtree $\graph_\alpha$ and right subtree $\graph_\beta$.
The $\idxn$ inputs of $\psi$ are the leaves of $\graph_\psi$; the internal nodes are labeled by connectives.
Both $|\vertices_\psi|$ and the input length are $\bigo(\idxn)$.
Postfix order is a bijection between the symbols of $\psi$ and the nodes of $\graph_\psi$: the $\idxk$-th symbol of $\psi$ corresponds to the $\idxk$-th node visited by a post-order traversal of $\graph_\psi$.

We prove \cref{lem:reach} (and as a byproduct, \cref{thm:ucfl2ahat}) in two stages, corresponding to the two lemmas below.
First, \cref{lem:postfixencode} shows that a fixed-depth transformer can \emph{preprocess} a postfix input: it validates well-formedness and, at every operator position, populates pointers $\fargL, \fargR$ to that operator's operand positions in the input string.
This turns the flat token sequence into a residual-stream encoding of the binary expression tree $\graph_\psi$, with each node $v \in \vertices_\psi$ sitting at its postfix-order position and storing its children's positions as pointers.
Second, \cref{lem:pebble} shows that, given any binary tree presented in this slotted form---type, $\fargL$, $\fargR$ at internal nodes; leaf value at leaves---an $\bigo(\log \idxn)$-looped transformer evaluates the formula by simulating \citeposs{rytter-2pda} parallel pebble game on the tree.
Composing the two yields \cref{lem:reach}: the preprocessing of \cref{lem:postfixencode} produces exactly the input \cref{lem:pebble} requires, and the root of $\graph_\psi$---which by postfix order is the last input position---then carries the value of $\psi$ after $\bigo(\log \idxn)$ looped layers.

\begin{restatable}{lemma}{postfixencode}\label{lem:postfixencode}
    Let $\psi$ be a string over $\set{\true, \false, \neg, \land, \lor}$ of length $\idxn$. There exists a fixed-depth transformer in $\ahat^0_0$ that
    \begin{enumerate*}[label=\textit{(\roman*)}]
        \item decides whether $\psi$ is a well-formed postfix formula, and
        \item if so, populates the residual stream at every operator position $v \in \vertices_\psi$ with pointers $\fargL, \fargR$ to its operand positions ($\fargL$ only for the unary case $\neg$).
    \end{enumerate*}
\end{restatable}
\begin{proof}\label{app:proofpostfixencode}
    We write a \defn{\textsc{C-RASP}} program \citep{yang2024countingliketransformerscompiling} that decides well-formedness and computes a binary predicate \textsc{argument}$(\idxk, \idxi)$ which holds iff position $\idxk$ holds an operand of the operator at position $\idxi$.
    \textsc{C-RASP} is a programming language equivalent to \defn{temporal counting logic} \citep{yang2024countingliketransformerscompiling}, which is a lower bound on the expressive power of fixed-depth $\ahats$ \citep{barcelo2024logical}; thus, $\textsc{C-RASP} \subseteq \ahat^{0}_{0}$.
    
    A \textsc{C-RASP} program consists of a finite sequence of \textsc{C-RASP} operations $\craspprog_1, \craspprog_2, \ldots, \craspprog_l$.
    To signal acceptance, the last operation $\craspprog_l$ is evaluated at the last string position.
    Atomic $\textsc{C-RASP}$ operations are $\atom_\syma(\idxi)$, where $\atom_\syma(\idxi) = \TRUE$ iff $\sym_\idxi = \syma$.
    Inductive $\textsc{C-RASP}$ operations include the standard Boolean connectives and counting operations (e.g., counting the number of past positions such that a formula is satisfied and comparing integers).
    
    We first write a \textsc{C-RASP} program that determines if an input formula is well-formed.
    A formula in postfix notation is well-formed if operators never try to consume more operands than are available and the formula ends with all operands being consumed.
    We can express the aforementioned procedure with the following \textsc{C-RASP} program.
    \begin{subequations}
        \begin{align}
            \text{\textsc{depth}}(\idxi) & = \# \idxj \leq \idxi [\atom_{\TRUE}(\idxj) \lor \atom_{\FALSE}(\idxj)] - \# \idxj \leq \idxi [\atom_{\lor}(\idxj) \lor \atom_{\land}(\idxj)] \\
            \text{\textsc{well-formed}}(\idxi) & = [\# \idxj \leq \idxi [\text{\textsc{depth}}(\idxj) < 1] = 0] \land [\text{\textsc{depth}}(\idxi) = 1]
        \end{align}
    \end{subequations}
    To determine if a formula is well-formed, \textsc{well-formed} is evaluated at the last position of the input string.
    
    We now devise a binary predicate \textsc{argument}$(\idxk,\idxi)$ such that \textsc{argument}$(\idxk,\idxi) = \TRUE$ iff there is an operator at position $\idxi$ and an input argument at position $\idxk$.
    \begin{subequations}
        \begin{align}
            \text{\textsc{binary-op}}(\idxi) & = [\atom_{\land}(\idxi) \lor \atom_{\lor}(\idxi)] \\
            \text{\textsc{unary-op}}(\idxi) & = \atom_{\neg}(\idxi)\\
            \text{\textsc{depth}}(\idxi) & = \# \idxj \leq \idxi [\atom_{\TRUE}(\idxj) \lor \atom_{\FALSE}(\idxj)] - \# \idxj \leq \idxi [\atom_{\lor}(\idxj) \lor \atom_{\land}(\idxj)] \\
            \text{\textsc{dindex}}(\idxi) & = \# \idxj \leq \idxi [\text{\textsc{depth}}(\idxj) = \text{\textsc{depth}}(\idxi)] \\
            \text{\textsc{previous}}(\idxk, \idxi) & = [\idxk = [[\# \idxj \leq \idxi \, \TRUE] - 1]] = [\idxk = \idxi - 1] \\
            \text{\textsc{argument}}(\idxk, \idxi) & = [\text{\textsc{unary-op}}(\idxi) \land \text{\textsc{previous}}(\idxk, \idxi)] \\ & \lor [\text{\textsc{binary-op}}(\idxi) \land [\text{\textsc{previous}}(\idxk, \idxi) \\ & \lor [\text{\textsc{depth}}(\idxk) = \text{\textsc{depth}}(\idxi) \land [\text{\textsc{dindex}}(\idxk) = \text{\textsc{dindex}}(\idxi)-1]]]
        \end{align}
    \end{subequations}
    Since $\textsc{C-RASP} \subseteq \ahat^0_0$, this places the construction in $\ahat^0_0$ with $\bigo(1)$ depth. Each operator position then uses one attention layer with \textsc{argument} as its score to copy the contents of its two operand positions, populating $\fargL, \fargR$ (for the unary case $\neg$, only one operand exists, so only $\fargL$ is populated).
\end{proof}

\begin{restatable}{lemma}{pebble}\label{lem:pebble}
    Let $\graph = (\vertices, \edges)$ be a binary tree on $\bigo(\idxn)$ nodes whose internal nodes are labeled by $\land$ or $\lor$ and whose leaves are labeled by $\TRUE$ or $\FALSE$. Suppose each node $v \in \vertices$ is assigned a distinct string position whose residual stream stores: a slot $v.\fval \in \set{\true,\false,\bot}$ initialized to the leaf value at leaves and to $\bot$ at internal nodes; $v.\ftype$ holding the connective at internal nodes; and pointers $v.\fargL, v.\fargR$ to $v$'s two children at internal nodes.
    Then there exists a $\bigo(\log(\idxn))$-looped transformer that, in $\bigo(\log(\idxn))$ steps, fills $v.\fval$ at every internal node $v$ with the value of the subformula rooted at $v$.
\end{restatable}
\begin{proof}\label{app:proofpebble}
    We give an explicit $\mahat$ construction that simulates \citeposs{rytter-2pda} parallel pebble game on $\graph$.
    The construction fixes a residual-stream layout, a partition of layers into a constant-depth preamble $A$, a constant-depth looped block $B$, and a constant-depth tail $C$ (cf. \cref{def:looping}), and---for each sublayer in $A$ and $B$---an explicit choice of attention head (query, key, value, mask) and feedforward map.
    
    \textbf{Residual-stream layout.}
    Let $V \defeq |\vertices| = \bigo(\idxn)$.
    At every position $v \in \vertices$ the residual stream $\vx_v \in \Q^{\hiddDim}$ is partitioned into the following named slots (each a fixed-width sub-block that distinct projections can read and write independently with multi-pre-norm):
    \begin{itemize}[noitemsep,topsep=0pt,leftmargin=15pt,parsep=0pt,partopsep=0pt]
        \item $\vx_v.\ftokenkind \in \set{\textsc{leaf}, \textsc{op}}\subseteq \Q^2$: a two-bit indicator of whether $v$ is a leaf or an internal (operator) node.
        \item $\vx_v.\ftype \in \set{\land, \lor}\subseteq \Q^2$: two-bit one-hot for the connective; defined at internal nodes, zero at leaves.
        \item $\vx_v.\fval \in \set{\true, \false, \bot}\subseteq \Q^3$: three-bit one-hot for the current value; initialized to the input value at leaves and to $\bot$ at internal nodes.
        \item $\vx_v.\fpos \in \Q^4$: the layer-norm hash $\phi(v)$ of $v$'s integer position, made available by \cref{lem:pos2enc}; serves as the key for all equality-based attention.
        \item $\vx_v.\fargL, \vx_v.\fargR \in \Q^4$: the layer-norm hashes $\phi(v_1), \phi(v_2)$ of $v$'s two children's positions, given by the hypothesis (and produced by \cref{lem:postfixencode}); zero at leaves.
        \item $\vx_v.\fdep \in \Q^4$: the layer-norm hash of $v.\fdep$'s position; initialized to $\phi(v)$ (so $v.\fdep = v$).
        \item $\vx_v.\fprop \in \Q^4$: one-hot over the four possible propagators $\set{\textsc{id}, \neg, \mathbf{T}, \mathbf{F}}$ (identity, negation, constant-true, constant-false); initialized to $\textsc{id}$.
        \item $\vx_v.\fvalL, \vx_v.\fvalR, \vx_v.\fdepval, \vx_v.\fdepdep, \vx_v.\fdepprop \in \Q^3$ or $\Q^4$: scratch slots used to stage reads from neighbors within a single block; overwritten on each loop iteration.
    \end{itemize}
    The total width is constant, so $\hiddDim = \bigo(1)$.
    
    \textbf{Preamble $A$ (constant depth).}
    Block $A$ runs once before the loop and sets up the layout above.
    Two sublayers suffice:
    \begin{enumerate}[noitemsep,topsep=0pt,leftmargin=18pt,parsep=0pt]
        \item One unmasked attention head followed by an MLP produces $\vx_v.\fpos \defeq \phi(v)$ at every position (cf. \cref{lem:pos2enc}).
        \item A position-wise MLP reads the input token at $v$ and writes (i) $\ftokenkind$, (ii) $\ftype$ (zero at leaves), (iii) $\fval$ (the leaf value at leaves; $\bot$ at internal nodes), (iv) $\fdep \gets \fpos$, (v) $\fprop \gets \textsc{id}$.
    \end{enumerate}
    The hypothesis already supplies $\fargL, \fargR$.
    
    \textbf{Loop body $B$ (constant depth, repeated $\bigo(\log \idxn)$ times).}
    Each iteration of $B$ consists of three sub-blocks $B_{\activateStep}$, $B_{\squareStep}$, $B_{\pebbleStep}$, each a constant number of attention and MLP sublayers.
    
    \emph{Sub-block $B_{\activateStep}$.}
    Two attention heads in parallel:
    \begin{itemize}[noitemsep,topsep=0pt,leftmargin=15pt,parsep=0pt]
        \item Head $H^L_{\activateStep}$: query $\vq_v \defeq \vx_v.\fargL$; key $\vk_u \defeq \vx_u.\fpos$; value $\vv_u \defeq \vx_u.\fval$. By \cref{def:layernormhash}, the score $\langle \vq_v, \vk_u \rangle$ is maximized exactly at $u = v_1$, so $\hardmaxAvg$ concentrates on $u = v_1$ and the head writes $v_1.\fval$ into $\vx_v.\fvalL$.
        \item Head $H^R_{\activateStep}$: identical but with query $\vx_v.\fargR$, writing $v_2.\fval$ into $\vx_v.\fvalR$.
    \end{itemize}
    A position-wise MLP $F_{\activateStep}$ then reads $(\ftype, \fvalL, \fvalR, \fargL, \fargR)$ at $v$ and updates $(\fdep, \fprop)$ as follows.
    Of the $2 \times 3 \times 3 = 18$ joint values of $(\ftype, \fvalL, \fvalR)$, the cases split into:
    \begin{itemize}[noitemsep,topsep=0pt,leftmargin=15pt,parsep=0pt]
        \item \emph{Exactly one of $\fvalL, \fvalR$ is in $\set{\true, \false}$.} Write $\fdep \gets \fargR$ (resp.\ $\fargL$) and set $\fprop$ from $(\ftype, \fvalL)$ (resp.\ $(\ftype, \fvalR)$) according to \cref{tab:activate}.
        \item \emph{Both $\fvalL, \fvalR$ are still $\bot$.} Leave $\fdep$ and $\fprop$ unchanged.
        \item \emph{Both are known.} Set $\fprop$ to the constant $\ftype(\fvalL, \fvalR)$ and $\fdep \gets \fpos$, so the upcoming \pebbleStep will write $\fval$ directly.
    \end{itemize}
    \begin{table}[H]
        \centering
        \begin{tabular}{@{}ccl@{}}
            \toprule
            $v$'s connective & known operand value & propagator $v.\fprop$ \\
            \midrule
            $\lor$  & $\TRUE$  & $v.\fprop \equiv \TRUE$ (constant) \\
            $\lor$  & $\FALSE$ & $v.\fprop(x) = x$ (identity) \\
            $\land$ & $\TRUE$  & $v.\fprop(x) = x$ (identity) \\
            $\land$ & $\FALSE$ & $v.\fprop \equiv \FALSE$ (constant) \\
            \bottomrule
        \end{tabular}
        \caption{Defining $v$'s propagator from $v$'s connective and the known operand value, used in sub-block $B_{\activateStep}$.}
        \label{tab:activate}
    \end{table}
    All 18 cases are a function of a constant number of bits, so $F_{\activateStep}$ is implemented by an $\bigo(1)$-width two-layer $\ReLU$ network that realizes a truth table \citep{yang2025transformercookbook}.
    
    \emph{Sub-block $B_{\squareStep}$.}
    Two attention heads in parallel:
    \begin{itemize}[noitemsep,topsep=0pt,leftmargin=15pt,parsep=0pt]
        \item Head $H^{1}_{\squareStep}$: query $\vx_v.\fdep$, key $\vx_u.\fpos$, value $\vx_u.\fdep$. Concentrates on $u = v.\fdep$ and writes $v.\fdep.\fdep$ into $\vx_v.\fdepdep$.
        \item Head $H^{2}_{\squareStep}$: same query/key, value $\vx_u.\fprop$. Concentrates on $u = v.\fdep$ and writes $v.\fdep.\fprop$ into $\vx_v.\fdepprop$.
    \end{itemize}
    A position-wise feedforward network $F_{\squareStep}$ then sets $\fdep \gets \fdepdep$ and $\fprop \gets \fprop \circ \fdepprop$, where composition is the explicit $4 \times 4$ lookup table over $\set{\textsc{id}, \neg, \mathbf{T}, \mathbf{F}}$; again a constant-size $\ReLU$ network suffices.
    
    \emph{Sub-block $B_{\pebbleStep}$.}
    One attention head: query $\vx_v.\fdep$, key $\vx_u.\fpos$, value $\vx_u.\fval$; writes $v.\fdep.\fval$ into $\vx_v.\fdepval$.
    A position-wise feedforward network $F_{\pebbleStep}$ then sets
    \begin{equation*}
        \fval \gets \begin{cases} \fprop(\fdepval) & \text{if } \fdepval \neq \bot, \\ \fval & \text{otherwise,} \end{cases}
    \end{equation*}
    realized again as a constant-size truth-table $\ReLU$ network over $(\fprop, \fdepval, \fval) \in \set{\textsc{id}, \neg, \mathbf{T}, \mathbf{F}} \times \set{\true,\false,\bot} \times \set{\true,\false,\bot}$.
    
    Scratch slots ($\fvalL, \fvalR, \fdepval, \fdepdep, \fdepprop$) are zeroed at the end of each iteration by an additional position-wise linear map, so block $B$ is length-preserving and self-contained.
    
    \textbf{Tail $C$ (constant depth).}
    Every internal node carries its subformula's value in its $\fval$ slot.
    To read the truth value of the entire formula, the tail $C$ implements a single position-wise linear read-out: at the root $r$ of $\graph$ (the last input position), the classifier $\funcc$ inspects $\vx_r.\fval$ and outputs $\ind{\vx_r.\fval = \true}$.
    
    \textbf{Looping schedule.}
    Setting $\layerIdx_1, \layerIdx_2$ in \cref{def:looping} to bracket $A$, $B$, $C$ and choosing the loop count of $B$ to be $\lceil \log_2(V) \rceil + 1 = \bigo(\log \idxn)$, the transformer computes
    \begin{equation*}
        C \circ B^{\bigo(\log \idxn)} \circ A \circ \embedFunc(\graph).
    \end{equation*}
    
    \textbf{Correctness invariant.}
    Let $t \ge 0$ index loop iterations of $B$.
    Write $\fdep^{(t)}_v, \fprop^{(t)}_v, \fval^{(t)}_v$ for the contents of those slots at the end of iteration $t$ (with $t = 0$ being the state after $A$).
    By induction on $t$, the construction maintains \citeposs{rytter-2pda} invariant
    \begin{equation*}
        (\dagger)\quad\text{for every internal node $v$ whose subtree has depth at most $2^t$, } \fval^{(t)}_v \text{ equals the subformula's value.}
    \end{equation*}
    The base case $t = 0$ is immediate as the depth-$1$ subtrees are exactly the leaves, whose $\fval$ is set by $A$.
    For the inductive step, $B_{\activateStep}$ at iteration $t+1$ refreshes $\fdep, \fprop$ at every still-unresolved internal node from its children's $\fval^{(t)}$ values, $B_{\squareStep}$ composes the dependency pointer and propagator with those at $\fdep$, doubling the reach, and $B_{\pebbleStep}$ writes $\fval^{(t+1)}$ whenever a value has become known.
    By the analysis of \citet{rytter-2pda}, the reach of $\fdep$ doubles per iteration, hence $(\dagger)$ holds at $t+1$.
    
    Since $\graph$ has depth at most $V - 1 = \bigo(\idxn)$, after $T \defeq \lceil \log_2 V \rceil + 1$ iterations $(\dagger)$ covers every internal node, and in particular $\fval^{(T)}_r$ at the root carries $\graph$'s value.
    The total depth is $\bigo(\log \idxn)$, placing the construction in $\mahat^1_0$.
\end{proof}

\reach*
\begin{proof}\label{app:proofreach}
    Any boolean formula can be written in postfix notation; we assume the transformer receives the formula in this convention. 
    The result immediately follows by composing \cref{lem:postfixencode} with \cref{lem:pebble}. 
    Concretely, the postfix input fixes a binary expression tree $\graph_\psi = (\vertices_\psi, \edges_\psi)$ on $\bigo(\idxn)$ nodes, with postfix order giving a bijection between the input symbols and $\vertices_\psi$. \Cref{lem:postfixencode} populates each operator position's slots $\ftype, \fargL, \fargR$ in $\bigo(1)$ layers; leaf positions hold their input value in $\fval$ from the start. \Cref{lem:pebble} then fills $\fval$ at every internal node of $\graph_\psi$ in $\bigo(\log(\idxn))$ further looped layers, with the root's value appearing at the last position of the input string (which, by postfix order, is the root of $\graph_\psi$).
\end{proof}

\bfvpcorr*
\begin{proof}\label{app:proofbfvp}
    Immediate from \cref{lem:reach}: the well-formedness check provided by \cref{lem:postfixencode} (invoked internally by \cref{lem:reach}, see \cref{app:proofpostfixencode}) rejects ill-formed inputs, and on well-formed inputs the pebble game pebbles up $\psi$'s value in $\bigo(\log(\idxn))$ steps, which the transformer accepts iff this value is $\TRUE$.
    Importantly, the construction does not require unmasked layers: C-RASP programs can be implemented on causally-masked layers \citep{yang2024countingliketransformerscompiling}, while postfix notation ensures the operands of an operator always belong in its strict left context.   
\end{proof}

\ucfltoahat*
\begin{proof}\label{app:proofucfl2ahat}
    \textbf{Roadmap.} The proof implements the recognition algorithm of \cref{subsec:unambiguousalgo} on an $\mahat$. Each \emph{outer} iteration $t = 1, \ldots, \bigo(\log \idxn)$ updates the marked set $\items_{t-1} \to \items_t$ by realizing \cref{eq:depgraph,eq:marked-set} as transformer layers. Concretely, one outer iteration runs four blocks:
    \begin{enumerate}[label=\textit{(\arabic*)},topsep=0pt,noitemsep]
        \item Build the dependency-graph edges $\edges_t$ from the current marked set $\items_{t-1}$, per \cref{eq:depgraph}; $\bigo(1)$ layers.
        \item Binarize the resulting graph via the graph transform $\gtransform$; $\bigo(1)$ layers.
        \item Evaluate, for every item $\itm$ in parallel, the Boolean disjunction $\bigvee_{\itm' \in \reachable_t(\itm)} \ind{\itm' \in \items_{t-1}}$ of \cref{eq:reach-unrolled} by running the pebble game of \cref{lem:pebble} on the binarized graph; $\bigo(\log \idxn)$ layers.
        \item Write the resulting bit into each item's realizability slot, implementing \cref{eq:marked-set}; $\bigo(1)$ layers.
    \end{enumerate}
    By \citet{chytil-unambiguous}, $\bigo(\log \idxn)$ outer iterations suffice for $\items_t$ to reach the fixpoint $\items_*$ of \cref{subsec:unambiguousalgo}, so, together with \cref{lem:pebble}, the total depth is $\bigo(\log \idxn) \cdot \bigo(\log \idxn) = \bigo(\log^2(\idxn))$. The padding cost is $\bigo(\idxn^3)$, dominated by the per-edge and per-binarization-intermediary symbols described below. After the loop a final \textsc{accept} block reads the realizability bit of $(0, \start, \idxn]$ which equals $\ind{\str \in \lang(\grammar)}$.
    
    Each item $(\idxi, \NT{A}, \idxj]$ is associated with a padding symbol; there are $\bigo(|\nonterm|\,\idxn^2) = \bigo(\idxn^2)$ such item-paddings. \cref{eq:depgraph} adds two symmetric families of edges. For the \emph{right-witness} disjunct $\sR(\itm_1, \itm_2, \itm_3)$, the still-unknown sibling shares the parent's left endpoint, so we allocate one padding symbol per pair $(\idxi, \NT{A}, \idxj], (\idxi, \NT{B}, \idxk]$ with $\idxk \in \set{\idxi+1, \ldots, \idxj-1}$ and $\NT{B} \in \nonterm$; for the \emph{left-witness} disjunct $\sR(\itm_2, \itm_1, \itm_3)$ the still-unknown sibling shares the parent's right endpoint, so we additionally allocate one padding symbol per pair $(\idxi, \NT{A}, \idxj], (\idxk, \NT{C}, \idxj]$ with $\idxk \in \set{\idxi+1, \ldots, \idxj-1}$ and $\NT{C} \in \nonterm$. Each family contributes $\bigo(|\nonterm|\,\idxn) = \bigo(\idxn)$ edge-paddings per item, hence $\bigo(\idxn^3)$ edge-paddings in total. Combining the two, the construction uses $\bigo(\idxn^3)$ padding symbols.
    We leverage \cref{lem:pos2enc} to enable padding symbols to add to their residual stream the encodings of their associated items from $\phi(\idxi)$, the layer-norm hash of their position $\idxi$.
    
    Each item-padding allocates space to store an element in $\set{\true,\false,\bot}$ to denote that the associated item is either realizable ($\true$), non-realizable ($\false$) or not known yet to be realizable ($\bot$). 
    This slot is what tracks membership in $\items_t$ across the outer iterations of \cref{subsec:unambiguousalgo}: after block (4) of outer iteration $t$ it holds $\ind{\itm \in \items_t}$ for the associated item $\itm$. Initially (before block (1) of $t=1$) every padding symbol stores $\bot$.
    
    \textbf{Initial items.}
    This block establishes $\items_0$ from \cref{subsec:unambiguousalgo} (the length-$1$ items witnessed by unit productions). It runs once, before the outer loop starts.
    An item-padding can check whether its associated item is of the form $(\idxi-1,\NT{A},\idxi]$ (a length-$1$ item, cf.\ \cref{eq:depgraph}'s initial set $\items_0$) via a feedforward network that checks that the right index is one greater than the left.
    For all such padding symbols, another feedforward network flags whether $\production{\NT{A}}{\sym_\idxi} \in \rules$ to signal the realizability of that item.
    We can perform this procedure exactly as in the base case of \cref{app:proofcfl2ahat}.
    At the end of this block, each padding symbol whose associated item is in $\items_0$ stores $\true$.
    
    \textbf{Creating the dependency graph.}
    This block implements step (1) of the roadmap, i.e. \cref{eq:depgraph}.
    At the start of the outer iteration $t$, the block constructs the edge set $\edges_t$ from the realizability bits $\set{\ind{\itm \in \items_{t-1}}}_\itm$ left in the item-paddings by the previous iteration (or by the initial-items block when $t=1$).
    We instantiate \cref{eq:depgraph}'s right-witness disjunct $\sR(\itm_1, \itm_2, \itm_3)$ by identifying $\itm_3 \defeq (\idxi, \NT{A}, \idxj]$ with the parent we are trying to certify, $\itm_1 \defeq (\idxi, \NT{B}, \idxk]$ with the still-unknown left sibling (which will be the edge target), and $\itm_2 \defeq (\idxk, \NT{C}, \idxj]$ with the right witness (which must already lie in $\items_{t-1}$); the combination $\sR(\itm_1, \itm_2, \itm_3)$ then corresponds to a grammar rule $\production{\NT{A}}{\NT{B} \NT{C}} \in \rules$.
    Accordingly, we allocate one padding symbol per edge $\itm_3 \to \itm_1$ with $\itm_3 = (\idxi, \NT{A}, \idxj]$ and $\itm_1 = (\idxi, \NT{B}, \idxk]$; its job is to install the directed edge $\itm_3 \to \itm_1$ in $\edges_t$ exactly when some witness $\itm_2 = (\idxk, \NT{C}, \idxj]$ certifies the combination.
    Since the endpoints $\idxi, \idxj$ and the split point $\idxk$ are fixed by $\itm_3 \to \itm_1$, the set of candidate witnesses $\set{(\idxk, \NT{C}, \idxj] \mid \production{\NT{A}}{\NT{B} \NT{C}} \in \rules}$ has size $\bigo(|\nonterm|)$ (one per nonterminal $\NT{C}$ for which the rule exists) and can be enumerated by a feedforward network from $\itm_3$ and $\itm_1$ alone.
    The padding symbol then performs two checks in parallel: 
    \begin{enumerate*}[label=\textit{(\roman*)}]
        \item an attention layer that retrieves the realizability bit $\ind{(\idxk, \NT{C}, \idxj] \in \items_{t-1}}$ from each candidate witness's item-padding, and
        \item a feedforward network that gates each candidate by $\ind{\production{\NT{A}}{\NT{B} \NT{C}} \in \rules}$.
    \end{enumerate*}
    If some candidate satisfies both, the padding symbol signals the edge $\itm_3 \to \itm_1$ in the dependency graph.
    Under this convention, the out-neighbors of $\itm_3$ in $(\items, \edges_t)$ are exactly the items $\itm_1$ whose realizability would license $\itm_3$, so reachability from $\itm_3$ in this graph reproduces $\reachable_t(\itm_3)$ of \cref{eq:marked-set}.
    The left-witness disjunct $\sR(\itm_2, \itm_1, \itm_3)$ of \cref{eq:depgraph} is implemented symmetrically.
    Both edge families are constructed in parallel in the same constant-depth block.
    
    \textbf{Binarization.}
    This block implements step (2) of the roadmap: given the edge set $\edges_t$ from the previous block, it produces a binary graph $\gtransform(\items, \edges_t)$ whose reachability structure agrees with $(\items, \edges_t)$, so that the unrolled formula $\bigvee_{v' \in \reachable_t(v)} \ind{\itm' \in \items_{t-1}}$ of \cref{eq:reach-unrolled} can be evaluated by the pebble game of \cref{lem:pebble}.
    To efficiently perform reachability queries on a dependency graph, we require it to be \emph{binary} \citep{rytter-2pda}.
    To this extent, we define the \emph{graph transform} $\gtransform \colon \graph \to \graph'$ which binarizes a given directed graph $\graph$ by adding more edges and nodes.
    Denoting $\graph = (\vertices, \edges)$ and $\graph' = (\vertices', \edges') \defeq \gtransform(\graph)$, our graph transform satisfies $\vertices \subset \vertices'$ as it simply adds more nodes to the original graph. 
    We now describe $\gtransform$. 
    
    Fix a node $\itm \in \vertices$ with $M$ out-neighbors $\itm_1, \itm_2, \ldots, \itm_M \in \vertices$ in $\graph$. If $M \le 2$, then $\itm$ already has fan-out at most $2$ and $\gtransform$ leaves it untouched. Otherwise, $\gtransform$ introduces $M - 2$ fresh \emph{intermediary} nodes $h_1, h_2, \ldots, h_{M - 2}$ and replaces the $M$-fold fan-out from $\itm$ by a right-deep chain of binary fan-outs: $\itm$ gets out-edges to $\itm_1$ and $h_1$; for $1 \le i \le M - 3$, each $h_i$ gets out-edges to $\itm_{i+1}$ and $h_{i+1}$; and the last intermediary $h_{M - 2}$ gets out-edges to $\itm_{M - 1}$ and $\itm_M$. Every node in this gadget has out-degree at most $2$, so $\graph'$ is binary. Reachability is preserved: the original edge $\itm \to \itm_m$ in $\graph$ corresponds in $\graph'$ to the directed path $\itm \to h_1 \to h_2 \to \cdots \to h_{m-1} \to \itm_m$ (for $m \ge 2$), while the edge $\itm \to \itm_1$ is preserved verbatim; so any path $\itm \to \itm_m \to \cdots$ in $\graph$ lifts to a path $\itm \to \cdots \to \itm_m \to \cdots$ in $\graph'$. \cref{fig:binarization} illustrates the gadget for $M = 5$.
    \begin{figure}[H]
        \centering
        \begin{tikzpicture}[
            scale=0.9,
            every node/.style={transform shape},
            orignode/.style={draw=ETHGreen!70, fill=ETHGreen!15, circle, inner sep=1.5pt, minimum size=7mm, font=\footnotesize},
            rootnode/.style={draw=ETHBronze!80, fill=ETHBronze!20, circle, inner sep=1.5pt, minimum size=7mm, font=\footnotesize, thick},
            hnode/.style={draw=ETHPurple!70, fill=ETHPurple!15, circle, inner sep=1.5pt, minimum size=7mm, font=\footnotesize},
            origedge/.style={->, >=stealth, semithick, ETHGreen!70!black},
            newedge/.style={->, >=stealth, semithick, ETHPurple!70!black},
            panellabel/.style={font=\footnotesize\itshape, text=ETHGray}
            ]
            \begin{scope}[shift={(0,0)}]
                \node[panellabel, anchor=south] at (1.6, 2.2) {$\graph$: $\itm$ has fan-out $M = 5$};
                \node[rootnode] (v) at (1.6, 1.4) {$\itm$};
                \foreach \i/\x in {1/-0.4, 2/0.6, 3/1.6, 4/2.6, 5/3.6}{
                \node[orignode] (v\i) at (\x, -0.2) {$\itm_\i$};
                \draw[origedge] (v) -- (v\i);
                }
            \end{scope}
            \draw[->, >=stealth, very thick, ETHGray] (4.4, 0.6) -- node[above, font=\scriptsize\itshape, text=ETHGray] {$\gtransform$} (5.4, 0.6);
            \begin{scope}[shift={(7.0, 0)}]
                \node[panellabel, anchor=south] at (2.0, 2.2) {$\graph'$: right-deep chain of $M - 2 = 3$ intermediaries};
                \node[rootnode] (vp)  at (0, 1.8) {$\itm$};
                \node[orignode] (v1p) at (-0.7, 1.0) {$\itm_1$};
                \node[hnode]    (h1)  at (0.9, 1.0) {$h_1$};
                \node[orignode] (v2p) at (0.2, 0.2) {$\itm_2$};
                \node[hnode]    (h2)  at (1.8, 0.2) {$h_2$};
                \node[orignode] (v3p) at (1.1, -0.6) {$\itm_3$};
                \node[hnode]    (h3)  at (2.7, -0.6) {$h_3$};
                \node[orignode] (v4p) at (2.0, -1.4) {$\itm_4$};
                \node[orignode] (v5p) at (3.4, -1.4) {$\itm_5$};
                \draw[origedge] (vp)  -- (v1p);
                \draw[newedge]  (vp)  -- (h1);
                \draw[origedge] (h1)  -- (v2p);
                \draw[newedge]  (h1)  -- (h2);
                \draw[origedge] (h2)  -- (v3p);
                \draw[newedge]  (h2)  -- (h3);
                \draw[origedge] (h3)  -- (v4p);
                \draw[origedge] (h3)  -- (v5p);
            \end{scope}
        \end{tikzpicture}
        \caption{The binarization gadget $\gtransform$ at a node $\itm$ with $M = 5$ out-neighbors $\itm_1, \ldots, \itm_5$. Left: the original star fan-out in $\graph$. Right: $\gtransform$ inserts $M - 2 = 3$ \textcolor{ETHPurple!80!black}{intermediary nodes} $h_1, h_2, h_3$ to form a right-deep chain of binary fan-outs. Every node in the gadget has out-degree at most $2$; the original edge $\itm \to \itm_m$ becomes the path $\itm \to h_1 \to \cdots \to h_{m-1} \to \itm_m$, preserving reachability.}
        \label{fig:binarization}
    \end{figure}
    The transformer construction relies on two ingredients already in place: (i) the per-edge padding symbols allocated in block (1), one per candidate directed edge $\itm_3 \to \itm_1$, each carrying the bit $\ind{\itm_3 \to \itm_1 \in \edges_t}$ in its residual stream; and (ii) \cref{lem:pos2enc}, which lets a padding symbol compute and look up layer-norm hashes of structured keys at constant depth.
    
    We identify each intermediary $h_i(\itm)$ by the pair $(\itm, i)$ consisting of its parent item and its index in the chain, with $1 \le i \le N_\text{max}$ for $N_\text{max} \defeq \bigo(\idxn)$ the maximum possible fan-out (the number of edge-paddings allocated per item in block (1)).
    By \cref{lem:pos2enc} applied to the schema $\aset = \items \times \set{1, \ldots, N_\text{max}}$, a constant-depth $\ahat$ block writes, at the $(\itm, i)$-th intermediary padding symbol, the layer-norm hashes $\phi(\itm), \phi(i)$ into its residual stream.
    There are $|\items| \cdot N_\text{max} = \bigo(\idxn^2) \cdot \bigo(\idxn) = \bigo(\idxn^3)$ such padding symbols.
    Positions $(\itm, i)$ with $i > \mathrm{out\text{-}deg}_{\edges_t}(\itm) - 2$ (i.e., corresponding to possible neighbors that $\itm$ does \emph{not} have) are ignored at iteration $t$: their residual stream carries no edges and they play no role in the pebble game.
    
    To populate $\itm.\fargL, \itm.\fargR$ at every internal node of the binarized graph $\gtransform(\items, \edges_t)$, we first need to enumerate the active out-neighbors of each item $\itm$ in $\edges_t$.
    Recall from block (1) that each candidate out-edge $\itm \to \itm'$ corresponds to a unique edge-padding symbol that, after block (1), stores a bit indicating whether the edge is currently active.
    A constant-depth attention block, using the $(\itm, i)$-indexing of \cref{lem:pos2enc}, computes prefix sums of these active-edge bits along the out-edges of each $\itm$ and assigns each active out-edge a dense rank $r \in \set{1, \ldots, \mathrm{out\text{-}deg}_{\edges_t}(\itm)}$; we write $\itm_r$ for the rank-$r$ active out-neighbor of $\itm$.
    
    The pointers $\fargL, \fargR$ at each node of the binarized graph are then defined by the following table, populated in $\bigo(1)$ further layers by one attention head per pointer with an equality check on a $(\itm, \cdot)$ key:
    \begin{equation*}
        \begin{array}{l@{\;}c@{\;}l@{\;\;}l}
            \itm \text{ with out-degree } M \ge 3 \text{:}    & \fargL & = \itm_1, & \fargR = h_1(\itm), \\
            h_i(\itm) \text{ with } 1 \le i \le M - 3 \text{:} & \fargL & = \itm_{i+1}, & \fargR = h_{i+1}(\itm), \\
            h_{M-2}(\itm) \text{:}                            & \fargL & = \itm_{M-1}, & \fargR = \itm_M, \\
            \itm \text{ with out-degree } M = 2 \text{:}      & \fargL & = \itm_1, & \fargR = \itm_2, \\
            \itm \text{ with out-degree } M = 1 \text{:}      & \fargL & = \itm_1, & \fargR = \itm_1, \\
            \itm \text{ with out-degree } M = 0 \text{:}      & \multicolumn{3}{l}{\itm \text{ is a leaf, no pointers needed.}}
        \end{array}
    \end{equation*}
    Concretely, each row is realized by one attention head whose query is the residual stream of the node on the left and whose key matches the $(\itm, \cdot)$-encoding (via \cref{lem:pos2enc}) of the target on the right.
    The duplicate $\fargR = \itm_1$ in the $M = 1$ row evaluates $\lor$ with one operand twice, which is idempotent.
    
    Each row of the table is filled in $\bigo(1)$ transformer layers, so the entire binarization block runs in $\bigo(1)$ depth, as required.
    
    \textbf{Solving reachability queries.}
    This block implements steps (3)--(4) of the roadmap. The goal is to evaluate the unrolled marking formula \cref{eq:reach-unrolled} at every item $\itm \in \items$ in parallel and write the result $\ind{\itm \in \items_t}$ into $\itm$'s item-padding slot, thereby implementing \cref{eq:marked-set}.
    Because the given grammar is unambiguous, for each item $\itm$, the reachable subgraph $\reachable_t(\itm)$ from $\itm$ in $(\items, \edges_t)$ (cf.\ \cref{eq:depgraph}--\cref{eq:marked-set}) becomes an arborescence rooted at $\itm$ (\cref{fact:upprop}).
    Reachability queries over binary trees now reduce to evaluating the Boolean formula associated with the binary tree.
    To invoke \cref{lem:pebble} we first populate the residual stream of every node of the binarized dependency graph with the slots that the lemma's hypothesis requires.
    Original items $\itm$ that are currently realizable (i.e., $\itm \in \items_{t-1}$) and that appear as \emph{leaves} of some $\reachable_t(\itm)$ receive $\fval \defeq \TRUE$; original items that appear as leaves of some $\reachable_t(\itm)$ but are not currently realizable receive $\fval \defeq \FALSE$.
    Every \emph{internal} node of the binarized graph---both original items that have at least one out-neighbor in $(\items, \edges_t)$ and the intermediary padding symbols introduced by $\gtransform$---gets $\itm.\ftype \defeq \lor$ together with pointers $\itm.\fargL, \itm.\fargR$ to its two children in the binarized tree.
    These pointers are exactly the address lookups computed in the binarization step just above (and via \cref{lem:pos2enc} for the original items at the root of each tree).
    
    \cref{lem:pebble}, as stated, evaluates one expression tree rooted at one node, but its proof (\cref{app:proofpebble}) in fact runs the parallel pebble game over every internal node of the input graph simultaneously: at each iteration, the \activateStep, \squareStep, and \pebbleStep operations are applied in parallel at all internal nodes, regardless of which root each node ultimately serves.
    We exploit this directly: all items' binarized reachable subgraphs $\set{\reachable_t(\itm)}_{\itm \in \items}$ share the same set of $\bigo(\idxn^3)$ padding symbols, and each item $\itm$ is simultaneously the root of its own tree $\reachable_t(\itm)$ and an internal node or leaf of the trees rooted at the items that can reach it.
    A single pebble-game pass on this global graph therefore pebbles up $\itm.\fval$ at every root $\itm$ in $\bigo(\log \idxn)$ steps in parallel, and we read off $\ind{\itm \in \items_t}$ from $\itm.\fval$ at $\itm$'s own item-padding slot.
    
    The above construction is consistent because the typing of each node is a property of the \emph{graph} $\gtransform(\items, \edges_t)$, not a per-root property of the trees $\set{\reachable_t(\itm)}_{\itm \in \items}$.
    The items in $\items_{t-1}$ are precisely the original items that are graph-leaves in $\gtransform(\items, \edges_t)$ and start the pass with $\fval = \true$; the items outside $\items_{t-1}$ with no outgoing edge in $\edges_t$ are graph-leaves with $\fval = \false$; and every other node---original items with at least one out-edge in $\edges_t$, and all intermediary padding symbols introduced by $\gtransform$---is graph-internal, with $\fval = \bot$ and $\ftype = \lor$ at the start of the pass.
    This typing matches the hypothesis of \cref{lem:pebble} unambiguously, so a single pebble-game pass on $\gtransform(\items, \edges_t)$ fills $\fval$ at every graph-internal node $\itm$ in parallel with the value of the formula rooted at $\itm$, namely $\bigvee_{\itm' \in \reachable_t(\itm)} \ind{\itm' \in \items_{t-1}} = \ind{\itm \in \items_t}$ by \cref{eq:reach-unrolled}.
    
    \textbf{Recognition step.}
    This block runs after the $\bigo(\log \idxn)$ outer iterations of blocks (1)--(4) have completed. By \citet{chytil-unambiguous}, the marked set has then reached its fixpoint, $\items_t = \items_*$, and $\str \in \lang(\grammar) \iff (0, \start, \idxn] \in \items_* \iff$ the item-padding for $(0, \start, \idxn]$ stores $\true$.
    The $\eos$ symbol can attend to the item-padding associated with $(0, \start, \idxn]$ and check whether it is realizable, i.e., stores $\true$ in its residual stream.
    
    \textbf{Depth and padding accounting.} Summing the costs of the four per-iteration blocks and the surrounding initial-items and recognition blocks gives the class $\mahat^2_3$ claimed by the theorem:
    \begin{itemize}[noitemsep,topsep=0pt,leftmargin=15pt]
        \item \emph{Padding:} $\bigo(\idxn^2)$ for item symbols, $\bigo(\idxn^3)$ for right-witness edge symbols, $\bigo(\idxn^3)$ for left-witness edge symbols, and $\bigo(\idxn^3)$ for binarization intermediaries, summing to $\bigo(\idxn^3)$.
        \item \emph{Depth per outer iteration} (one pass through blocks (1)--(4)): $\bigo(1)$ for edge construction (\cref{eq:depgraph}) + $\bigo(1)$ for binarization ($\gtransform$) + $\bigo(\log \idxn)$ for the pebble game of \cref{lem:pebble} + $\bigo(1)$ for writing the bits of \cref{eq:marked-set} = $\bigo(\log \idxn)$.
        \item \emph{Outer iterations:} $\bigo(\log \idxn)$ by \citet{chytil-unambiguous}, after which $\items_t = \items_*$.
        \item \emph{Total depth:} $\bigo(\log \idxn) \cdot \bigo(\log \idxn) = \bigo(\log^2(\idxn))$, with constant-depth initial-items and recognition blocks contributing only an additive $\bigo(1)$.
        \item \emph{Heads:} causally-masked for padding-to-string attention (positional encodings, layer-norm hash of \cref{lem:pos2enc}), unmasked for inter-padding-symbol attention (equality checks during edge construction, $\fargL,\fargR$ lookups in the pebble game, accept head). This places the construction in $\mahat^2_3$. Applying \cref{unmask2mask} yields $\mahat^2_3 \subseteq \ahat^2_4$, matching the theorem statement.
    \end{itemize}
\end{proof}

\ulcfltoahat*
\begin{proof}
    Fix the input $\str$ and an iteration index $t$.
    By linearity, the only rules that combine a non-terminal with the marked set $\items_{t-1}$ via $\sR$ are of the form $\production{\NT{A}}{\syma \NT{B}}$ or $\production{\NT{A}}{\NT{B} \syma}$, and the terminal sibling is necessarily of length $1$.
    Hence the edges of $(\items, \edges_t)$ take one of two shapes: an edge $(\idxi, \NT{A}, \idxj] \to (\idxi+1, \NT{B}, \idxj]$ whenever $\production{\NT{A}}{\sym_{\idxi+1} \NT{B}} \in \rules$ (using the witness $(\idxi, \NT{A}', \idxi+1] \in \items_0$ with $\production{\NT{A}'}{\sym_{\idxi+1}} \in \rules$), or symmetrically an edge $(\idxi, \NT{A}, \idxj] \to (\idxi, \NT{B}, \idxj-1]$ whenever $\production{\NT{A}}{\NT{B} \sym_{\idxj}} \in \rules$.
    Each item therefore has at most $2 |\nonterm| = \bigo(1)$ outgoing edges in $\edges_t$, one per applicable production.
    Since $|\items| = \bigo(\idxn^2)$, and the $\bigo(1)$ out-degree gives $|\edges_t| = \bigo(\idxn^2)$, allocating one padding symbol per item and one per edge results in $\bigo(\idxn^2)$ padding symbols.
    
    \textbf{One reachability pass suffices.}
    We claim that $\items_1 = \items_*$, so that the outer loop of \cref{subsec:unambiguousalgo} collapses to a single iteration.
    $\items_1 \subseteq \items_*$ holds by definition.
    For $\items_* \subseteq \items_1$, suppose some item $\itm_3 = (\idxi, \NT{A}, \idxj] \in \items_*$ is added at outer iteration $t \ge 1$ by an application of $\sR$.
    By linearity, the rule responsible is either $\production{\NT{A}}{\syma \NT{B}}$ or $\production{\NT{A}}{\NT{B} \syma}$.
    In the first case, $\syma = \sym_{\idxi+1}$ and the witness is $(\idxi, \NT{A}', \idxi+1] \in \items_0 \subseteq \items_{t-1}$ for some $\NT{A}'$ with $\production{\NT{A}'}{\sym_{\idxi+1}} \in \rules$, while the non-terminal child is $(\idxi+1, \NT{B}, \idxj]$; the symmetric case is analogous with $\syma = \sym_{\idxj}$ and witness $(\idxj-1, \NT{A}', \idxj] \in \items_0$.
    Either way, the witness lies in $\items_0$, so the same combination triggers an edge from $\itm_3$ to $(\idxi+1, \NT{B}, \idxj]$ (respectively $(\idxi, \NT{B}, \idxj-1]$) \emph{already in $\edges_1$} by \cref{eq:depgraph}.
    Conversely, every edge in $\edges_1$ corresponds to a CKY combination by construction of $\sR$, so reachability in $(\items, \edges_1)$ is sound.
    Iterating this along any derivation of $\itm_3$ shows that $\itm_3$ reaches $\items_0$ in $(\items, \edges_1)$, hence $\itm_3 \in \items_1$ by \cref{eq:marked-set}.
    
    \textbf{Transformer construction.}
    The $\mahat$ construction of \cref{thm:ucfl2ahat} applies directly; we can reuse its initial items, edge-construction, binarization, and pebble-game blocks.
    However, the outer loop collapses from $\log\idxn$ iterations to a single one.
    This results in  $\bigo(\idxn^2)$ padding and $\bigo(\log \idxn)$ looping, and applying \cref{unmask2mask} yields $\mahat^1_2 \subseteq \ahat^1_{1 + \max(2,1)} = \ahat^1_3$.
\end{proof}

\section{Experimental Setup}\label{app:expsetup}
\paragraph{Data.} We used \citeposs{butoi-languageclasses} \emph{length-constrained} sampling algorithm for $\cfl$s to generate datasets.
To sample strings from a given grammar $\grammar$, we consider the \emph{probabilistic} version of $\grammar$ which induces a probability distribution over $\lang(\grammar)$, which enables length-constrained sampling.
Importantly, their procedure first samples a desired string length $\idxn$, and then performs sampling over the distribution of all strings of length $\idxn$. 
Negative strings are either sampled at random from $\kleene{\alphabet}$ or are perturbations from positive strings by applying random edits to them, the number of which is randomly sampled from a geometric distribution that favors small values \citep{butoi2025trainingneuralnetworksrecognizers}.

We therefore follow this procedure to sample positive and negative strings from handpicked context-free grammars except for $\bfvp$.
For $\bfvp$, the negative strings are Boolean formulas that evaluate to $\FALSE$, enabling us to examine whether a transformer can correctly evaluate formulas rather than checking if an input is well-formed. 
We argue that the ability to process hierarchically nested structures such as nested subformulas in $\bfvp$ is already captured by the grammar $\dyck{\idxk}$.

The training set consists of 1 million samples with string length at most 50. 
The test set has 2000 samples with string lengths at most 500. 
Testing the model on strings longer than those seen in training enabled the evaluation of its ability to \emph{generalize} out-of-distribution.

\paragraph{Models and Training Procedure.}
We trained causally masked looped transformers with no positional embeddings. 
We used the \textsc{PyTorch} implementation of a transformer encoder layer with pre-norm.
Following our definition of the transformer in \cref{subsec:transformersprelim}, we instantiated our models with an initial block of 2 transformer layers, a looping block (which is repeated $\log(\idxn)$ times or once at inference) of 2 transformer layers and a final block of 2 transformer layers. 
A binary classifier (2 layer feedforward network) was then applied to the final contextual representation of $\eos$.
Our transformers have 1.2 million parameter budget. 
We used the \textsc{AdamW} optimizer \citep{loshchilov2019decoupledweightdecayregularization} and binary cross-entropy loss, considering runs across 5 different seeds.
The batch size was set to 64 and the learning rate to 0.0001.


\end{document}
